\newtheorem{theorem}{Theorem}
\newtheorem{lemma}{Lemma}
\newtheorem{corollary}{Corollary}
\newtheorem{definition}{Definition}
\theoremstyle{remark}
\newtheorem*{remark}{Remark}
\newtheorem{claim}{Claim}[theorem]
\theoremstyle{definition}
\newtheorem{assumption}{Assumption}
\newtheorem{conjecture}{Conjecture}
\def\eqref#1{equation~\ref{#1}}
\def\1{\bm{1}}
\DeclareMathAlphabet{\mathsfit}{\encodingdefault}{\sfdefault}{m}{sl}
\SetMathAlphabet{\mathsfit}{bold}{\encodingdefault}{\sfdefault}{bx}{n}
\newcommand{\R}{\mathbb{R}}
\title{The Effect of Attention Head Count on Transformer Approximation}
\author{Penghao Yu\\
Department of Mathematics\\
National University of Singapore\\
\texttt{penghaoyu@u.nus.edu} \\
\And
Haotian Jiang \\
  Institute for Functional Intelligent Materials \\
  National University of Singapore \\
  \texttt{haotian@nus.edu.sg} \\
\And
Zeyu Bao \\
Department of Mathematics\\
National University of Singapore\\
\texttt{zeyu@u.nus.edu} \\
\And 
Ruoxi Yu\\ 
Center for Data Science\\ 
Peking University\\ 
\texttt{yuruoxi@stu.pku.edu.cn}\\
\And
Qianxiao Li \\
  Department of Mathematics \\
  Institute for Functional Intelligent Materials \\
  National University of Singapore \\
  \texttt{qianxiao@nus.edu.sg} 
}
\begin{document}

\maketitle

\begin{abstract}
Transformer has become the dominant architecture for sequence modeling, 
yet a detailed understanding of how its structural parameters influence expressive power remains limited. 
In this work, we study the approximation properties of transformers, 
with particular emphasis on the role of the number of attention heads. 
Our analysis begins with the introduction of a generalized $D$-retrieval task, 
which we prove to be dense in the space of continuous functions, 
thereby providing the basis for our theoretical framework. 
We then establish both upper and lower bounds on the parameter complexity required for $\epsilon$-approximation. 
Specifically, we show that transformers with sufficiently many heads admit efficient approximation, 
whereas with too few heads, the number of parameters must scale at least as $O(1/\epsilon^{cT})$, 
for some constant $c$ and sequence length $T$. 
To the best of our knowledge, this constitutes the first rigorous lower bound of this type 
in a nonlinear and practically relevant setting. 
We further examine the single-head case and demonstrate that 
an embedding dimension of order $O(T)$ allows complete memorization of the input, 
where approximation is entirely achieved by the feed-forward block. Finally, we validate our theoretical findings with experiments on both synthetic data 
and real-world tasks, illustrating the practical relevance of our results.

\end{abstract}

\section{Introduction}

The transformer architecture~\citep{vaswani2017.AttentionAllYou} has become the foundation of modern sequence modeling, driving progress in natural language processing~\citep{devlin2019.BERTPretrainingDeep,brown2020.LanguageModelsAre}, computer vision~\citep{dosovitskiy2020.ImageWorth16x16}, and multi-modal learning~\citep{radford2021.LearningTransferableVisual}. 
Its ability to scale has enabled breakthroughs such as BERT, GPT, and ViT, making it the dominant paradigm across domains. 
Despite this remarkable empirical success, the theoretical principles underlying transformer expressivity remain incomplete. 
In particular, while universal approximation results establish that transformers can approximate arbitrary sequence-to-sequence mappings~\citep{yun2020.AreTransformersUniversal,perez2021.AttentionTuringComplete}, much less is known about how their structural hyperparameters influence approximation efficiency.  

Among transformer hyperparameters, the number of attention heads plays a central role. 
In practice, large models often adopt head counts such as 32, 64, or 128 (e.g., \cite{devlin2019.BERTPretrainingDeep, dosovitskiy2020.ImageWorth16x16, touvron2023.LLaMAOpenEfficient,jiang2023.Mistral7B,grattafiori2024.Llama3Herd} see Table~\ref{table: head count} for more ), 
yet this choice is largely heuristic: there is no principled understanding of how many heads are needed for a given task, 
nor of the costs incurred when the head count is insufficient. 
Theoretical progress on this question has so far been limited. 
Most existing results focus on upper bounds, showing that transformers with sufficiently many heads or with extremely large embedding dimension in the 
single-head case can achieve universal approximation or good approximation rate, but offering little insight into the limitations 
that arise when the head count is insufficient.
Moreover, many analyses rely on strong simplifications—such as restricting to linear embeddings, isolating the attention block, or linearizing the architecture. 
While these assumptions make the problem more tractable, they severely restrict the model’s expressive power 
and prevent the derivation of rigorous lower bounds in realistic nonlinear settings.

In this work, we address this gap by analyzing single-layer transformers on sequence-to-vector tasks. 
To this end, we introduce a new function class, the \emph{generalized $D$-retrieval tasks}, which we design as a structured but expressive family motivated by retrieval problems. 
Each coordinate is defined by
$\bar{z}_i(X_T) = \min_{t \in S_i} f_i(x(t)), i=1, \dots, D$ for subsets $S_i \subseteq [T]$, 
and the overall target takes the form $H(X_T) = F_0(\bar{z}_1(X_T),\dots,\bar{z}_D(X_T))$. 
By construction, this class extends retrieval-style problems while being dense in the space of continuous sequence-to-vector mappings, 
ensuring that results obtained in this setting reflect general approximation behavior.  

The central challenge arises when the number of heads $h$ is smaller than the intrinsic dimension $D$ of the target. 
In this case, multiple coordinates $z_i(X_T)$ must be represented by the same head, creating an information bottleneck: 
the attention layer maps distinct sequences to nearly indistinguishable representations, 
forcing the feed-forward network to perform the separation.
We show that overcoming this bottleneck requires parameter growth exponential in the sequence length $T$, 
namely $O(1/\epsilon^{cT})$ parameters for $\epsilon$-accuracy, 
thus establishing the first rigorous lower bounds for transformers in nonlinear settings. 
In contrast, when $h \geq D$, heads can specialize to distinct coordinates $z_i$, 
eliminating the bottleneck and enabling efficient approximation.

Our results advance the theoretical understanding of attention by showing, that insufficient head count provably limits expressivity in realistic regimes. 
Experiments on both synthetic tasks and real-world retrieval data confirm that the predicted scaling laws persist in practice. 

\paragraph{Contributions.}  
Our main contributions are as follows: 

First, we establish the first rigorous lower bounds for transformers in nonlinear settings, showing that when $h<D$, parameter complexity grows exponentially with sequence length. 

Second, we provide constructive upper bounds, proving that $h \geq D$ enables efficient approximation with parameter growth independent of sequence length $T$.

Third, in the memorization regime, single-head transformers with embedding dimension $n \geq Td$ approximate by memorizing sequences, with the complexity residing in the feed-forward block.

\section{Related Work}
Several works have studied the approximation and expressivity properties of transformers. 
The universal approximation property was first established in \cite{yun2020.AreTransformersUniversal}, 
and later extended to transformers with sparse attention matrices in \cite{yun2020.ConnectionsAreExpressive}. 
The approximation rate of single-layer transformers with one head was analyzed in \cite{jiang2024.ApproximationRateTransformer}.
\cite{amsel2024.QualityQuantityAttention} investigated how the rank of the attention matrix influences expressivity for a specific nearest-neighbor target can be constructed. 
They showed that when the rank is insufficient, the number of heads required for approximation grows exponentially, independent of sequence length.
In a related direction, \cite{bhojanapalli2020.LowRankBottleneckMultihead} argued that setting the rank of the attention matrix equal to the sequence length enhances expressivity.
 Beyond finite-dimensional settings,
\cite{takakura2023.ApproximationEstimationAbility} considered sequences of infinite dimension, 
characterizing approximation rates in terms of target function smoothness.
Similarly, 
\cite{wang2024.UnderstandingExpressivePower} studied special classes of target functions and demonstrated that approximation error scales polynomially with the number of heads. 
In addition to these approximation-theoretic results,
several works have investigated broader notions of expressivity. 
\cite{dehghani2019.UniversalTransformers,perez2021.AttentionTuringComplete} established the Turing completeness of transformers, 
and \cite{giannou2023.LoopedTransformersProgrammable} showed that transformers can represent arbitrary computer programs in a looped setting. Finally, 
\cite{mahdavi2024.MemorizationCapacityMultiHead} examined memorization capacity, proving that the number of samples that can be stored scales linearly with the number of heads.

\section{Preliminaries}

\paragraph{Input and Output}
We consider the input space
\begin{equation}
    \mathcal{X}_T
    = \bigl\{\, x(s) \in [0,1]^d \;:\; s \in [T] \,\bigr\},
    \label{eq:input-space}
\end{equation}
where $[T] = \{1,\dots,T\}$. We call $T$ the length of the input sequence.  
The output is a single vector $y \in \mathbb{R}^l$, where $l$ is independent of $T$ and specified by the task.
\par For example, in a text retrieval task one may take $d$ to be the max number of tokens per candidate, $T$ the number of candidates, and $l$ the size of the output representation.

\paragraph{Input Representation.}
Each token is mapped to an $E$-dimensional vector by a trainable encoder
\[
    P_{\phi} : [0,1]^d \times [T] \to \mathbb{R}^E,\qquad
    (\bm{x},s) \mapsto P_{\phi}(\bm{x},s),
\]
which jointly incorporates the content $\bm{x}$ and its position $s$.
Given $X_T=\{x(s)\}_{s=1}^T \in \mathcal{X}_T$, the embedded sequence is
\begin{equation}
    \hat{X}_T
    = \{\, \hat{x}(s) = P_{\phi}(x(s),s) \in \mathbb{R}^E : s \in [T] \,\}.
    \label{eq:pos-embed}
\end{equation}
This formulation subsumes common designs where $P_{\phi}$ combines a content embedding
with either learned or deterministic positional encoding.
\par For example, if $\text{Emb}(x)$ is a content embedding map and $p(t)$ a positional code, then common schemes correspond to \[ P_\phi(x(t),t) = \text{Emb}(x(t)) + p(t) \quad\text{(additive PE)}, \] or \[ P_\phi(x(t),t) = \bigl(\text{Emb}(x(t)),\,p(t)\bigr) \quad\text{(concatenated PE)}. \]
Following common practice, we append a trainable \emph{classification token} $\hat{c}_0 \in \mathbb{R}^E$ to the sequence.  
The final input to the transformer is
\begin{equation}
    \hat{X}[T] = \{\hat{x}(1),\dots,\hat{x}(T),\hat{c}_0\} \in \mathbb{R}^{E \times (T+1)} ,
\end{equation}
and the output $\hat{y}$ is taken from the $(T\!+\!1)$-th position corresponding to $\hat{c}_0$.

\paragraph{Transformer Hypothesis Class}
\par With the input space and embedding defined, we then  formulate the transformer hypothesis space.  
\par We consider a single-layer transformer based on the standard architecture \citep{vaswani2017.AttentionAllYou}, with two modifications introduced for analytical simplicity. Firstly, we omit layer normalization to simplify the analysis, while acknowledging its practical importance, and we conjecture that our key lower bound (Theorem~\ref{thm2:p2}) still holds when layer normalization is present. Secondly, we exclude residual connections outside the feed-forward network. In the single-layer sequence-to-vector setting, where the output is read from a designated classification token, the residual branch can be merged into the feed-forward transformation by reparameterization, thus these likewise do not alter the expressive power of the architecture.

For an $h$-head, single-layer transformer, let $n$ denote the embedding
dimension \emph{per head} and $E=nh$ the total embedding dimension.  
The output is
\begin{equation}
\begin{aligned}
    \hat{y}
    &= \hat{H}(\hat{X}[T]) 
    = \hat{F}\!\Bigl(
        \hat{c}_0 +
        W_O \,\operatorname{Concat}_{i=1}^{h}
        \Bigl(
            \sum_{t=1}^{T}
            \sigma\!\left[
                (W_{Q,i}\hat{c}_0)^\top W_{K,i}\hat{x}(t)
            \right]
            W_{V,i}\hat{x}(t)
        \Bigr)
    \Bigr),
    \label{eq:transformer}
\end{aligned}
\end{equation}
where for each head $i$,
$W_{Q,i}, W_{K,i} \in \mathbb{R}^{n \times E}$ are the query/key projection matrices,  
$W_{V,i} \in \mathbb{R}^{n \times E}$ is the value projection,  
$W_O \in \mathbb{R}^{E \times E}$ is the output projection applied to the concatenated heads,  
and $\hat{F}:\mathbb{R}^{E}\to\mathbb{R}^l$ is a feed-forward network which we call it the \emph{feed-forward block}.  
The softmax with scaling factor $\beta$ is defined by
\begin{equation}
    \sigma[\rho](t)
    = \frac{\exp{\!\bigl(\beta\,\rho(t)\bigr)}}
           {\sum_{t'=1}^T \exp{\!\bigl(\beta\,\rho(t')\bigr)}},
    \qquad \beta > 0.
    \label{eq:softmax}
\end{equation}
and $\beta>0$ may be chosen arbitrarily large in order to make the softmax attention mechanism approximate a hardmax.

We denote this family by
\begin{equation}
    \mathcal{H}(h,n,d,T,M),
\end{equation}
the class of single-layer transformers with $h$ heads, per-head embedding dimension $n$, input dimension $d$, sequence length $T$, and parameter count $M$.  
Each $H \in \mathcal{H}(h,n,d,T,M)$ is a mapping
\[
    H: \mathbb{R}^{d\times T} \to \mathbb{R}^l ,
\]
implemented by the encoder $P_\phi:[0,1]^d\times[T]\to\mathbb{R}^{nh}$, concatenation of the classification token $\hat{c}_0$, a multi-head attention layer with projections $\{W_{Q,i},W_{K,i},W_{V,i}\}_{i=1}^h, W_O$, and a feed-forward network $\hat{F}:\mathbb{R}^{nh}\to\mathbb{R}^l$. Thus $H$ has the form~\eqref{eq:transformer}, 
with parameter count $k$ referring to the weights and biases in FFNs $(P_\phi,\hat{F})$.

\paragraph{Approximation Problem}
With the hypothesis class specified, we now formalize the approximation problem, which provides the framework for analyzing the expressive power of transformers.
\begin{definition}[$\epsilon$-approximation]
    Let $\mathcal{X}_T \subset \mathbb{R}^{d \times T}$ be a compact domain, and let 
    $F: \mathcal{X}_T \to \mathbb{R}^l$ be a target function. 
    We say that the hypothesis class $\mathcal{H}(h,n,d,T,M)$ $\epsilon$-approximates $F$ on $\mathcal{X}_T$ 
    if there exists $\hat{H} \in \mathcal{H}(h,n,d,T,M)$ such that
    \[
        \sup_{X_T \in \mathcal{X}_T} \;\|\hat{H}(X_T) - F(X_T)\|_\infty < \epsilon.
    \]
\end{definition}


\section{Generalized $D$-Retrieval Tasks}

\paragraph{Target functions.}\label{sec:TargetFunctions}  
To motivate our construction, consider a simple one-dimensional example:  
\[
    \mathcal{X}_T = \{\,X_T = (x(1),\dots,x(T)) : x(t)\in[0,1] \,\},
\]
with target
\begin{equation}
    H(X_T) \;=\; \max_{1 \le t \le T} x(t) \;+\; \min_{1 \le t \le T} x(t).
    \label{eq:toy-example}
\end{equation}
This task requires the model to extract two distinct features from the sequence—the maximum
and the minimum—before combining them.  
It can thus be viewed as a retrieval problem with two independent features being aggregated.  

This example illustrates the broader idea behind our target class: 
retrieval-style problems where multiple salient features must be identified and combined. 
We now formalize this intuition by defining the family of \emph{generalized $D$-retrieval tasks}.

\paragraph{Mathematical Formulation}Formally, for each $i=1,\dots,D$, let $f_i:[0,1]^d\to[0,1]$ be $C^2$ and define
\begin{equation}
    \bar{z}_i(X_T) \;=\; \min_{t \in S_i} f_i(x(t)),
    \qquad S_i \subseteq [T], \;\; |S_i|\ge \frac{1}{4} T,
    \label{eq:zi}
\end{equation}
so that $\bar{z}(X_T)=(\bar{z}_1(X_T),\dots,\bar{z}_D(X_T))\in[0,1]^D$.  
The target is then
\begin{equation}
    H(X_T) \;=\; F_0\!\bigl(\bar{z}(X_T)\bigr),
    \label{eq:target}
\end{equation}
where $F_0:[0,1]^D\to\mathbb{R}$ is $C^1$.  
For vector-valued targets $H:[0,1]^{d\times T} \to \mathbb{R}^l$ defined with the same functions $f_i$, subsets $S_i$, and an outer map $F_0 : [0,1]^D \to \mathbb{R}^l$, the extension is applied coordinate-wise, since each coordinate function of $F_0$ can be approximated independently. Therefore, it suffices to consider the scalar-valued case. We denote by $\mathcal{F}_D^{d,T}$ the class of all such functions $H$. 

Related sparse sequence-to-sequence retrieval tasks, such as the $q$-sparse token regression (qSTR) model of \citep{mousavi2025transformers}, where each output position depends on at most $q$ relevant input tokens, can be viewed as sequence-to-sequence analogues of our formulation. Their results on the sample complexity of single-layer Transformers with at least $q$ heads are complementary to our approximation-theoretic guarantees in the generalized $D$-retrieval setting.
\medskip
\paragraph{Assumptions on the target class} For the theoretical analysis to be tractable we impose the following conditions: 
\begin{assumption}[Model constraints]\label{assump:setup} The model constraints are as follows:\\
(1.1) 
The embedding $P_\phi$ satisfies  
\[
    \|P_\phi(x(s),s)\|_2 \leq 1, \qquad \forall\, s \in [T], \; X_T \in \mathcal{X}_T,
\]
ensuring embedded inputs remain uniformly bounded. \\
(1.2) 
The post-attention mapping $\hat{F}$ is a two-layer feed-forward network 
with $1$-Lipschitz activation, hence a universal approximator on compact domains.  \\
(1.3) 
All weights in $\hat{F}$ and entries of the attention matrices 
$\{W_{Q,i},W_{K,i},W_{V,i}\},W_O$ are bounded in magnitude by $1$, ensuring stability of the model.  
\end{assumption}
\begin{assumption}[Target class constraint]\label{assump:target}
The target functions defined in \eqref{eq:target} satisfy the following: \\ 
(2.1) Each $f_i:[0,1]^d \to [0,1]$ attains its unique global minimum $z_i$ at a point $x^{(i)}\in[0,1]^d$.\\ 
(2.2) The minimizers $\{x^{(i)}\}_{i=1}^D$ are pairwise distinct.  \\
(2.3) The Hessian $\nabla_x^2 f_i(x^{(i)})$ is positive definite for all $i=1,\dots,D$.  \\
(2.4) The gradient $\nabla_z F_0(z_1,\dots,z_D)$ has all coordinates strictly nonzero.  
\end{assumption}
\begin{remark}
    Assumption~\ref{assump:target} excludes only degenerate cases while preserving broad generality for both the functions $f_i$ and the outer map $F_0$. More specifically: Assumptions (2.1) and (2.4) ensure that each $f_i$ behaves regularly around its minimizer. A degenerate example excluded by these assumptions is $f_i(x) \equiv c_0$ for constant $c_0$, which is totally independent of the input; Assumption (2.2) requires distinct minimizers, which allows partitioning the space into $D$ disjoint  basins around each minimizer $x^{(i)}$. A degenerate example excluded by this assumption is $f_1 = f_2 = \dots = f_D$; Assumption (2.3) enforces sensitivity of the target to small perturbations near the minimizers, ruling out trivial flat cases (such as $F_0 \equiv c_0$ for constant $c_0$) where no meaningful separation can be made.
\end{remark}

\medskip
Having introduced the generalized $D$-retrieval tasks, it remains to ask whether this
family is sufficiently expressive. 
To address this, we now establish the
\emph{universality of the target class}: the family is dense in the space of continuous
sequence-to-vector mappings. 
\begin{theorem}[Density of the target class]\label{thm:thm1}
    For fixed $d,T$, the family $\{\mathcal{F}_D\}_{D=1}^{\infty}$ is dense in 
    $C(\mathcal{X}_T)$. That is, for every $F \in C(\mathcal{X}_T)$ and every 
    $\epsilon>0$, there exists $D$ and $f \in \mathcal{F}_D$ such that
    \[
        \max_{X \in \mathcal{X}_T} |F(X) - f(X)| \le \epsilon.
    \]
\end{theorem}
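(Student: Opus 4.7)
The plan is to deduce density from the Stone--Weierstrass theorem. The domain $\mathcal{X}_T=[0,1]^{d\times T}$ is compact Hausdorff, so it suffices to verify that $\bigcup_{D\ge 1}\mathcal{F}_D$ is a point-separating subalgebra of $C(\mathcal{X}_T;\mathbb{R})$ containing the constants. Constants are obtained by taking $F_0$ constant in $\mathcal{F}_1$. Closure under sums and products is immediate by concatenating the primitive features: for $H=F_0(\bar z_1,\dots,\bar z_D)\in\mathcal{F}_D$ and $H'=G_0(\bar z'_1,\dots,\bar z'_{D'})\in\mathcal{F}_{D'}$, both $H\pm H'$ and $H\cdot H'$ lie in $\mathcal{F}_{D+D'}$ by taking the corresponding $C^1$ function of the concatenated argument. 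So only point separation remains to be checked.

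Let $\mathcal{Z}=\{X_T\mapsto\min_{t\in S}f(x(t)):f\in C^2([0,1]^d,[0,1]),\ S\subseteq[T],\ |S|\ge T/4\}$ be the family of primitive features. Since $\mathcal{Z}\subseteq\mathcal{F}_1\subseteq\bigcup_D\mathcal{F}_D$ (take $F_0$ the identity), it is enough to show $\mathcal{Z}$ separates points. For a fixed admissible $S$, the value $\min_{t\in S}f(x(t))$ depends on $X$ only through the set $V_S(X):=\{x(t):t\in S\}\subseteq[0,1]^d$. Thus $\mathcal{Z}$ separates $X,X'$ iff $V_S(X)\neq V_S(X')$ for some admissible $S$: given such $S$ and any $y^{*}\in V_S(X)\setminus V_S(X')$, the function $f(y)=\|y-y^{*}\|_2^2/d\in C^{\infty}([0,1]^d,[0,1])$ gives $\min_{t\in S}f(x(t))=0<\min_{t\in S}f(x'(t))$, since $y^{*}$ has positive distance to every element of the finite set $V_S(X')$.

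The theorem therefore reduces to a combinatorial claim: whenever $X\neq X'$, there exists $S$ with $|S|\ge T/4$ such that $V_S(X)\neq V_S(X')$. Pick any $t_0$ with $x(t_0)\neq x'(t_0)$ and set $v:=x(t_0)$, $v':=x'(t_0)$, $P_v:=\{t:x(t)=v\}$, and $P_{v'},P'_v,P'_{v'}$ analogously. If $|P'_v|\le 3T/4$, then $[T]\setminus P'_v$ contains $t_0$ and has size at least $T/4$; any $S\subseteq[T]\setminus P'_v$ with $t_0\in S$ and $|S|=\lceil T/4\rceil$ then gives $v\in V_S(X)\setminus V_S(X')$. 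The case $|P_{v'}|\le 3T/4$ is symmetric. In the remaining subcase of double dominance $|P'_v|>3T/4$ and $|P_{v'}|>3T/4$, disjointness of $P_v$ and $P_{v'}$ in $X$ forces $|P_v|<T/4$, and inclusion--exclusion gives $|P_{v'}\cap P'_v|>T/2$; choosing $t_1\in P_{v'}\cap P'_v$ and $S\subseteq[T]\setminus P_v$ with $t_1\in S$, $|S|=\lceil T/4\rceil$ then yields $v\in V_S(X')\setminus V_S(X)$.

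The main obstacle is this final combinatorial step. The size constraint $|S|\ge T/4$ precludes isolating $t_0$ by a singleton $S=\{t_0\}$: whenever the opposite sequence has a value of high multiplicity, every large enough $S$ containing $t_0$ inevitably contains that value on both sides, and the sets $V_S$ coincide. The double-dominance subcase is the most delicate, and the key trick is that two disjoint subsets of $[T]$ cannot both exceed size $3T/4$, which pins $|P_v|<T/4$, while inclusion--exclusion produces a transposed index $t_1\in P_{v'}\cap P'_v$ that restores enough room for a separating $S$. With point separation of $\mathcal{Z}$ in hand, Stone--Weierstrass concludes that $\bigcup_D\mathcal{F}_D$ is dense in $C(\mathcal{X}_T)$.
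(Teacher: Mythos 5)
Your route is genuinely different from the paper's: you invoke the Stone--Weierstrass subalgebra criterion and prove point separation by a clean combinatorial argument about the multisets $V_S(X)$, whereas the paper proceeds constructively, building a ``smooth selector'' from order statistics to recover each $x(t)_j$ from primitive $\min$-features and then approximating $F$ by a polynomial of these recovered coordinates. Your combinatorial lemma (the three-case analysis with $P_v,P_{v'},P'_v,P'_{v'}$) is correct and, to my reading, tighter than what the paper actually uses; the double-dominance case via disjointness and inclusion--exclusion is handled properly.

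However, there is a real gap in the algebra step. The class $\mathcal{F}_D^{d,T}$ in Theorem~\ref{thm:thm1} carries the constraints of Assumption~\ref{assump:target} --- this is precisely why the paper's own proof begins with Lemma~\ref{lem:closure}, which defines a relaxed class $\widetilde{\mathcal F}_D^{d,T}$ dropping ``unique minimizer,'' ``pairwise distinct'' and ``PD Hessian'' and then proves closure equivalence. Under those constraints, $\bigcup_D \mathcal{F}_D$ is \emph{not} a subalgebra of $C(\mathcal{X}_T)$, and it does not contain the constants. Concretely: (i) a constant $F_0$ has identically zero gradient, violating (2.4); (ii) for the product $H\cdot H' = F_0(z)\,G_0(z')$, the partial $\partial_{z_i}(F_0 G_0) = (\partial_i F_0)\,G_0$ vanishes wherever $G_0=0$, again violating (2.4), so $\mathcal{F}_D$ is not closed under products; (iii) concatenating the inner features of $H$ and $H'$ can produce coinciding minimizers $x^{(i)} = x'^{(j)}$, violating (2.2). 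None of these is a cosmetic issue --- the whole force of Stone--Weierstrass here rests on $\bigcup_D\mathcal{F}_D$ actually being an algebra with constants. The fix is to run your Stone--Weierstrass argument on the relaxed class $\widetilde{\mathcal F}^{d,T}$ (where sums, products and constants do live by concatenation, since only smoothness of $f_i$ and $F_0$ is required) and then invoke a closure-equivalence statement like Lemma~\ref{lem:closure} to transfer density back to $\mathcal{F}^{d,T}$. Without that bridge, the proposal as written does not establish the theorem.
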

\noindent
The proof is deferred to Appendix~\ref{app:proof-thm1}. 
This density property highlights that our specially designed
target family is not overly restrictive; rather, it forms a sufficiently general
class to capture arbitrary continuous sequence-to-vector mappings.
Beyond density, we show that $D$ is indeed the \emph{intrinsic dimension} of this target, which means
that it is the unique $D \ll T$ for which the target $H$ can be represented in the generalized $D$-retrieval task form.

\begin{corollary}[Uniqueness of intrinsic dimension]\label{cor:uniqueness}
    If task $H$ can be represented by $(\{f_i, S_i\}_{i=1}^{D_1}, F_0)$ and $(\{\tilde{f}_i, \tilde{S}_i\}_{i=1}^{D_2}, \tilde{F}_0)$, satisfying Assumption~\ref{assump:setup} and \ref{assump:target} with $D_1^2+D_2^2 \le \frac{1}{50} T$, then $D_1 = D_2$.  
\end{corollary}
This corollary justifies that $D$ comes from the intrinsic property of the target and is invariant across its form of representation. The proof is deferred to Appendix~\ref{app:proof-thm2}


\section{Approximation rate of Generalized $D$-Retrieval Tasks}
Theorem~\ref{thm:thm1} establishes that the generalized $D$-retrieval tasks form a dense family in the space of continuous sequence-to-vector functions. The next step is to analyze the efficiency with which transformers approximate these functions.  
To this end, we begin by stating two standard approximation assumptions regarding how well the
fundamental building blocks of the target can be approximated.

\begin{assumption}[Approximation of components]\label{assump:approximation}
We assume the following approximation properties hold.\\  
(A1) There exist constants $C_1>0$ and $\gamma>0$ such that for every $\delta>0$, the function 
$F_0:[0,1]^D\to\mathbb{R}$ can be $\delta$-approximated by a two-layer feed-forward network 
$\Phi_\delta$ of width at most $C_1/\delta^\gamma$, i.e.,
\[
    \sup_{x\in[0,1]^D} |F_0(x)-\Phi_\delta(x)| \le \delta.
\]  
(A2) There exist constants $C_2>0$ and $\gamma>0$ (possibly different from (A1)) such that for each 
$i=1,\dots,D$ and every $\delta>0$, the function $f_i:[0,1]^d\to[0,1]$ can be $\delta$-approximated 
by a two-layer feed-forward network $\Psi_{i,\delta}$ of width at most $C_2/\delta^\gamma$, i.e.,
\[
    \sup_{x\in[0,1]^d} |f_i(x)-\Psi_{i,\delta}(x)| \le \delta.
\]  
\end{assumption}
These assumptions are reasonable: by the classical result of \citep{cybenko1989.ApproximationSuperpositionsSigmoidal}, 
two-layer networks can approximate continuous functions on compact domains. 
In particular, if the Barron norm is finite, one may take $\gamma=2$ \citep{barron1993.UniversalApproximationBounds}; 
even in the worst case, setting $\gamma=\max(d,D)$ yields approximation rates comparable 
to uniform grid discretizations, which still suffices for our analysis.

We now present our main theoretical result. 
It establishes upper and lower bounds on the approximation rates of transformers 
within the generalized $D$-retrieval framework. 
In particular, the lower bound in part (2) provides the first rigorous evidence that insufficient head count $h < D$ leads to exponential parameter complexity, 
revealing a fundamental expressivity bottleneck.
\begin{theorem}[Approximation rates of transformers] \label{thm:thm2}
Fix $d,T$. Under Assumption~\ref{assump:approximation}, the following hold 
for any target $H \in \mathcal{F}_D^{d,T}$:

\begin{enumerate}[label=(\arabic*),ref=\thetheorem~(\arabic*)]
    \item \textbf{Sufficient expressivity with $D$ heads.} \label{thm2:p1} 
    For $h=D$ and embedding dimension $n=2$ per head, there exists a constant $C_{d, D, T}>0$ such that $\forall M > \frac{C_{d, D, T}}{\epsilon^\gamma}$. the hypothesis class 
    $\mathcal{H}(h,n,d,T,M)$ $\epsilon$-approximates $H$.

    \item \textbf{Lower bound with $s<D$ heads.}  \label{thm2:p2}
    For $h=s<D$, define
    \[
        k=\frac{(\frac{1}{4} T-s -D +1)}{(n+1)s+1}-1 ,
    \]
    then 
    \[
    \min \bigl\{\, M \;:\; \mathcal{H}(h,n,d,T,M)\ \epsilon\text{-approximates } 
    H \,\bigr\} \;=\; \Omega\!\left(\tfrac{1}{\epsilon^k}\right).
\]

    \item \textbf{Single-head large embedding dimension.}  \label{thm2:p3}
    For $h=1$ and per-head embedding dimension $n\ge Td$, if the feed-forward block is a $5$-layer ReLU neural network, then there exists a constant $C_{d,D, T}>0$ such that 
    for all $M > \tfrac{C_{d,D,T}}{\epsilon^{1+\gamma}}$, the hypothesis class 
    $\mathcal{H}(h,n,d,T,M)$ can $\epsilon$-approximate $H$.
\end{enumerate}
\end{theorem}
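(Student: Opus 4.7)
The plan is to prove the three parts separately: constructive upper bounds for parts (1) and (3), and an adversarial packing/bottleneck argument for part (2), which I expect to be the main obstacle.

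For \ref{thm2:p1} I would build an element of $\mathcal{H}(D,2,d,T,M)$ that realizes $\bar z(X_T)$ with one head per coordinate. The encoder $P_\phi$ at token $s$ packs, for head $i$, two scalars: a position-gating coordinate that is very negative for $s\notin S_i$ and zero otherwise (read off the positional part of $P_\phi$), and the value $\Psi_{i,\delta}(x(s))$ approximating $f_i$. Setting $W_{Q,i},W_{K,i}$ so that head $i$'s logit equals $-\beta\Psi_{i,\delta}(x(s))$ plus the gate, softmax with large $\beta$ concentrates on $\argmin_{t\in S_i}\Psi_{i,\delta}(x(t))$; $W_{V,i}$ then reads the value slot, so the attention output for head $i$ is $\bar z_i(X_T)+O(\delta)$. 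Composing with a two-layer network $\Phi_\delta$ approximating $F_0$ and balancing $\delta\asymp\epsilon$ yields the claimed $M\gtrsim C_{d,D,T}/\epsilon^\gamma$, with the softmax-to-hardmax gap absorbed into the constant via $\beta$.

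For \ref{thm2:p3} the strategy is \emph{attention as copy, feed-forward as computation}. With $n\ge Td$, I would pick $P_\phi$ so that position $s$ places $x(s)$ in the $s$-th $d$-block of the single-head embedding (other coordinates zero), while $\hat c_0$ uses a distinct slot. Choosing $W_Q,W_K$ so that the softmax is essentially uniform over $[T]$ and choosing $W_V,W_O$ to rescale the blocks back, the attention output is an affine bijective encoding of $X_T$ in $\mathbb R^{Td}$. The 5-layer ReLU feed-forward block then realizes $X_T\mapsto F_0(\bar z(X_T))$: roughly, two layers approximate each $f_i$ blockwise via $\Psi_{i,\delta}$, a middle layer aggregates the $\min$ over $S_i$ (exploiting $\min(a,b)=\tfrac12(a+b-|a-b|)$ in ReLU, broadened as needed for $|S_i|$ entries), and two layers approximate $F_0$ via $\Phi_\delta$. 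Propagating error through the Lipschitz $F_0$ forces both the inner and outer precision to scale like $\epsilon$; two width-$\epsilon^{-\gamma}$ networks composed through a Lipschitz layer produce the final $M=O(\epsilon^{-(1+\gamma)})$.

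The main obstacle is \ref{thm2:p2}: to my knowledge no packaged tool gives nonlinear transformer lower bounds, so the proof rests on a two-stage bottleneck-packing argument. \textbf{Stage A (bottleneck).} Writing the model as $\hat F\circ g$, where $g:\mathcal X_T\to\mathbb R^{(n+1)s+1}$ captures the attention output together with the effective degrees of freedom contributed by the $s$ softmax-normalization scalars and the class-token residual, I would exhibit an embedded family of inputs of dimension $\lfloor T/4\rfloor - s - D + 1$ along which $\bar z$ varies independently. Assumption~\ref{assump:target} is used quantitatively: near each minimizer $x^{(i)}$, the positive-definite Hessian makes $f_i$ locally quadratic, so perturbing one additional token in $S_i$ near $x^{(i)}$ changes $\bar z_i$ by a controllable amount, and $\nabla F_0\neq 0$ makes the target sensitive to every such perturbation. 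A dimension/volume count then forces a packing of size $\Theta(\epsilon^{-k})$ in the parameter family whose $g$-images lie in a common $\epsilon$-ball, with $k=(\lfloor T/4\rfloor-s-D+1)/((n+1)s+1)-1$ arising from the ratio of the two dimensions. \textbf{Stage B (parameter counting).} On this packing, $\hat F$ must distinguish $\Theta(\epsilon^{-k})$ well-separated target values from $\epsilon$-close inputs. Applying the Bartlett--Harvey--Liaw--Mehrabian pseudo-dimension bound for piecewise-linear (1-Lipschitz) feed-forward networks then forces the parameter count to satisfy $M=\Omega(\epsilon^{-k})$. The delicate step, where I expect most of the technical work to lie, is Stage A: making rigorous that the full nonlinear softmax attention — not merely a linear projection of it — collapses the adversarial family, which is exactly what distinguishes this bound from easier linearized analyses and pins down the denominator $(n+1)s+1$.
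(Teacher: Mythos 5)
Parts (1) and (3) of your plan are essentially the construction the paper uses: for (1), one head per coordinate with a position gate and a value slot $\Psi_{i,\delta}(x(t))$, softmax-as-hardmin, then $\Phi_\delta$ to approximate $F_0$; for (3), trivial attention copying $\tfrac1T(x(1),\dots,x(T))$ into the token, followed by a deep FFN. One small gap in your (3): you propose to compute $\min$ via $\min(a,b)=\tfrac12(a+b-|a-b|)$ ``broadened as needed,'' but the exact pairwise reduction for $|S_i|$ entries needs a tree of depth $\Theta(\log T)$, which does not fit a fixed 5-layer network. The paper instead builds an explicit \emph{approximate} three-layer ReLU network for $\max$/$\min$ (Lemma~\ref{lem:relu-max}) of width $O(T/\epsilon)$, and then stacks $F_1,F_2,F_3$ into a genuinely 5-layer network (Lemma~\ref{lem:stacking}). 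You should replace the exact-min idea with an approximate-min subnetwork of bounded depth.

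For part (2) your Stage~A is in the right spirit --- a pigeonhole/bottleneck argument on the attention readout with $ns$ output coordinates and $s$ normalization scalars, so $(n+1)s$ effective degrees of freedom, and the extra $+1$ absorbing the slack in the counting --- and you correctly identify it as the delicate step. Two concrete issues remain, though. First, the precision: you propose a packing whose $g$-images lie in a common $\epsilon$-ball, but for the argument to close you need two inputs whose post-attention representations agree to within $O(\epsilon^{k+1})$ while their target values differ by $\Theta(\epsilon)$; the ratio $\epsilon/\epsilon^{k+1}=\epsilon^{-k}$ is what forces the lower bound. An $\epsilon$-ball is far too loose. Second, and more fundamentally, Stage~B is the wrong tool. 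The Bartlett--Harvey--Liaw--Mehrabian pseudo-dimension bound applies to piecewise-linear activations (the paper allows arbitrary $1$-Lipschitz activations), and, more to the point, it is insensitive to weight magnitude, so it can only control how many arbitrary labelings a network can interpolate --- it says nothing about whether a small network can map two $\delta$-close inputs to $\epsilon$-far outputs. What the paper actually uses (Lemma~\ref{lem:FFN-rate}) is a direct Lipschitz estimate: a two-layer FFN of width $p$ with entrywise weight bound $1$ and $1$-Lipschitz activation has Lipschitz constant $\le p\sqrt{n}$, so mapping inputs separated by $A$ to outputs separated by $B$ forces $p=\Omega(B/(A\sqrt{n}))$. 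The entrywise weight bound of Assumption~\ref{assump:setup}(1.3) is therefore essential and must appear in your argument; indeed the paper's Remark~\ref{rem:lower-bounds} notes that relaxing the weight bound collapses the lower bound, which rules out any weight-agnostic complexity measure such as pseudo-dimension.
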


\begin{remark}
    We clarify the precise role of the assumptions and constants in Theorem~\ref{thm:thm2}. \\
    (1)Theorems~\ref{thm2:p1} and~\ref{thm2:p3} require only Assumption~\ref{assump:approximation}, 
    while Theorem~\ref{thm2:p2} relies only on Assumptions~\ref{assump:setup} and~\ref{assump:target}.\\
    (2) The constant in Theorem~\ref{thm:thm2} can be made explicit as $C_{d,D,T} \;=\; C_{d,D}\,(rT)^{-\alpha_{d, D} T}$,where $r>0$ is determined by the local behavior of the functions $f_i$ around $x^{(i)}$ and of $F_0$, 
    and $\alpha_{d, D}$ depends only on $d$ and $D$. This form is valid in the regime $d, D \ll T \ll 1/\epsilon$.\\
    (3) The exponent coefficients in Theorems~\ref{thm2:p1} and~\ref{thm2:p3} differ because, in 
    Theorem~\ref{thm2:p3}, the network $\hat{F}$ also needs $\Omega(T/\epsilon)$ parameters to approximate 
    the ``max-like'' operation. This yields a bound of the form $M \;\le\; \frac{1}{\epsilon^{\,\max(1,\gamma)}}$,
    and for notational simplicity we write $M \le 1/\epsilon^{1+\gamma}$.\\
    We provide the detailed proof in Appendix~\ref{app:proof-thm2}. We also justify the tightness of Theorem~\ref{thm2:p2} in Appendix~\ref{rem:lower-bounds}.
\end{remark}

Theorem~\ref{thm:thm2} highlights how approximation efficiency depends on head count: 
enough heads allow specialization, too few force inefficient compression, and 
a single large head can rely on memorization.  
To illustrate these cases concretely, we now revisit the toy example from equation~\ref{eq:toy-example} and discuss how each part of the theorem works in that setting.

\paragraph{Case (1): $h \geq D$ heads.}  
Theorem~\ref{thm2:p1} shows that when the number of heads matches the intrinsic dimension $D$ of the target, 
the transformer can allocate one head per component feature, allowing each head to specialize 
and leaving the feed-forward block to aggregate their outputs.  
This yields efficient approximation with $O(M^{-1/\gamma})$ error for parameter count $M$, 
independent of sequence length $T$.  

In the toy example with $D=2$, one head naturally tracks the maximum and the other the minimum, 
so the task is solved directly without incurring inefficiency.  
This illustrates how having “enough heads” removes the unfavorable scaling in $T$ and explains 
the practical advantage of multiple heads beyond universal approximation results 
(e.g.,~\citet{kajitsuka2023.AreTransformersOne}).

\paragraph{Case (2): $h < D$ heads.}  
Theorem~\ref{thm2:p2} establishes that when the number of heads is 
smaller than the intrinsic dimension $D$, the parameter count required to achieve 
a given accuracy can grow exponentially in the sequence length $T$.  
This lower bound highlights why insufficient heads lead to severe inefficiency.  
Intuitively, each head can be viewed as specializing in one coordinate of the 
minima structure in~\eqref{eq:target}.  
When $h < D$, a single head must encode multiple roles simultaneously.  

In the toy example with $D=2$, one head is forced to capture both the maximum 
and the minimum across all $T$ positions.  
Since softmax attention only produces weighted averages, the head must effectively 
encode information from multiple sequence elements simultaneously.  
As $T$ increases, the number of relevant elements to distinguish grows linearly 
with $T$, yet they are compressed into an $n$-dimensional vector whose size does not scale with $T$. 
The feed-forward block must then disentangle these increasingly entangled 
representations, which requires parameters exponential in $T$.  
This explains why the parameter requirement scales as $\Omega(1/\epsilon^{cT})$ 
and why the scaling improves dramatically once $h \geq D$.

Theorem~\ref{thm2:p2} is proved with the following idea: (i) each head selects its most responsive locations $(y_j,t_j)$ in $D$ disjoint minima basins around $x^{(i)}$; (ii) because $s<D$, there is at least one segment $G_i \subset B(x^{(i)},r)$ that no head focuses on. We then consider the segment $G_i$ in it; (iii) along this segment (suppose it is $G_1$), we construct many candidate subsequences and, by a pigeonhole argument, obtain two subsequences $Z_1,Z_2$ whose post-attention representations are almost identical but whose contribution to $f_1$ differs; (iv) these subsequences are then embedded into full sequences $W_1,W_2$, which the target function separates by at least $3\epsilon$, while the attention block maps them within $O(\epsilon^{k+1})$, forcing a large feed-forward network. 

The intuition by which we derive the large network is different from geometric arguments in existing works such as \citep{yehudai2019power}. We directly made use of the fact that the network must be either large or have large parameters to approximate a function with large Lipschitz norm. 

\paragraph{Case (3): single head with large embedding.}  
Theorem~\ref{thm2:p3} shows that when the embedding dimension scales with the sequence length, 
$E = n \geq Td$, the model can encode the entire sequence into the classification token $\hat{c}_0$.  
Concretely, each input can be embedded as $e_t \otimes x(t) \in \mathbb{R}^{Td}$, where $e_t$ is the $t$-th standard basis vector, 
so that trivial attention aggregates to 
\[
    \tfrac{1}{T}\,(x(1),\dots,x(T)) \in [0,1]^T,
\]
which preserves the full sequence.  
The feed-forward block $\hat{F}$ can then recover the target relation efficiently.  
Unlike memorization of training data \citep{mahdavi2024.MemorizationCapacityMultiHead}, this mechanism can 
generalize since it captures the relation itself.  
Moreover, approximating extrema functions such as $\max$ and $\min$ with a shallow ReLU network 
is straightforward (see Lemma~\ref{lem:relu-max}), requiring width $O(T/\epsilon)$ for $\epsilon$ accuracy.  
However, this regime is impractical, as it demands embedding dimensions that grow linearly with $T$.

As deeper transformer are more commonly used in practice, here we conjecture the extension of Theorem~\ref{thm2:p2} onto the $L$-layer case. 
\begin{conjecture}[Multilayer transformer case] \label{Conjecture1} A necessary condition for efficient approximation is $L \cdot h \;\ge\; D$. When the head number is insufficient across layers, we conjecture the lower-bound scaling for some constants $a_L, b_L > 0$ depending only on depth $L$ to be
\[
    \log(\mathrm{ParamCount})
    = \Omega\!\Bigl(|\log \epsilon| \cdot \tfrac{a_L \, T^{\, b_L}}{n h} \Bigr),
\]
\end{conjecture}
Experiments on the synthetic dataset in Section~\ref{experiment1} with $2$-layer transformer with no positional encoding and no layer norm has also verified this transition at $D = L \cdot h$. (See Table~\ref{tab:2layer} in Appendix)

\section{Experiments}
Theorem~\ref{thm:thm2} provides theoretical insights into how the approximation 
ability of transformers depends on the number of heads. In this section, we 
illustrate these insights empirically. We begin with synthetic tasks that mirror 
the structure of the generalized $D$-retrieval tasks, and then turn to real 
datasets (MS MARCO and CIFAR-10) to examine whether similar scaling behaviors 
arise in practice.
\subsection{Numerical verification of Theorem~\ref{thm:thm2} with synthetic dataset}\label{experiment1}
 
We design a synthetic task aligned with the target class analyzed in Theorem~\ref{thm:thm2}. 
Given a sequence $X=\{x(1),\dots,x(T)\}$ of length $T$ with $x(t)\in\mathbb{R}^4$, 
the output is
\[
    y = \sum_{i=1}^4 \max_{1 \leq t \leq T} \, a_i^\top x(t),
\]
where $a_1,\dots,a_4 \in \mathbb{R}^4$ are fixed. Inputs are sampled i.i.d.\ 
from $x(t) \sim \mathcal{N}(0,I_4)$. For $T \in \{8,16,32,64,128\}$ we generate 
$8000$ training and $2000$ validation examples.  

On this task, we evaluate single-layer transformers with head numbers $h \in \{1,2,3,4,5\}$ and fixed per-head embedding dimension. 
Each $x(t)$ is embedded via a two-layer ReLU MLP and concatenated with a 
trainable classification token $c_0$, after which a single-layer multi-head 
attention block (without residuals or normalization) processes the sequence. 
A two-layer GELU MLP then outputs the scalar prediction. Both MLPs have the same hidden dimension $N$.

Then for each $(h,T)$, models are trained under multiple random seeds. We report the 
\emph{minimal normalized mean squared error (NMSE)} across seeds to reduce 
optimization noise and highlight expressivity. NMSE, equivalent to 
$1-R^2$, corrects for the variance shrinkage of maxima as $T$ grows, thus 
enabling fair comparison across lengths. Further training details and explanations are given in 
Appendix~\ref{app:exp1-training-details}.


\begin{figure*}[h]
    \centering
    \begin{subfigure}{0.35\textwidth}
        \centering
        \includegraphics[width=\linewidth]{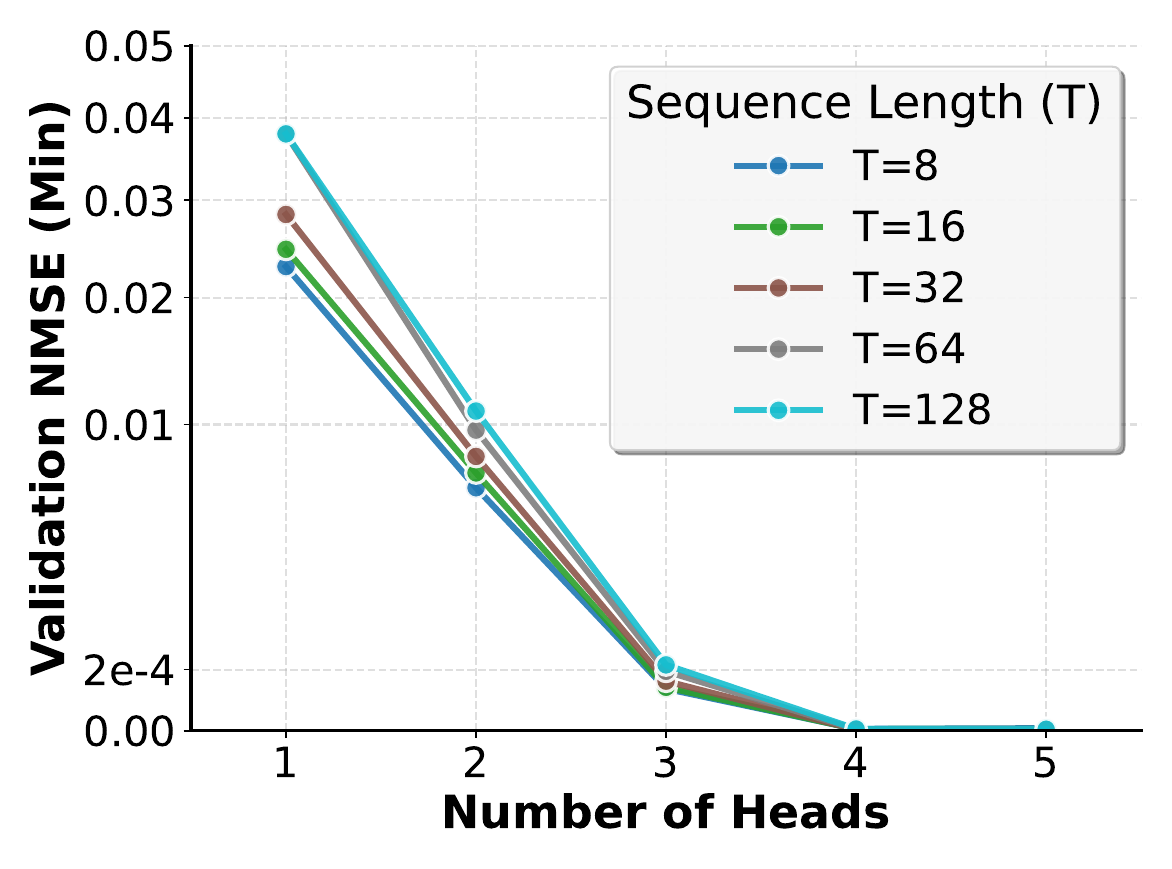}
        \caption{\textbf{NMSE vs.\ Number of Heads $h$.}}
        \label{fig:synthetic-nmse-a}
    \end{subfigure}
    \hfill
    \begin{subfigure}{0.63\textwidth}
        \centering
        \includegraphics[width=\linewidth]{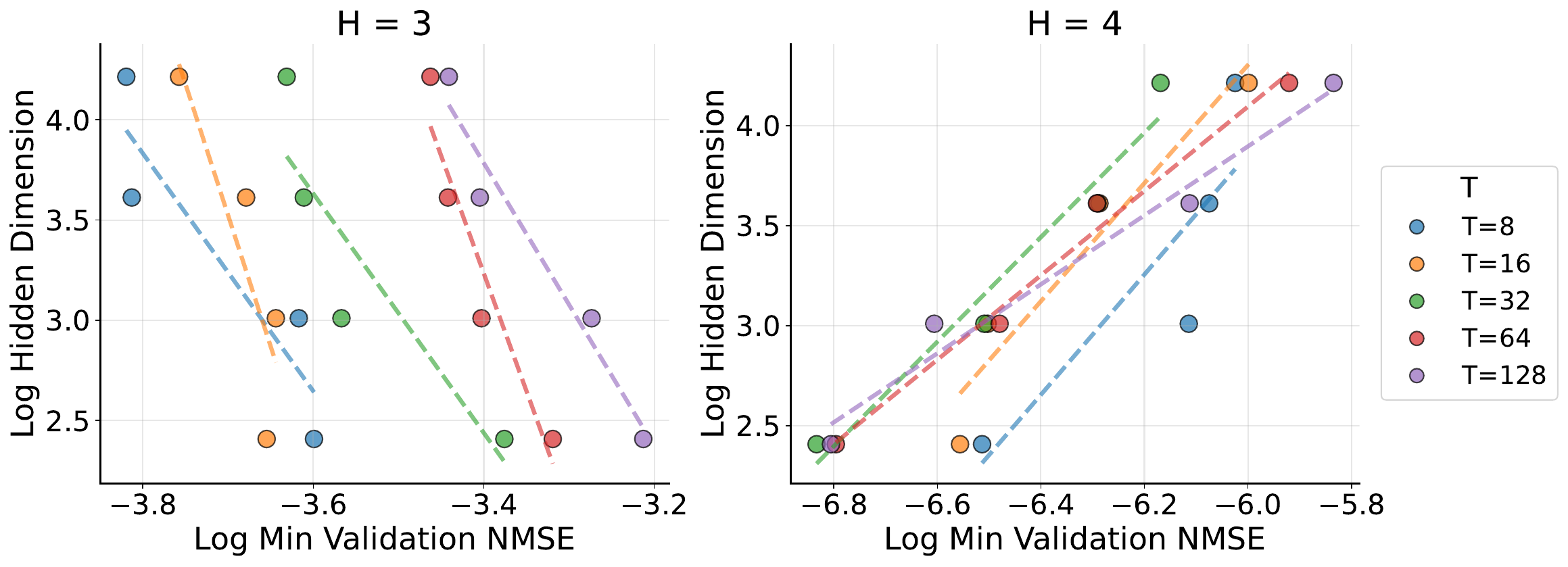}
        \caption{\textbf{Log $N$ vs. Log Accuracy (NMSE)}}
        \label{fig:synthetic-nmse-b}
    \end{subfigure}

    \caption{Results on the synthetic example.  
    (a) NMSE vs.\ number of heads $h$ for sequence lengths $T \in \{8,16,32,64,128\}$, hidden dimension fixed at $N=32$. Note that there is a transition at $h=4$. (A table of mean and variance values corresponding to these curves 
is provided in Table~\ref{table:synthetic-variance}.)  
    (b) Log Hidden Dimension $N$ vs. Log Accuracy for different sequence lengths $T$. The parameter count $k$ for the MLPs change linearly with $N$. (Plots for $h=1$ and $h=2$ are in Figure \ref{fig:synthetic-nmse-appendix}.) }

    \label{fig:synthetic-nmse} 
\end{figure*}


Figure~\ref{fig:synthetic-nmse-a} shows minimal validation NMSE versus head number $h$ across sequence lengths $T$. 
Performance improves monotonically with $h$ and exhibits a clear transition near the intrinsic dimension $D=4$. 
For $h<D$, NMSE grows with $T$, as limited heads must encode multiple extrema and the FFN becomes inefficient. 
Once $h \ge D$, curves flatten across $T$, indicating that heads specialize to different coordinates and the FFN aggregates them very effectively. 
Normalization by NMSE ensures comparability across $T$, despite the increasing concentration of the max-of-Gaussians target.

Figure~\ref{fig:synthetic-nmse-b} highlights a phase transition between $h=3$ and $h=4$, 
with $h=D=4$ equal to the intrinsic dimension of the target. 
When $h\le 3$, the negative log NMSE scales approximately linearly with the log parameter 
count (proportional to the MLP hidden dimension $N$), in agreement with 
Theorem~\ref{thm2:p2}. Moreover, for a fixed parameter count, larger $T$ yields higher NMSE (worse 
approximation). Equivalently, as indicated by the fitted scaling lines, achieving 
the same error requires larger parameter counts when $T$ increases, in line with 
Theorem~\ref{thm2:p2}. In contrast, for $h=4$ these trends change qualitatively. 
Validation error reaches the order of $10^{-6}$, indicating near-perfect generalization, 
yet the slope with respect to parameter count reverses: larger MLPs yield slightly higher 
validation NMSE, a signature of tiny overfitting. 
The dependence on $T$ also changes in this regime; see 
Remark~\ref{observation:reverse} in Appendix for details. 

We also conducted experiments on synthetic data with the widely used scheme of fixing $E=nh=32$ constant(For $h=3,5$, we choose per-head embedding dimension to be $\lceil 32/h\rceil$, and total embedding dimension becomes $E=33, 35$. See Table~\ref{tab:heads-fixed-E} in Appendix for details), as well as experiments on synthetic datasets with $D=3$ (See Table~\ref{tab:D3-results} in Appendix for details). Both of the above experiments demonstrate similar trends to the $D=4$ experiments, implying the robustness of our results.

\subsection{Experiments on Real Datasets}\label{sec:real-exp}
We conduct two additional experiments on real datasets to assess the 
practical relevance of our theoretical findings. 
The first is a text retrieval task based on MS MARCO, and the second is 
an image classification task based on CIFAR-10. As there is no natural NMSE-like metric on retrieval tasks and accuracy is most widely used, we focus on training accuracy to isolate architectural expressivity from issues related to optimization or data scarcity. For completeness, we also report test accuracy for both experiments in Table~\ref{tab:MS_val_mean_var} and \ref{table:vit_val_accuracy} in Appendix. The experiments examine whether the phase transition around the intrinsic 
dimension $D$, predicted by Theorem~\ref{thm:thm2}, also manifests in practice.

\paragraph{MS MARCO (text retrieval).}  
We construct retrieval-style datasets from the MS MARCO passage ranking collection 
\citep{bajaj2018.MSMARCOHuman}, where each query is paired with one positive passage and 
$T-1$ mined hard negatives ($T \in \{8,16,32,64\}$). 
We train a two-layer transformer encoder with per-head embedding dimension fixed at $32$, 
varying the number of heads across $\{1,2,4,6,8,10,12,14,16\}$. 
Input text is tokenized using the BERT tokenizer, and word, positional, and segment 
embeddings from pretrained BERT \citep{devlin2019.BERTPretrainingDeep} are kept frozen. 
These 768-dimensional embeddings are linearly projected to the embedding size 
$E = \text{heads} \times 32$, after which only the projection and transformer layers are trained. 
Full dataset construction and training details are given in Appendix~\ref{app:exp-msmarco}.

\paragraph{CIFAR-10 (image classification).}    
We further evaluate on the CIFAR-10 dataset \citep{krizhevsky2009.LearningMultipleLayers} using a 
four-layer Vision transformer (ViT) \citep{dosovitskiy2020.ImageWorth16x16}. 
Each image of size $32 \times 32$ is divided into non-overlapping $8 \times 8$ patches 
(patch size $=8$), which are linearly embedded. The transformer encoder has per-head 
embedding dimension fixed at $16$, and we vary the number of heads across 
$h \in \{1,2,4,8,10,11,12,13,14,16,20,24\}$. 
To vary the sequence length, we extend the border with interpolation around each 
image to enlarge its side length, after which the sequence length is given by 
$\left(\tfrac{\text{image side length}}{\text{patch size}}\right)^2 + 1$, including the classification token. 
Figure \ref{fig:cifar padded} shows some of the examples.
Full dataset preprocessing and training details are provided in 
Appendix~\ref{app:exp-cifar}.

\begin{figure*}[h]
    \centering
    \begin{subfigure}{0.45\textwidth}
        \centering
        \includegraphics[width=\linewidth]{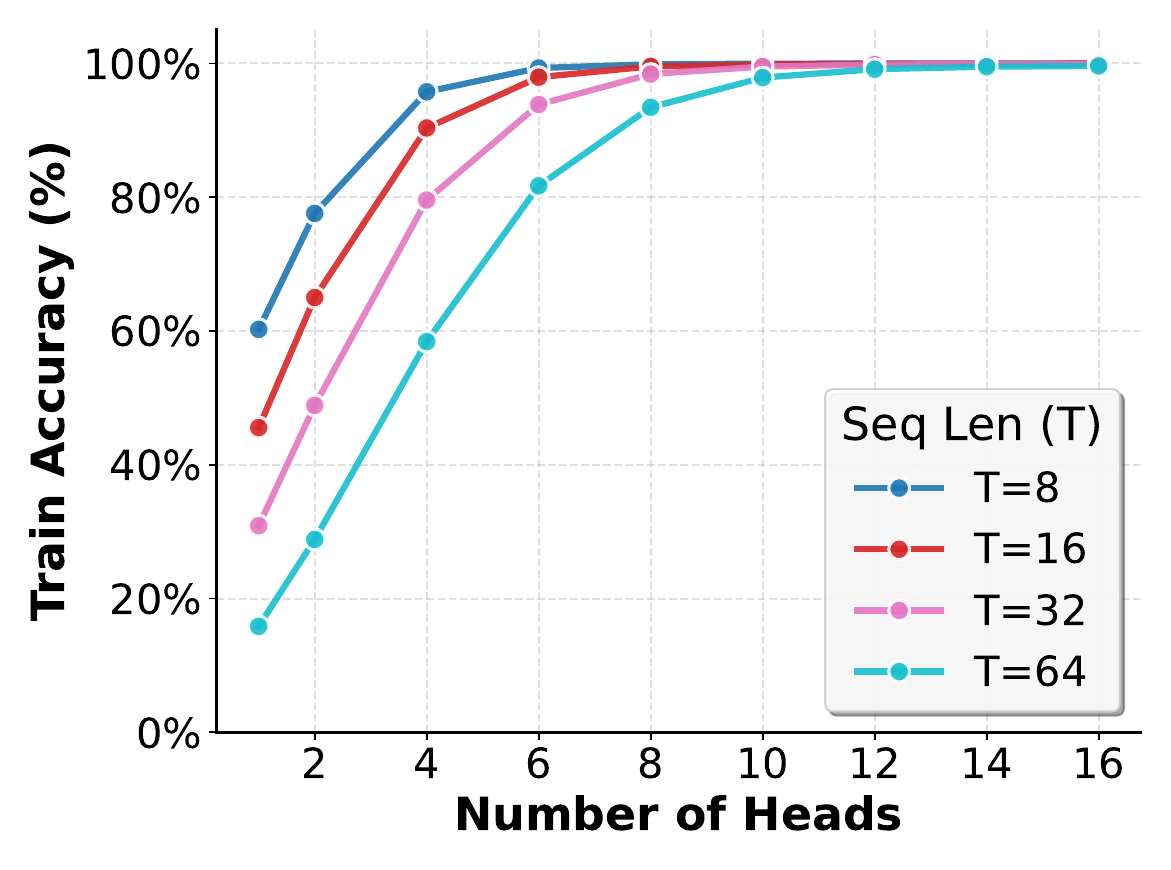}
        \caption{\textbf{Accuracy vs.\ Number of Heads for different $T$ (Text Retrieval).}}
        \label{fig:real-exp-a}
    \end{subfigure}
    \hfill
    \begin{subfigure}{0.45\textwidth}
        \centering
        \includegraphics[width=\linewidth]{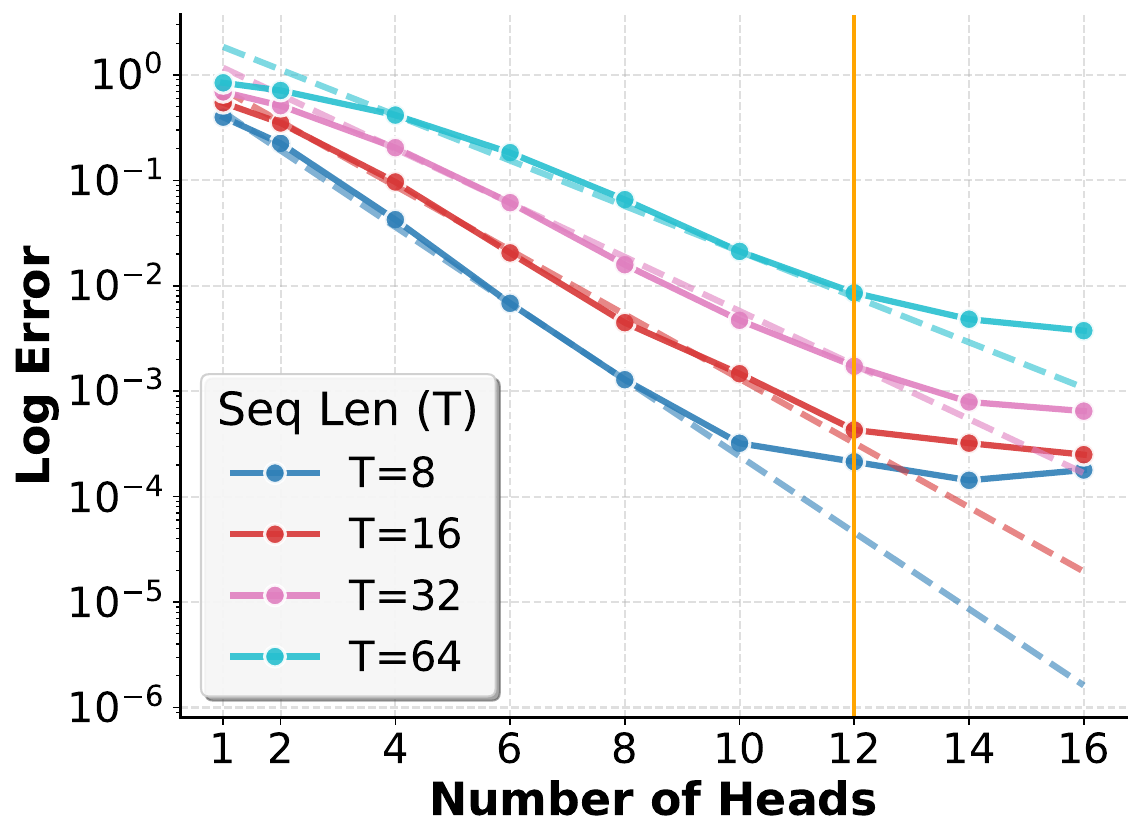}
        \caption{\textbf{Log(1-Accuracy) and its prediction (Text Retrieval)}}
        \label{fig:real-exp-b}
    \end{subfigure}


    \begin{subfigure}{0.33\textwidth}
        \centering
        \includegraphics[width=\linewidth]{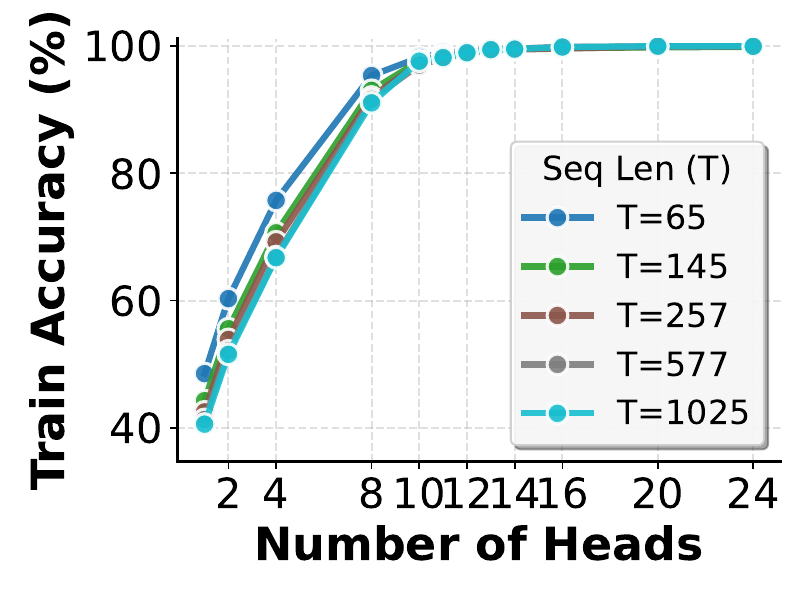}
        \caption{\textbf{Train accuracy ($\%$) vs.\ Number of Heads (Image).}}
        \label{fig:real-exp-c}
    \end{subfigure}
    \hfill
    \begin{subfigure}{0.32\textwidth}
        \centering
        \includegraphics[width=\linewidth]{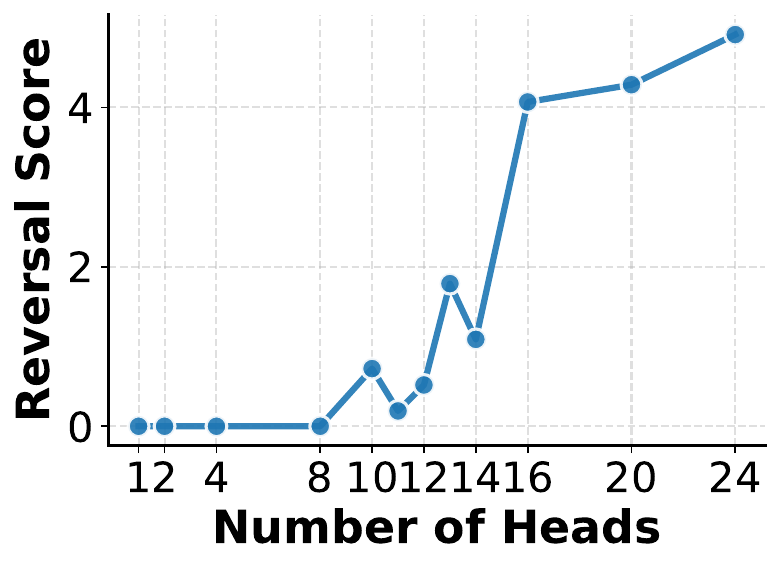}
        \caption{\textbf{Weighted Reversal score vs.\ Head Number (Image) }}
        \label{fig:real-exp-d}
    \end{subfigure}
    \hfill
    \begin{subfigure}{0.32\textwidth}
        \centering
        \includegraphics[width=\linewidth]{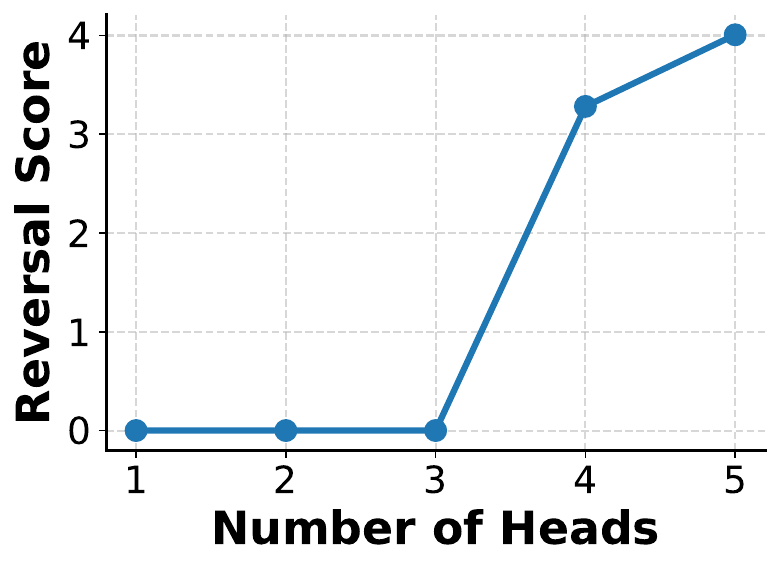}
        \caption{\textbf{Weighted Reversal score vs.\ Head Number (Synthetic) }}
        \label{fig:real-exp-e}
    \end{subfigure}

    \caption{\textbf{Experiments on real datasets.} 
    Training performance with different numbers of heads $h$ across different sequence lengths $T$. 
    (a)~Accuracy vs.\ number of heads for different $T$ in text retrieval; phase transition near $h=12$. Mean and standard deviation see Table~\ref{table:MS_Mean_var}. MRR shows a similar trend, see Fig.~\ref{fig:ms mrr} in the appendix.
    (b)~Phase transition for text retrieval.
    (c)~Accuracy vs.\ number of heads for different $T$ in image classification; phase transition near $h=10$. Mean and standard deviation see Table~\ref{table: image}.
    (d)~Weighted Reversal Score for Image Classification, $err = 1-Accuracy$. The plot becomes positive when $h\geq 10$, indicating phase transition. 
    (e)~Weighted Reversal Score for Synthetic Experiment, it becomes positive at $h=4$, exactly the intrinsic dimension of the task.}
    \label{fig:real-exp}
\end{figure*}

\paragraph{Result analysis.} \label{real-data-analysis}
Both experiments exhibit the same qualitative trend as in the synthetic setting. \\
Figure~\ref{fig:real-exp-a} shows that in the text retrieval experiment, when $h < 12$, accuracy declines as the sequence 
length $T$ increases, consistent with Theorem~\ref{thm2:p2}. Once $h > 12$, this dependence on $T$ disappears, and performance remains stable. Taking $Err(h,T) = 1-\text{Accuracy}(h,T)$ as error, by using $cT^{\beta}\exp(\alpha h/T^\delta)$ to approximate $(Err(h,T))$ in log scale under MAE and drop-outs ($h=1,12,14, 16$ are dropped out as outliers, $\delta = 0.25>0, \alpha = -1.40<0$), figure~\ref{fig:real-exp-b} illustrates that when $h <12$, $-\log(Err(h,T))\propto h/T^\delta$, highly consistent with the order in Theorem~\ref{thm2:p2} under fixed parameter count $M$. The flattening of curves after $h > 12$ is also consistent with theory. 

Figure~\ref{fig:real-exp-c} shows similar trend in image classification, with intrinsic dimension at $h=10$. Figures~\ref{fig:real-exp-d} and \ref{fig:real-exp-e} illustrates weighted reversal score, calculated by $R(h) =\frac{1}{w_h} \sum_{T_1<T_2}\max((err(T_1)-err(T_2)),0)$ with normalization factor $w_h = \max_{T}err(T) - \min_{T}err(T)$, detects the existence of longer $T$ yielding smaller error for this head number $h$. Such phenomenon leads to positive $R(h)$, and it also indicates phase transition as explained in remark~\ref{observation:reverse}. Figure~\ref{fig:real-exp-e} further verified this.

\section{Conclusion}

In this work we investigated the approximation properties of single-layer 
transformers. We first introduced a structured target family, the generalized $D$-retrieval task, that is broad 
enough to capture general sequence-to-vector mappings (Theorem~\ref{thm:thm1}). Within this setting, we analyzed how the 
approximation efficiency of transformers depends on architectural choices, especially the number of head. Our 
results indicate that having a sufficient number of heads leads to efficient approximation, while an 
insufficient number of heads forces the parameter count to grow exponentially with sequence length $T$. We also examined the single-head case, where large 
embedding dimension allows sequence memorization but shifts the complexity to 
the feed-forward block (Theorem~\ref{thm:thm2}). These findings clarify the roles played by 
head count in transformer expressivity.\\

Our experiments on both synthetic and real datasets reveal a non-trivial phase transition around the intrinsic dimension $D$, consistent with theoretical analysis. When the number of heads is below $D$, models exhibit higher error for the same parameter count as sequence length $T$ increases. Once the head count reaches or exceeds $D$, approximation rate becomes independent of sequence lengths $T$. This transition is also observed in real-world datasets with deeper architectures, indicating that the notion of intrinsic dimension is not only theoretical but also practically relevant. In particular, beyond fully training models, analyzing head contributions early in training to estimate how many heads meaningfully affect the output, or training multiple models with varying depths and head counts while tracking how error scales with $T$ are potential ways to probe the task's intrinsic dimension. These experiments might demonstrate whether the inferred intrinsic dimension is stable across architectures, thereby informing head-count selection and the head number to retain under pruning.

\paragraph{Limitations.}
We conclude by noting several limitations of this study. Firstly, although the analyzed target class is dense, the phenomena of interest are most naturally manifested in retrieval-style tasks aligned with our setting. Secondly, our analysis is restricted to single-layer transformers; while experiments on real datasets supports Conjecture~\ref{Conjecture1} in deeper architectures, a rigorous multi-layer theory remains open. Finally, the tradeoff between sequence memorization and pattern learning—observed for shorter sequences (cf.\ Remark~\ref{observation:reverse})—has not yet been established rigorously and warrants further investigation.

\newpage
\subsection*{Acknowledgments}
This research is supported by the National Research Foundation, Singapore, under the NRF fellowship (project No. NRF-NRFF13-2021-0005). The computational work for this article was fully performed on resources of the National Supercomputing Centre, Singapore (https://www.nscc.sg).
\bibliography{iclr2026_conference}
\bibliographystyle{iclr2026_conference}
\newpage
\appendix

\section{Proofs of Main Theorems}
\label{app:proofs}

\subsection{Proof of Theorem~\ref{thm:thm1}}
\label{app:proof-thm1}
\paragraph{Proof Sketch.}  
The proof proceeds in three steps.  
First, by Lemma~\ref{lem:closure}, we approximate a broader function class that relaxes 
the smoothness requirements and the assumptions in Assumption~\ref{assump:target}.  
Second, Lemmas~\ref{lem:os} and~\ref{lem:selector} show that by constructing appropriate $S_i$, we can faithfully recover all information from the original input sequence with simple $f_i$.  
Finally, the outer function 
$F_0$ can be applied to approximate an arbitrary sequence-to-vector target within this class.  
Together, these steps establish the result.

\begin{proof}[Proof of Theorem~\ref{thm:thm1}]
To prove Theorem~\ref{thm:thm1}, we first establish a few auxiliary lemmas.
\begin{lemma}[Relaxed target class and closure equivalence]\label{lem:closure}
Let $\widetilde{\mathcal F}_D^{d,T}$ be defined as in ~\ref{eq:target} but with only $f_i\in C([0,1]^d,[0,1])$, i.e., we drop ''unique minimizer'', ''pairwise distinct''and ''PD Hessian''.Set $\mathcal F^{d,T}:=\bigcup_{D\ge1}\mathcal F_D^{d,T}$ and $\widetilde{\mathcal F}^{d,T}:=\bigcup_{D\ge1}\widetilde{\mathcal F}_D^{d,T}$. Then
\[
\overline{\mathcal F^{d,T}}^{\,\|\cdot\|_\infty}=\overline{\widetilde{\mathcal F}^{d,T}}^{\,\|\cdot\|_\infty}\quad\text{on } \mathcal X_T.
\]
\end{lemma}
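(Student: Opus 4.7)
\textbf{Proof plan for Lemma~\ref{lem:closure}.} The inclusion $\overline{\mathcal{F}^{d,T}}\subseteq\overline{\widetilde{\mathcal{F}}^{d,T}}$ is immediate from the containment $\mathcal{F}_D^{d,T}\subseteq\widetilde{\mathcal{F}}_D^{d,T}$ for every $D$, so the task reduces to the reverse inclusion. Fix any $\widetilde{H}(X_T)=F_0(\min_{t\in S_1}f_1(x(t)),\dots,\min_{t\in S_D}f_D(x(t)))\in\widetilde{\mathcal{F}}_D^{d,T}$ and any $\epsilon>0$. Because the map $f\mapsto\min_{t\in S_i}f(x(t))$ is $1$-Lipschitz in $\|\cdot\|_\infty$ and $F_0$ is $L$-Lipschitz on $[0,1]^D$ (with $L=\|\nabla F_0\|_\infty$), it suffices to approximate each $f_i$ in $C^0([0,1]^d,[0,1])$ by some $\widetilde{f}_i$ satisfying Assumption~\ref{assump:target}(2.1)--(2.3) and to perturb $F_0$ into a $C^1$ function $\widetilde{F}_0$ satisfying (2.4), all within $O(\epsilon/(DL+D))$ in sup norm, after which the resulting $H\in\mathcal{F}_D^{d,T}$ is $\epsilon$-close to $\widetilde{H}$.

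\textbf{Step 1 (mollification).} Extend each $f_i$ continuously to a neighborhood of $[0,1]^d$ (e.g., by reflection across the faces) and convolve with a symmetric $C^\infty$ mollifier of small support to obtain $g_i\in C^\infty([0,1]^d,[0,1])$ with $\|g_i-f_i\|_\infty\le\delta_1$.

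\textbf{Step 2 (forcing (2.1)--(2.3)).} If the global minimum of $g_i$ lies on $\partial[0,1]^d$, first add a tiny smooth boundary barrier $\lambda b(x)$ that vanishes on $[1/4,3/4]^d$ and is positive otherwise, pushing the minimum into the interior at cost $O(\lambda)$ in sup norm. Select pairwise distinct interior points $x^{(1)},\dots,x^{(D)}\in(0,1)^d$ with $x^{(i)}$ chosen as an interior minimizer of the adjusted $g_i$ (perturbed transversally across $i$ to ensure distinctness), and set
\[
\widetilde{f}_i(x)=g_i(x)-B_i\,\chi_{r_i}(x-x^{(i)}),
\]
where $\chi_{r_i}$ is a standard radial $C^\infty$ bump equal to $1$ at the origin and supported in the ball of radius $r_i$. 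Since $\nabla g_i(x^{(i)})=0$ and $\nabla\chi_{r_i}(0)=0$, $x^{(i)}$ remains a critical point of $\widetilde{f}_i$, and a direct computation gives
\[
\nabla^2\widetilde{f}_i(x^{(i)}) \;=\; \nabla^2 g_i(x^{(i)}) \;+\; (B_i\,c/r_i^2)\,I,
\]
with $c>0$ depending only on the bump profile. Taking $B_i\le\delta_2$ keeps the perturbation small; taking $r_i$ small enough makes $B_i c/r_i^2$ exceed any negative eigenvalue of $\nabla^2 g_i(x^{(i)})$, forcing positive definiteness. Global uniqueness follows because $\widetilde{f}_i(x^{(i)})=\min g_i - B_i$ lies strictly below $g_i(x)\ge\min g_i$ for $x$ outside the bump, while inside the bump local uniqueness follows from the strictly positive Hessian via Taylor expansion.

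\textbf{Step 3 (adjusting $F_0$).} If $\nabla F_0$ vanishes in some coordinate at $z=(\widetilde{f}_1(x^{(1)}),\dots,\widetilde{f}_D(x^{(D)}))$, replace $F_0$ by $\widetilde{F}_0(z):=F_0(z)+\eta\langle a,z\rangle$ with $a\in\mathbb{R}^D$ componentwise nonzero and $\eta>0$ arbitrarily small. This preserves $C^1$ regularity, changes the sup norm by at most $\eta\|a\|$, and guarantees (2.4). Combining Steps 1--3, the resulting $H\in\mathcal{F}_D^{d,T}$ satisfies $\|\widetilde{H}-H\|_\infty\le LD\delta_2+\eta\|a\|<\epsilon$.

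\textbf{Main obstacle.} The delicate point is Step 2, which must deliver simultaneously (i) $\nabla^2\widetilde{f}_i(x^{(i)})\succ 0$, (ii) uniqueness of the global minimum at $x^{(i)}$, and (iii) $\|\widetilde{f}_i-g_i\|_\infty$ small. The bump trick resolves this because the bump's own negative curvature contributes a positive Hessian term of order $B_i/r_i^2$, which can be made arbitrarily large by shrinking $r_i$ while keeping $B_i$ as small as the sup-norm bound requires. The only genuinely subtle sub-case is when the minimum of $g_i$ lies on $\partial[0,1]^d$: the auxiliary barrier is needed so that $x^{(i)}$ can be anchored at an interior zero-gradient point of $g_i$, without which the Taylor argument for local uniqueness of the minimum of $\widetilde{f}_i$ at $x^{(i)}$ would fail.
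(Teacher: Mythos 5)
Your overall architecture — smooth each $f_i$, carve in a tiny bump to force a unique minimum with positive-definite Hessian, then fix $\nabla F_0$ by a small linear tilt, and finish with the Lipschitz bound through the $\min$ operator — is the same as the paper's. But two of the three "technical" steps you invoke do not actually work as stated, and the ordering of your operations makes the gaps hard to patch.

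\textbf{The distinctness step is a genuine gap.} You write that $x^{(i)}$ is ``chosen as an interior minimizer of the adjusted $g_i$ (perturbed transversally across $i$ to ensure distinctness).'' These two requirements conflict. If all $f_i$ (hence all $g_i$) are the same function, the minimizer sets coincide and there simply is no choice of distinct minimizers. If instead you perturb the points $x^{(i)}$ themselves off the minimizer, then $g_i(x^{(i)}) > \min g_i$, and your uniqueness argument collapses: for $x$ inside the bump with $g_i(x)$ closer to $\min g_i$ than $g_i(x^{(i)})$, the inequality $\widetilde f_i(x) > \widetilde f_i(x^{(i)})$ need not hold, and a depth $B_i$ large enough to overwhelm the gap $g_i(x^{(i)}) - \min g_i$ need not be $O(\delta_2)$. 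The paper resolves this by reversing the order: it first adds the bump to get a function $g_i$ with a unique minimizer $\xi_i$ and PD Hessian there, and only then composes with a near-identity diffeomorphism $\Phi_i(x)=x-\varepsilon_i\chi(x)v_i$. The diffeomorphism moves the whole function, so the image point $x^{(i)}=\Phi_i^{-1}(\xi_i)$ is still the unique global minimizer of $h_i=g_i\circ\Phi_i$, and the Hessian transforms as $D\Phi_i^{\!\top}\nabla^2 g_i\,D\Phi_i$, which remains PD. Distinct $v_i$'s then give distinct $x^{(i)}$. You need some device of this kind; ``perturbed transversally'' does not supply one.

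\textbf{The boundary barrier does not do what you claim, and is not needed.} A barrier $\lambda b$ with $\|\lambda b\|_\infty=O(\lambda)$ cannot push a boundary minimum into the interior in general: take $d=1$ and $g(x)=x$; then $g(0)+\lambda b(0)\le\lambda$, while $g(x)\ge 1/4$ on $[1/4,3/4]$, so for $\lambda<1/4$ the minimum stays at the boundary. Fortunately the barrier is a red herring. Nothing in Assumption~\ref{assump:target}~(2.1)--(2.3) requires an interior critical point, and the bump argument does not need $\nabla g_i(x^{(i)})=0$. If $x^{(i)}$ is any global minimizer of $g_i$ (interior or boundary), then for every $x\ne x^{(i)}$ in $[0,1]^d$, either $\chi_{r_i}(x-x^{(i)})=0$ so $\widetilde f_i(x)=g_i(x)\ge\min g_i>\min g_i-B_i$, or $0<\chi_{r_i}(x-x^{(i)})<1$ so $\widetilde f_i(x)>g_i(x)-B_i\ge\min g_i-B_i$. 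That gives global uniqueness directly, no Taylor expansion needed. Your Taylor-expansion claim, as written, establishes only that $x^{(i)}$ is a strict local minimum, not that it dominates the rest of $B(x^{(i)},r_i)$; the elementary inequality above is what closes that hole, and it works at the boundary too.

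\textbf{Minor: range control.} After subtracting the bump, $\widetilde f_i(x^{(i)})=\min g_i-B_i$ may be negative, so $\widetilde f_i$ may fail to map into $[0,1]$. The paper composes with a strictly increasing $C^\infty$ squashing map at the end, which fixes the range while preserving the minimizer location and the PD Hessian (by the chain rule, multiplying the Hessian by $s'>0$). You should add the same step. Your Step~3 adjustment $F_0\mapsto F_0+\eta\langle a,z\rangle$ for (2.4) is fine for generic $\eta$.
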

\begin{proof}
Fix $\widetilde H(X_T)=\widetilde F_0\!\big(\widetilde z_1(X_T),\dots,\widetilde z_D(X_T)\big)\in\widetilde{\mathcal F}_D^{d,T}$ with
$\widetilde z_i(X_T)=\min_{t\in S_i}\widetilde f_i(x(t))$ and $\widetilde f_i\in C([0,1]^d,[0,1])$.
Let $\varepsilon>0$.
We will construct $H\in\mathcal F^{d,T}$ with $\|H-\widetilde H\|_\infty\le \varepsilon$.
Firstly, by Stone--Weierstrass, choose $p_i\in C^\infty([0,1]^d)$ so that
$\|p_i-\widetilde f_i\|_\infty\le \eta$, where $\eta>0$ will be fixed later.
Because the uniform approximation can slightly leave $[0,1]$, compose with a smooth strictly increasing squashing $s:[-c,1+c]\to[0,1]$ with $s(u)=u$ on $[0,1]$ and $\|s\circ p_i - p_i\|_\infty\le \eta$ (for small enough $c>0$), and replace $p_i$ by $s\circ p_i$. We still write $p_i$ and retain $\|p_i-\widetilde f_i\|_\infty\le 2\eta$.

Secondly, let $\xi_i\in\arg\min_{x\in[0,1]^d} p_i(x)$ (nonempty by compactness).
Pick $r\in(0,\tfrac{1}{4})$ small and a $C^\infty$ bump $\phi_i$ supported in $B(\xi_i,2r)\cap[0,1]^d$, with $\phi_i(\xi_i)=1$, $\nabla\phi_i(\xi_i)=0$, and with
$\nabla^2 \phi_i(\xi_i)$ \emph{negative definite}.\footnote{For instance take $\phi_i(x)=\psi(\|x-\xi_i\|^2/r^2)$ with $\psi(0)=1$, $\psi'<0$ near $0$, $\psi\equiv0$ on $[1,\infty)$; then $\nabla^2\phi_i(\xi_i)\prec 0$ and its norm scales like $r^{-2}$.}
Define, for parameters $\delta_{i,1},\delta_{i,2}>0$ to be fixed,
\[
g_i(x)\ :=\ p_i(x)\;-\;\delta_{i,1}\,\phi_i(x)\;+\;\delta_{i,2}\,\phi_i(x)\,\|x-\xi_i\|^2 .
\]
(i) Since $g_i(\xi_i)=p_i(\xi_i)-\delta_{i,1}$ while $g_i(x)\ge p_i(x)$ whenever $\phi_i(x)=0$ and $g_i(x)>p_i(x)$ for $x\in B(\xi_i,2r)\setminus\{\xi_i\}$, we get that \emph{$\xi_i$ is the unique global minimizer} of $g_i$.

(ii) At $\xi_i$, because $\nabla\phi_i(\xi_i)=0$,
\[
\nabla^2 g_i(\xi_i)
= \nabla^2 p_i(\xi_i) \;-\;\delta_{i,1}\,\nabla^2\phi_i(\xi_i) \;+\; 2\delta_{i,2} I .
\]
Here $-\nabla^2\phi_i(\xi_i)\succ 0$, so choosing $(\delta_{i,1},\delta_{i,2})$ suitably makes $\nabla^2 g_i(\xi_i)\succ0$ (PD).
Because $\|\phi_i\|_\infty\le 1$ and $\|\,\|x-\xi_i\|^2\|_\infty\le d$ on $[0,1]^d$,
\[
\|g_i-p_i\|_\infty \;\le\; \delta_{i,1}\;+\;\delta_{i,2}\,(2r)^2 .
\]
Hence, by taking $r$ small and then $\delta_{i,1},\delta_{i,2}$ small (using the $r^{-2}$ scaling in $\nabla^2\phi_i(\xi_i)$ to keep the Hessian PD), we can ensure both PD at $\xi_i$ and $\|g_i-p_i\|_\infty\le \eta$.

Thirdly, we use tiny translation to remove distinctiveness. It may happen that $\xi_i=\xi_{i'}$ for some $i\neq i'$.
Choose pairwise distinct small vectors $v_i\in\mathbb{R}^d$ and fix a smooth cutoff
$\chi\in C^\infty([0,1]^d,[0,1])$ that equals $1$ on $[r,1-r]^d$ and vanishes near the boundary.
Define a $C^\infty$ diffeomorphism of the cube,
\[
\Phi_i(x)\ :=\ x\;-\;\varepsilon_i\,\chi(x)\,v_i ,
\qquad \text{with } \varepsilon_i>0 \text{ small}.
\]
Then $\Phi_i$ is arbitrarily close to the identity in $C^1$ for small $\varepsilon_i$, maps $[0,1]^d$ to itself, and
$h_i:=g_i\circ \Phi_i$ has a (unique) minimizer at $x^{(i)}:=\Phi_i^{-1}(\xi_i)$.
For different $i$, these points are distinct if the $v_i$'s are distinct and $\varepsilon_i$'s are small but nonzero.
Moreover, because $\nabla g_i(\xi_i)=0$, the Hessian at $x^{(i)}$ satisfies
\[
\nabla^2 h_i(x^{(i)}) \;=\; D\Phi_i(x^{(i)})^{\!\top}\,\nabla^2 g_i(\xi_i)\,D\Phi_i(x^{(i)}) \;\succ\; 0,
\]
so PD is preserved.
Since $g_i$ is Lipschitz on the compact cube, $\|h_i-g_i\|_\infty\le L_i\,\varepsilon_i$ for some $L_i$, hence by taking $\varepsilon_i$ small we get $\|h_i-g_i\|_\infty\le \eta$.

Finally, if needed, compose with the same strictly increasing squashing $s$ as in Step~1 and set
\[
f_i\ :=\ s\circ h_i \in C^2([0,1]^d,[0,1]).
\]
Because $s$ is strictly increasing, it preserves the minimizer location and, at the minimizer $x^{(i)}$, 
$\nabla^2 (s\circ h_i)(x^{(i)}) = s'\!\big(h_i(x^{(i)})\big)\,\nabla^2 h_i(x^{(i)})\succ0$.
Also $\|f_i-h_i\|_\infty\le \eta$ by construction.

Collecting the bounds from previous deduction:
\[
\|f_i-\widetilde f_i\|_\infty
\;\le\; \underbrace{\|p_i-\widetilde f_i\|_\infty}_{\le 2\eta}
+\underbrace{\|g_i-p_i\|_\infty}_{\le \eta}
+\underbrace{\|h_i-g_i\|_\infty}_{\le \eta}
+\underbrace{\|f_i-h_i\|_\infty}_{\le \eta}
\;\le\; 5\eta .
\]
For each $i$, the map $u\mapsto \min_{t\in S_i} u_t$ is $1$-Lipschitz in $\|\cdot\|_\infty$.
Hence the corresponding features
$\bar{z}_i(X_T):=\min_{t\in S_i} f_i(x(t))$ and $\widetilde z_i(X_T):=\min_{t\in S_i}\widetilde f_i(x(t))$
satisfy $\|\bar{z}_i-\widetilde z_i\|_\infty \le 5\eta$.
Let $\omega_{\widetilde F_0}$ be a modulus of continuity of $\widetilde F_0$ on $[0,1]^D$.
Choose $\eta$ so small that $\omega_{\widetilde F_0}(5\eta)\le \varepsilon/2$.
Then
\[
\big\|\widetilde F_0\big(\bar{z}(X_T)\big)-\widetilde F_0\big(\widetilde z(X_T)\big)\big\|_\infty
\le \varepsilon/2 .
\]
Finally, approximate $\widetilde F_0$ uniformly on $[0,1]^D$ by some $F_0\in C^1([0,1]^D)$ within $\varepsilon/2$ (Stone--Weierstrass). Setting
\[
H(X_T)\ :=\ F_0\big(\bar{z}_1(X_T),\ldots,\bar{z}_D(X_T)\big)
\in \mathcal F^{d,T} ,
\]
we obtain
\[
\|H-\widetilde H\|_\infty
\ \le\
\underbrace{\|F_0(\bar{z})-\widetilde F_0(\bar{z})\|_\infty}_{\le \varepsilon/2}
+\underbrace{\|\widetilde F_0(z)-\widetilde F_0(\widetilde z)\|_\infty}_{\le \varepsilon/2}
\ \le\ \varepsilon .
\]
This shows $\widetilde{\mathcal F}^{d,T}\subset \overline{\mathcal F^{d,T}}^{\,\|\cdot\|_\infty}$.
The reverse inclusion is simple, hence we have the lemma.

\end{proof}

\begin{remark}
Thus, we now focus on the relaxed class $\widetilde{\mathcal F}^{d,T}$ and Lemma \ref{lem:closure} lifts the result to the original class ${\mathcal F}^{d,T}$.
\end{remark}

\begin{lemma}[Order-statistic in the relaxed class]\label{lem:os}
Without loss of generation, suppose $4|T$. Let $m=\frac{T}{4}$.  
For each $j\in[d]$ and $X_T=\{x(1),\dots,x(T)\}$, define
\[
U_j(X_T):=\max_{\substack{B\subseteq[T]\\|B|=m}}\ \min_{u\in B} x(u)_j,
\]
and for each fixed $t\in[T]$,
\[
Y_{t,j}(X_T):=\max_{\substack{A\subseteq[T]\\|A|=m,\ t\in A}}\ \min_{u\in A} x(u)_j,\qquad
Z_{t,j}(X_T):=\max_{\substack{A\subseteq[T]\\|A|=m,\ t\in A}}\ \min_{u\in A} (1-x(u)_j).
\]
Let $v_{1,j}\ge\cdots\ge v_{T,j}$ be the sorted values of $\{x(1)_j,\dots,x(T)_j\}$ and set $U_j=v_{m,j}$.
For the multi-set $\{x(u)_j:u \in [T]\}$, let $v_{1,j}\ge\cdots\ge v_{T,j}$ (nonincreasing) and $w_{1,j}\le\cdots\le w_{T,j}$ (nondecreasing). Then we have
\[
Y_{t,j}=\min\{x(t)_j,\ v_{m,j}\},\qquad
1-Z_{t,j}=\max\{x(t)_j,\ w_{m,j}\}.
\]
In particular:
\[
x(t)_j\le v_{m,j}\ \Rightarrow\ Y_{t,j}=x(t)_j,\qquad
x(t)_j\ge w_{m,j}\ \Rightarrow\ 1-Z_{t,j}=x(t)_j.
\]
\end{lemma}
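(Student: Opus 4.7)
The plan is to prove the identity $Y_{t,j}=\min\{x(t)_j,v_{m,j}\}$ by sandwiching between matching upper and lower bounds, then deduce the identity for $Z_{t,j}$ via the elementary duality
\[
\min_{u\in A}(1-x(u)_j)=1-\max_{u\in A}x(u)_j.
\]
Applied to the data $\{1-x(u)_j\}_{u\in[T]}$, whose nonincreasing sort has $m$-th entry $1-w_{m,j}$, the same argument used for $Y_{t,j}$ (with $\max$--$\min$ swapped) yields $Z_{t,j}=1-\max\{x(t)_j,w_{m,j}\}$. The ``in particular'' clauses are then immediate: under $x(t)_j\le v_{m,j}$ the $\min$ collapses to $x(t)_j$, and under $x(t)_j\ge w_{m,j}$ the $\max$ collapses to $x(t)_j$. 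The only real work is therefore the first identity, and along the way one incidentally recovers $U_j(X_T)=v_{m,j}$ by taking $A$ to be the top-$m$ set.

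For the upper bound $Y_{t,j}\le\min\{x(t)_j,v_{m,j}\}$ I would use two independent observations that both survive taking the maximum over admissible $A$. First, since $t\in A$, trivially $\min_{u\in A}x(u)_j\le x(t)_j$. Second, since $|A|=m$ and only $m-1$ positions in the global nonincreasing sort strictly exceed $v_{m,j}$, at least one element of $A$ must satisfy $x(u)_j\le v_{m,j}$; hence $\min_{u\in A}x(u)_j\le v_{m,j}$.

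For the matching lower bound I would construct an explicit admissible $A$ in two cases. If $x(t)_j<v_{m,j}$, take $A=\{t\}\cup B$ where $B$ consists of any $m-1$ indices realizing the $m-1$ largest values; each such value is $\ge v_{m-1,j}\ge v_{m,j}>x(t)_j$, so $\min_{u\in A}x(u)_j=x(t)_j$. If $x(t)_j\ge v_{m,j}$, then at least $m$ indices (including $t$, using the freedom to break ties at the value $v_{m,j}$) carry values $\ge v_{m,j}$, so one may pick an $m$-subset of these that contains $t$, yielding $\min_{u\in A}x(u)_j=v_{m,j}$. In each case the constructed $A$ attains $\min\{x(t)_j,v_{m,j}\}$, closing the sandwich.

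I do not anticipate a substantive obstacle: the proof is a sorting argument plus a two-case split. The one mild subtlety is handling ties at the threshold values $v_{m,j}$ and $w_{m,j}$, which is why I phrase the top/bottom-$m$ selections in terms of ``indices attaining'' the required values rather than uniquely defined positions; the combinatorial flexibility in choosing which tied index lies in $A$ is precisely what makes the lower-bound construction always possible when $t$ lies on the boundary.
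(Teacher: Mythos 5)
Your proposal is correct and follows essentially the same route as the paper's proof: both hinge on the observation that, with $t$ forced into $A$, the maximum of $\min_{u\in A}x(u)_j$ is achieved by filling the remaining $m-1$ slots with the largest available values, and both obtain the $Z_{t,j}$ identity by the duality $\min_{u\in A}(1-x(u)_j)=1-\max_{u\in A}x(u)_j$. Your version is more careful — the two-sided sandwich and the explicit tie-handling at $v_{m,j}$ make the argument airtight, whereas the paper states the optimal choice in one line; the only spot where you elide a step is in the lower-bound Case~2, where ``yielding $\min_{u\in A}x(u)_j=v_{m,j}$'' relies on the fact (already supplied by your upper-bound observation) that at most $m-1$ indices carry values strictly above $v_{m,j}$, so any admissible $A$ of that form must contain an index realizing $v_{m,j}$ exactly.
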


\begin{proof}
Among all $m$-subsets $B$, the maximum of $\min_{u\in B}x(u)_j$ is attained by picking the $m$ largest coordinates, so $U_j=v_{m,j}$. Forcing $t\in A$, the choice of the other $m-1$ indices to maximize the minimum is the $m-1$ largest among $\{x(u)_j:u\neq t\}$,
hence $Y_{t,j}=\min\{x(t)_j,v_{m,j}\}$. For $Z_{t,j}$, note that $\min_{u\in A}(1-x(u)_j)=1-\max_{u\in A}x(u)_j$, so maximizing it over $A$ is the same as minimizing $\max_{u\in A}x(u)_j$, which picks $t$ plus the $(m\!-\!1)$ smallest leading to $1-Z_{t,j}$. The particular statements follow immediately.
\end{proof}

\begin{lemma}[Smooth selector]\label{lem:selector}
For $q>0$ define
\[
\widehat x_q(t)_j\ :=\
\frac{e^{q (\,1-Z_{t,j}-w_{m,j}\,)^2}\,(1-Z_{t,j})
      +e^{q (\,Y_{t,j}-v_{m,j}\,)^2}\,Y_{t,j}}
     {e^{q (\,1-Z_{t,j}-w_{m,j}\,)^2}
      +e^{q (\,Y_{t,j}-v_{m,j}\,)^2}}.
\]
Then $\widehat x_q(t)_j\to x(t)_j$ uniformly on $\mathcal X_T$ as $q\to\infty$.
\end{lemma}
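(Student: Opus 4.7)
The plan is to use Lemma~\ref{lem:os} to replace $Y_{t,j}$ and $1-Z_{t,j}$ by their explicit ``min/max with threshold'' forms, after which the selector reduces to a two-term softmax-style average whose error admits an elementary uniform bound via a Gaussian-tail estimate.

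First I would abbreviate $a := x(t)_j$, $v := v_{m,j}$, $w := w_{m,j}$, and apply Lemma~\ref{lem:os} to obtain $Y_{t,j} = \min\{a,v\}$ and $1-Z_{t,j} = \max\{a,w\}$. Because $T=4m$, the $m$-th largest of $T$ reals always dominates the $m$-th smallest, i.e.\ $v \geq w$. This splits the analysis into three exhaustive cases: (A) $a > v$, so $Y_{t,j}=v$ and $1-Z_{t,j}=a$; (B) $a < w$, so $Y_{t,j}=a$ and $1-Z_{t,j}=w$; and (C) $w \leq a \leq v$, in which case $Y_{t,j}=1-Z_{t,j}=a$, so $\widehat x_q(t)_j = a$ exactly for every $q$ and there is nothing to prove.

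Next I would handle Case~(A): the two exponents become $q(a-w)^2$ and $0$, so a direct substitution gives
\[
  \bigl|\widehat x_q(t)_j - a\bigr| \;=\; \frac{a-v}{e^{q(a-w)^2}+1} \;\leq\; (a-w)\,e^{-q(a-w)^2},
\]
where I used $0 \leq a-v \leq a-w$ (valid since $v \geq w$). Applying the elementary identity $\sup_{u\geq 0} u\,e^{-qu^2} = 1/\sqrt{2qe}$ bounds this by $1/\sqrt{2qe}$ independently of $a,v,w$. Case~(B) is symmetric under the exchange of the two weights and candidates, and the same computation with $u = v-a$ yields the same bound. Combining all three cases gives $|\widehat x_q(t)_j - x(t)_j| \leq 1/\sqrt{2qe}$ uniformly over all $X_T \in \mathcal X_T$ and all $(t,j)$, which is the desired uniform convergence.

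The only delicate point is that the thresholds $v_{m,j}$ and $w_{m,j}$ themselves depend on $X_T$, so the error bound must be configuration-free; this is precisely what the reduction to $\sup_{u\geq 0} u\, e^{-qu^2}$ delivers. No regularity of the sequence and no separate treatment of the degenerate case $v = w$ is needed—the single inequality $v \geq w$ coming for free from $T=4m$ is the only structural ingredient required beyond Lemma~\ref{lem:os}.
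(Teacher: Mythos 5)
Your proof is correct, and the three-way case split (based on whether $x(t)_j$ lies above $v_{m,j}$, below $w_{m,j}$, or in between) is exactly the case analysis in the paper. Where you differ — and improve — is the last step: the paper argues pointwise concentration in each case and then asserts that ``compactness of $\mathcal X_T$ gives the uniform property,'' which is not justified as stated (pointwise convergence on a compact domain does not by itself imply uniform convergence; Dini's theorem, for example, would also require monotonicity in $q$, which is not established). You instead derive an explicit, configuration-free bound
\[
\bigl|\widehat x_q(t)_j - x(t)_j\bigr| \;\le\; \sup_{u\ge 0} u\,e^{-qu^2} \;=\; \frac{1}{\sqrt{2qe}},
\]
valid for every $X_T$, $t$, $j$, which delivers uniformity directly. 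Two small checks worth flagging in a write-up: (i) you use $v_{m,j}\ge w_{m,j}$, which holds because with $T=4m$ the $m$-th largest is also the $(3m+1)$-th smallest and $3m+1>m$; (ii) in Case (A) you need both $a-v\le a-w$ and $1/(e^{q(a-w)^2}+1)\le e^{-q(a-w)^2}$, both of which you use correctly, and Case (B) is indeed symmetric with $u=v-a$. Net: same route as the paper, but your version closes the gap in the final uniformity step.
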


\begin{proof}
From Lemma~\ref{lem:os}, if $x(t)_j\ge U_j$, we have $Y_{t,j}=v_{m,j}$ and $(Y_{t,j}-v_{m,j})^2=0$ while $1-Z_{t,j}=x(t)_j$ and $(1-Z_{t,j}-w_{m,j})^2>0$. Thus, as $q\to\infty$, the weight concentrates on $(1-Z_{t,j})=x(t)_j$. If $x(t)_j\le U_j$ and $w_{m,j}\ge x(t)_j$, we have $(1-Z_{t,j}-w_{m,j})^2=0$ while $Y_{t,j}=x(t)_j$ and $(Y_{t,j}-v_{m,j})^2>0$ concentrating on $Y_{t,j}=x(t)_j$. If $w_{m,j}\le x(t)_j\le U_j$, we have $1-Z_{t,j}=Y_{t,j}=x(t)_j$, so either of three settings leads to $x(t)_j$. The compactness of $\mathcal X_T$ gives us the uniform property.
\end{proof}

Now, we begin our formal proof for Theorem~\ref{thm:thm1}.

By Lemma~\ref{lem:closure}, for any $F\in C(\mathcal X_T)$ and $\varepsilon>0$, it suffices to construct an $\widetilde H\in\widetilde{\mathcal F}^{d,T}$ with $\|F-\widetilde H\|_\infty\le \varepsilon/2$, since the lemma~\ref{lem:closure} could lift it to $\mathcal F^{d,T}$ with another $\varepsilon/2$.

Fix $m=\frac{T}{4}$. For each coordinate $j$ and each $S\subseteq[T]$ with $|S|=m$, include the relaxed primitives
\[
z^{(j,S)}(X_T):=\min_{t\in S} x(t)_j,\qquad
\bar z^{(j,S)}(X_T):=\min_{t\in S} (1-x(t)_j).
\]
To form the thresholds $v_{m,j}$ and $w_{m,j}$ needed in Lemma~\ref{lem:os}, additionally include:
\[
\min_{t\in S} x(t)_j\quad\text{for all }S\subseteq[T]\setminus\{t\}\text{ with }|S|=m,
\]
and
\[
\min_{t\in S} (1-x(t)_j)\quad\text{for all }S\subseteq[T]\setminus\{t\}\text{ with }|S|=T-m,
\]
Using smooth log-sum-exp (softmax) in the outer function $\widetilde F_0$, we can recover the subset-wise maximum required to compute $U_j$, $Y_{t,j}$, $Z_{t,j}$, $v_{m,j}$ and $w_{m,j}$ from these primitives.

By Lemma~\ref{lem:selector}, for any $\delta>0$ there exists $q$ such that
\[
\max_{X_T\in\mathcal X_T}\max_{t\in[T],\,j\in[d]}\big|\widehat x_q(t)_j - x(t)_j\big|\ \le\ \delta.
\]
By uniform continuity of $F$ on the compact $\mathcal X_T$, choose $\delta$ so that this implies $|F(X_T)-F(\widehat X_q(T))|\le \varepsilon/4$ for all $X_T$, where $\widehat X_q(T)$ stacks the coordinates $\widehat x_q(t)_j$. We approximate the continuous map $u\mapsto F(u)$ on $[0,1]^{dT}$ uniformly by a polynomial $P$ within $\varepsilon/4$ (Stone–Weierstrass). Define
\[
\widetilde H(X_T):=\big(P\circ \mathrm{vec}\big)(\widehat X_q(T)),
\]
which is a $C^1$ function of the inner features. Hence $\widetilde H\in\widetilde{\mathcal F}^{d,T}$ and
\[
\|F-\widetilde H\|_\infty
\le \underbrace{\|F-F\circ \widehat X_q\|_\infty}_{\le \varepsilon/4}
+ \underbrace{\|F\circ \widehat X_q - P\circ \widehat X_q\|_\infty}_{\le \varepsilon/4}
\le \varepsilon/2.
\]

Apply Lemma \ref{lem:closure} to replace each relaxed primitives by admissible $C^2$ functions with unique minimizers
and to replace $\widetilde F_0$ by function $F_0$ so that the final error increases by at most $\varepsilon/2$.
This leads to $f\in\mathcal F_D^{d,T}$ with $\|F-f\|_\infty\le \varepsilon$.

\end{proof}

\subsection{Proof of Theorem~\ref{thm:thm2}}
\label{app:proof-thm2}

\subsubsection{Proof of Theorem~\ref{thm2:p1} }
Here we prove Theorem \ref{thm2:p1}
\paragraph{Proof Sketch.}  
The idea is straightforward. Each attention head is assigned to approximate one term 
$\min_{t \in S_i} f_i(x(t))$. Once these components are extracted, the outer function 
$F_0$ can be approximated by a suitable $\hat{F}$, completing the construction.

\begin{proof}[Proof of Theorem~\ref{thm2:p1}: Sufficient expressivity with $D$ heads]

Fix $d,T$ and $\alpha\in(0,1)$, and let $\varepsilon>0$ be given. 
Throughout the proof, constants depending only on $(d,D,T)$ are absorbed into 
$C_{d,D,T}>0$ which may change from line to line.

In the display~\eqref{eq:transformer}, each head produces an $n$-dimensional vector and $\operatorname{Concat}_{i=1}^h$ gives a vector in $\R^{nh}$ before $\hat F$. For the construction, we realize the usual \emph{block-by-head} parameterization, which means that
the encoder outputs a block-decomposed embedding
\[
\hat x(t)\;=\;\bigl(\hat x^{(1)}(t),\ldots,\hat x^{(D)}(t)\bigr)\in\R^{2D},
\qquad
\hat x^{(i)}(t)\in\R^2,
\]
and the $i$-th head only reads the $i$-th block via block-diagonal
$W_{Q,i},W_{V,i}$ (entries bounded by~$1$).  
This keeps the parameter counts within the same order.  We therefore set the \emph{per-head
embedding dimension} to $n=2$.

Firstly, from Assumption (A2), for any $\delta>0$ there exist two-layer FFNs
$\Psi_{i,\delta}:[0,1]^d\to[0,1]$ such that
\begin{equation}\label{eq:f-approx}
\max_{x\in[0,1]^d}\,|f_i(x)-\Psi_{i,\delta}(x)|\le\delta,
\qquad
\mathrm{width}(\Psi_{i,\delta})\le \frac{C_2}{\delta^{\gamma_f}},
\end{equation}
where $\gamma_f>0$ is the exponent from (A2).
Define for each head $i$, the position gate
\[
r_i(s):=\begin{cases}
0,& s\in S_i,\\[2pt]
-1,& s\notin S_i,
\end{cases}
\qquad s\in[T].
\]
(Recall $S_i\subset[T]$ with $|S_i|\ge \alpha T$ by~\eqref{eq:zi}.)
We implement the encoder $P_\phi$ so that its $i$-th block is
\begin{equation}\label{eq:encoder-block}
\hat x^{(i)}(t)\;=\;\bigl(\,\Psi_{i,\delta}(x(t))\,,\, r_i(t)\,\bigr)\in[0,1]\times\{-1,0\}
\subset[-1,1]^2 .
\end{equation}
This choice follows $\|\hat x(t)\|_2\le \sqrt{2D}$. After a fixed rescaling (absorbed into $\beta$), this meets the norm constraint.

Secondly, we would like to use head-wise attention to isolate the minimum on $S_i$. 
For each head $i$, we take a single attention logit ($m_h=1$) by choosing
\[
W_{O,i}=I,\qquad
W_{K,i}=\begin{bmatrix}-1&\;1\end{bmatrix},\qquad
W_{Q,i}\hat c_0=1,\qquad
W_{V,i}=I_2 .
\]
All entries are within the allowed bound~$1$. With the block~\eqref{eq:encoder-block}, the (pre-softmax) score of token~$t$ in head~$i$ is
\begin{equation}\label{eq:rho-def}
\rho_i(t)\;=\;(W_{K,i}\hat x^{(i)}(t))^\top (W_{Q,i}\hat c_0)
\;=\;-\Psi_{i,\delta}\bigl(x(t)\bigr)+r_i(t).
\end{equation}
Let $\sigma[\rho_i]$ be the softmax~\eqref{eq:softmax} with $\beta>0$.
Define the head-$i$ value readout (first coordinate of the head output)
\begin{equation}\label{eq:ztilde-def}
\tilde z_i(X_T)\;:=\;\sum_{t=1}^T \sigma[\rho_i](t)\;\Psi_{i,\delta}\bigl(x(t)\bigr).
\end{equation}
Here, the second coordinate is unused. $\hat F$ could ignore it via a fixed linear projection,
counted in the constant $C_{d,D,T}$.

Now, we give a uniform bound on $S_{i}$. Take $a_t:=\Psi_{i,\delta}(x(t))\in[0,1]$ and split the sum into $S_i$ and $S_i^c$. From $r_i(t)=0$ on $S_i$ and $r_i(t)=-1$ on $S_i^c$, we have 
\[
\sigma[\rho_i](t)
=\frac{e^{\beta(-a_t+r_i(t))}}{\sum_{u\in S_i} e^{-\beta a_u}+\sum_{u\in S_i^c} e^{-\beta (a_u+1)}}
\le
\begin{cases}
\displaystyle \frac{e^{-\beta a_t}}{\sum_{u\in S_i} e^{-\beta a_u}},& t\in S_i,\\[10pt]
e^{-\beta}\cdot \displaystyle \frac{e^{-\beta a_t}}{\sum_{u\in S_i} e^{-\beta a_u}},& t\in S_i^c.
\end{cases}
\]
Hence, we have 
\begin{equation}\label{eq:split}
\tilde z_i
=\sum_{t\in S_i}\sigma[\rho_i](t)a_t+\sum_{t\in S_i^c}\sigma[\rho_i](t)a_t
\le \underbrace{\frac{\sum_{t\in S_i} a_t e^{-\beta a_t}}{\sum_{u\in S_i} e^{-\beta a_u}}}_{\text{Gibbs mean on }S_i}
\;+\; e^{-\beta}\,\frac{\sum_{t\in S_i^c} a_t e^{-\beta a_t}}{\sum_{u\in S_i} e^{-\beta a_u}} .
\end{equation}
To simplify, we denote $a_*:=\min_{t\in S_i} a_t$ and $b_t:=a_t-a_*\in[0,1]$ for $t\in S_i$. Then, we have
\[
\frac{\sum_{t\in S_i} a_t e^{-\beta a_t}}{\sum_{u\in S_i} e^{-\beta a_u}}
=a_*+\frac{\sum_{t\in S_i} b_t e^{-\beta b_t}}{\sum_{u\in S_i} e^{-\beta b_u}}
\;\le\; a_*+\sum_{t\in S_i} b_t e^{-\beta b_t}
\;\le\; a_*+\frac{|S_i|-1}{e\beta},
\] 
The inequality comes from $\sup_{b\in[0,1]} b e^{-\beta b}=e^{-1}/\beta$ for $\beta\ge 1$ and one of the
$b_t$ is~$0$, so the denominator in the middle fraction is $\ge 1$. For the $S_i^c$ term in~\ref{eq:split}, 
we use $a_t\le 1$ and
$\sum_{u\in S_i} e^{-\beta a_u}\ge 1$ to get
\[
e^{-\beta}\,\frac{\sum_{t\in S_i^c} a_t e^{-\beta a_t}}{\sum_{u\in S_i} e^{-\beta a_u}}
\;\le\; e^{-\beta}\,\bigl|S_i^c\bigr|
\;\le\; e^{-\beta}\,T .
\]
Combining the two bounds, we have the uniform estimate:
\begin{equation}\label{eq:softmin-bound}
\min_{t\in S_i}\Psi_{i,\delta}\bigl(x(t)\bigr)
\;\le\;\tilde z_i(X_T)
\;\le\;\min_{t\in S_i}\Psi_{i,\delta}\bigl(x(t)\bigr)
\;+\;\frac{|S_i|-1}{e\beta}+T e^{-\beta}
\qquad(\beta\ge 1).
\end{equation}
In particular, since $|S_i|\le T$, there is a constant $C_T$ with
\begin{equation}\label{eq:softmin-simpler}
0\;\le\;\tilde z_i(X_T)-\min_{t\in S_i}\Psi_{i,\delta}\bigl(x(t)\bigr)
\;\le\; C_T\Bigl(\frac{1}{\beta}+e^{-\beta}\Bigr),\qquad C_T:=\max\{T/e,\;T\}.
\end{equation}

Thirdly, we need to lift bounds from $\tilde z_i$ to $z_i$. From ~\eqref{eq:f-approx} and the definition of $z_i$,
\[
\Bigl|\min_{t\in S_i} f_i\bigl(x(t)\bigr)-\min_{t\in S_i}\Psi_{i,\delta}\bigl(x(t)\bigr)\Bigr|
\;\le\;\delta .
\]
Together with~\eqref{eq:softmin-simpler},
\begin{equation}\label{eq:zi-tildezi}
\bigl|\tilde z_i(X_T)-\bar{z}_i(X_T)\bigr|
\;\le\;\delta + C_T\Bigl(\frac{1}{\beta}+e^{-\beta}\Bigr)
\qquad\text{for all }X_T\in\mathcal{X}_T,\ i=1,\dots,D.
\end{equation}
Let $L_0:=\sup_{z\in[0,1]^D}\|\nabla F_0(z)\|_1<\infty$ (compactness and $C^1$).
Choose
\[
\delta:=\frac{\varepsilon}{4L_0D},\qquad
\beta\ge \beta_\varepsilon:=\max\!\left\{1,\ \frac{4C_T L_0 D}{\varepsilon},\ \log\!\Bigl(\frac{4C_T L_0 D}{\varepsilon}\Bigr)\right\}.
\]
Then by~\eqref{eq:zi-tildezi},
\begin{equation}\label{eq:coord-error}
\|\tilde z(X_T)-z(X_T)\|_\infty\;\le\;\frac{\varepsilon}{2L_0D}
\qquad\text{for all }X_T\in\mathcal{X}_T.
\end{equation}

Finally, we constrcut the approximation for $F_{0}$ and count the number of parameter. By Assumption~(A1), there exists a two-layer FFN $\Phi_{\delta_0}:\,[0,1]^D\to\R$
with width $\le C_1/\delta_0^{\gamma_0}$ (for some $\gamma_0>0$) such that
\[
\max_{z\in[0,1]^D}\,|F_0(z)-\Phi_{\delta_0}(z)|\le \delta_0 .
\]
Set $\delta_0:=\varepsilon/2$.
Define the model's final feed-forward $\hat F$ to project
$\R^{2D}\to\R^D$ by keeping the first coordinate of each head (a fixed linear
map with entries in $\{0,1\}$) and apply $\Phi_{\delta_0}$.  

Then for all $X_T$, we have
\begin{align*}
\bigl|\hat F(\operatorname{Concat}_i(\cdot)) - F_0\bigl(z(X_T)\bigr)\bigr|
&\le \bigl|\Phi_{\delta_0}\bigl(\tilde z(X_T)\bigr)-\Phi_{\delta_0}\bigl(z(X_T)\bigr)\bigr|
     +\bigl|\Phi_{\delta_0}\bigl(z(X_T)\bigr)-F_0\bigl(z(X_T)\bigr)\bigr|\\
&\le L_0\,\|\tilde z(X_T)-z(X_T)\|_1+\delta_0\\
&\le L_0 D\,\|\tilde z(X_T)-z(X_T)\|_\infty + \varepsilon/2\\
&\le \varepsilon/2+\varepsilon/2\;=\;\varepsilon,
\end{align*}
where we used~\eqref{eq:coord-error} in the last inequality.

Here, the trainable components are composed of three parts:
\begin{itemize}
    \item the $D$ subnetworks $\Psi_{i,\delta}$ inside the encoder blocks~\eqref{eq:encoder-block};
    \item the fixed-size projections $W_{Q,i},W_{K,i},W_{V,i}$ (size~$O(D)$ and independent of $\varepsilon$); 
    \item the two-layer FFN $\Phi_{\delta_0}$ used inside $\hat F$.
\end{itemize}
Thus
\[
M\;\le\; C'\,D\cdot \frac{1}{\delta^{\gamma_f}} \;+\; C''\cdot \frac{1}{\delta_0^{\gamma_0}}
\;+\;C'''\qquad\text{for constants }C',C'',C'''=C_{d,D,T}.
\]
With $\delta=\Theta(\varepsilon)$ and $\delta_0=\Theta(\varepsilon)$ chosen above,
\[
M\;\le\; \frac{C_{d,D,T}}{\varepsilon^{\gamma}},
\qquad \gamma:=\max\{\gamma_f,\gamma_0\},
\]
and the construction uses $h=D$ heads with per-head dimension $n=2$ and achieves
$\varepsilon$-approximation on $\mathcal{X}_T$.  This proves Theorem~\ref{thm2:p1}.
\end{proof}

\subsubsection{Proof of Theorem~\ref{thm2:p2} }
\paragraph{Proof Sketch.}  
The argument proceeds in two parts. The core idea is to construct two sequences whose 
representations after the attention layer are indistinguishably close, on the order of 
$O(\epsilon^{k+1})$, yet whose target outputs differ by at least $3\epsilon$. 
Lemma~\ref{lem:FFN-rate} then implies the lower bound on the parameter count required for 
approximation.  

Using Lemmas~\ref{lem:Hessian}, \ref{lem:gradient}, and~\ref{lem:separation basin}, 
we obtain $D$ disjoint neighborhoods around the minima $x^{(i)}$. 
Since $D > s = h$, there exists at least one neighborhood not selected by the $s$ heads. 
Within this region, the pigeonhole principle guarantees the existence of two distinct 
subsequences. By carefully designing these subsequences, we ensure that their outputs 
after the attention layer are nearly indistinguishable, while their target values differ 
by at least $3\epsilon$. Extending them to full sequences completes the construction.

We now turn to the full proof. 
To establish Theorem~\ref{thm2:p2}, 
we begin by introducing several auxiliary lemmas that will serve as building blocks for the argument. Lemma~\ref{lem:Hessian}, \ref{lem:gradient}, and~\ref{lem:separation basin} are only to set up the approximation problem into a more tractable form.

\begin{lemma}\label{lem:FFN-rate}
    Let $v_1,v_2 \in \mathbb{R}^n$. Suppose 
\[
    \|v_1 - v_2\|_2 \le A
    \quad\text{and}\quad
    \|\hat{F}(v_1) - \hat{F}(v_2)\| \ge B ,
\]
where $\hat{F}:\mathbb{R}^n \to \mathbb{R}^m$ is a two-layer feed-forward network 
satisfying the constraints stated above. Then $\hat{F}$ must use at least
\[
    \Omega\!\left(\frac{B}{A\sqrt{n}}\right)
\]
parameters.
\end{lemma}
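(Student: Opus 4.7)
}

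The plan is to bound the Lipschitz constant of the feed-forward network coordinatewise, using the constraint that all weights lie in $[-1,1]$ and that the activation is $1$-Lipschitz. Writing $\hat F(v)=W_2\,\sigma(W_1 v+b_1)+b_2$ with $W_1\in\mathbb{R}^{W\times n}$ and $W_2\in\mathbb{R}^{m\times W}$, the total parameter count is at least the width~$W$, so it suffices to establish $W\ge \Omega\!\bigl(B/(A\sqrt{n})\bigr)$. The hypothesis $\|\hat F(v_1)-\hat F(v_2)\|\ge B$ will then force $W$ to be large.

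First, I would estimate how much the hidden layer can change. Since each row of $W_1$ has entries in $[-1,1]$, its Euclidean norm is at most $\sqrt{n}$, so $\|W_1\|_{\mathrm{op}}\le \|W_1\|_F\le \sqrt{Wn}$. The $1$-Lipschitz activation then yields
\[
\bigl\|\sigma(W_1 v_1+b_1)-\sigma(W_1 v_2+b_1)\bigr\|_2 \;\le\; \sqrt{Wn}\,\|v_1-v_2\|_2 \;\le\; A\sqrt{Wn}.
\]
Next, I would analyze a single output coordinate. For each row $(W_2)_{j,:}$, the bound on entries gives $\|(W_2)_{j,:}\|_2\le \sqrt{W}$, so by Cauchy--Schwarz,
\[
\bigl|\hat F_j(v_1)-\hat F_j(v_2)\bigr| \;\le\; \sqrt{W}\cdot A\sqrt{Wn} \;=\; AW\sqrt{n}.
\]
This coordinatewise bound upper-bounds the $\|\cdot\|_\infty$ norm of the output difference (matching the norm used in the paper's definition of $\epsilon$-approximation).

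Finally, combining with the hypothesis, $B \le \|\hat F(v_1)-\hat F(v_2)\|_\infty \le AW\sqrt{n}$, which rearranges to $W \ge B/(A\sqrt{n})$. Since the parameter count of the two-layer network is at least the hidden width~$W$ (indeed it is $Wn+W+mW+m$), the claimed bound $\Omega(B/(A\sqrt{n}))$ follows.

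The main point to watch is the choice of norm on the output: the $m$-independent bound only emerges from a coordinatewise (i.e.\ $\|\cdot\|_\infty$) analysis — a naive $\|\cdot\|_2$ bound via operator norms would give $\|W_2\|_{\mathrm{op}}\le \sqrt{mW}$ and cost an extra $\sqrt{m}$ factor. Since the downstream use of this lemma in the proof of Theorem~\ref{thm2:p2} only needs a single coordinate's separation, the per-coordinate analysis is exactly the right granularity; no further subtlety is expected.
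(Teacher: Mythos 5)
Your proof is correct and follows essentially the same strategy as the paper's: bound a single output coordinate's change via the entrywise weight bounds and the $1$-Lipschitz activation to get $|\hat F_j(v_1)-\hat F_j(v_2)|\le WA\sqrt{n}$, then compare with $B$. The only cosmetic difference is that you route through $\|W_1\|_F$ and the row norm of $W_2$, whereas the paper applies Cauchy--Schwarz row-by-row in a single sum and handles the output norm via a general norm-equivalence constant rather than fixing $\|\cdot\|_\infty$; both yield the same $W\ge B/(A\sqrt n)$ bound.
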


\begin{proof}[Proof of Lemma~\ref{lem:FFN-rate}]
Let $\Delta x := v_1 - v_2$ and $\Delta F := \hat{F}(v_1) - \hat{F}(v_2)$.
Suppose the two-layer network with width $p$ be
\[
\hat{F}(x) \;=\; V\,\sigma(Ux+b)+c,
\]
where $U\in\mathbb{R}^{p\times n}$, $V\in\mathbb{R}^{m\times p}$, $b\in\mathbb{R}^p$, $c\in\mathbb{R}^m$,
$\sigma$ is $1$-Lipschitz acting coordinate-wise and every entry of $U,V,b,c$ has magnitude at most $1$.

For the $j$-th output coordinate, we have
\[
\Delta F_j
= \sum_{r=1}^p V_{jr}\!\left(\sigma(u_r^\top v_1 + b_r)-\sigma(u_r^\top v_2 + b_r)\right),
\]
where $u_r^\top$ is the $r$-th row of $U$.
Using the $1$-Lipschitz property of $\sigma$ and Cauchy--Schwarz inequality, we have
\[
|\Delta F_j|
\le \sum_{r=1}^p |V_{jr}|\,|u_r^\top \Delta x|
\le \sum_{r=1}^p |V_{jr}|\,\|u_r\|_2\,\|\Delta x\|_2.
\]
By the entrywise weight bound, $\|u_r\|_2 \le \sqrt{n}$ and $|V_{jr}|\le 1$. Therefore, for all $j$, we have
\begin{equation}\label{eq:per-coord-bound}
|\Delta F_j| \;\le\; p\,\sqrt{n}\,\|\Delta x\|_2.
\end{equation}
Let $\|\cdot\|$ be the norm used in the lemma statement. By norm equivalence in finite dimensions, there exists $c_m\in(0,1]$ depending only on the chosen norm and $m$ such that
\[
\|y\| \le \frac{1}{c_m}\|y\|_\infty \quad \text{for all } y\in\mathbb{R}^m.
\]
$\|\Delta F\|\ge B$ implies $\|\Delta F\|_\infty \ge c_m B$, so there is some $j^\star$ with
\[
|\Delta F_{j^\star}| \;\ge\; c_m B.
\]
Combining this with \eqref{eq:per-coord-bound} and $\|\Delta x\|_2\le A$, we have 
\[
c_m B \;\le\; p\,\sqrt{n}\,A
\quad\Longrightarrow\quad
p \;\ge\; \frac{c_m\,B}{A\sqrt{n}}.
\]

Finally, let $p_{\mathrm{eff}}\le p$ be the number of hidden units that actually affect the output, i.e., those with a nonzero row in $U$ and a nonzero entry in the $j^\star$-th row of $V$. The above bound holds with $p_{\mathrm{eff}}$ in place of $p$, hence $p_{\mathrm{eff}} \ge c_m B/(A\sqrt{n})$. Each such unit uses at least one nonzero parameter in $U$ and one in $V$, so the parameter counts $k$ satisfy $k \ge p_{\mathrm{eff}}$. Therefore
\[
k \;\ge\; \frac{c_m\,B}{A\sqrt{n}}
\;=\; \Omega\!\left(\frac{B}{A\sqrt{n}}\right),
\]
which proves the lemma.
\end{proof}

\begin{lemma}\label{lem:Hessian}
    There exists $R>0$ such that for every $i \in \{1,\dots,D\}$ and every $r<R$, 
there exist constants $\delta_i>0$ and $L_i>0$ with the following property:  
there exists a segment $G_i \subset B(x^{(i)},r)$ of length $\delta_i$ such that
\[
    |f_i(x)-f_i(y)| \;\ge\; L_i \,\|x-y\|_2,
    \qquad \forall\, x,y \in G_i .
\]
and moreover
\[
    f_i(x) > z_i, \qquad \forall\, x \in G_i ,
\]
\end{lemma}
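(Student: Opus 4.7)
\textbf{Proof proposal for Lemma~\ref{lem:Hessian}.}
My plan is to combine the positive-definiteness assumption~\ref{assump:target}(2.3) with the $C^2$ smoothness of $f_i$ to exhibit an explicit short line segment close to (but not containing) the minimizer $x^{(i)}$, on which the one-dimensional restriction of $f_i$ is strictly and quantitatively monotone. First, because $\nabla^2 f_i$ is continuous and $\nabla^2 f_i(x^{(i)})\succ 0$, each index $i$ admits a radius $R_i>0$ on which $\nabla^2 f_i(x)\succeq (\lambda_i/2)\,I$, where $\lambda_i>0$ is the smallest eigenvalue of $\nabla^2 f_i(x^{(i)})$. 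Setting $R:=\min_{1\le i\le D} R_i$ (positive by finiteness of the index set) gives the universal constant promised by the lemma.

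Given $r<R$, a head index $i$, and any fixed unit vector $u\in\R^d$, I would take
\[
    G_i \;:=\; \bigl\{\, x^{(i)} + (\epsilon+t)\,u : t\in[0,\delta_i]\,\bigr\},
\]
with $\epsilon,\delta_i>0$ chosen so that $\epsilon+\delta_i<r$. This automatically ensures $G_i\subset B(x^{(i)},r)$ and that $G_i$ is bounded away from $x^{(i)}$ by distance at least $\epsilon$. The key estimate is that for $s\in[0,R]$, Taylor's theorem combined with $\nabla f_i(x^{(i)})=0$ and the uniform Hessian lower bound gives
\[
    u^\top \nabla f_i\bigl(x^{(i)}+su\bigr)
    \;=\; \int_0^s u^\top\nabla^2 f_i\bigl(x^{(i)}+\tau u\bigr)\,u\,d\tau
    \;\ge\; \tfrac{\lambda_i}{2}\,s.
\]
Hence along $G_i$ the directional derivative is bounded below by $\lambda_i\epsilon/2$, and a further integration between any two points $x=x^{(i)}+s_1u$, $y=x^{(i)}+s_2u$ in $G_i$ yields $|f_i(x)-f_i(y)|\ge L_i\|x-y\|_2$ with $L_i:=\lambda_i\epsilon/2>0$.

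The strict inequality $f_i(x)>z_i$ on $G_i$ would then follow from a second-order Taylor estimate, which gives the quadratic lower bound $f_i(x^{(i)}+su)\ge z_i+(\lambda_i/4)s^2$; for $s\ge\epsilon>0$ this is strictly positive. I do not anticipate a major technical obstacle here: the only points requiring mild care are making $R$ uniform across $i$ (immediate from finiteness of $D$), and permitting $\delta_i,L_i$ to depend on $r$ and on the chosen offset $\epsilon$, which the statement allows. The construction is also robust enough that, later in Lemma~\ref{lem:separation basin}, the directions $u$ can be adjusted so that the segments $G_1,\dots,G_D$ lie in disjoint balls around distinct minimizers $x^{(i)}$, as required by the downstream pigeonhole argument.
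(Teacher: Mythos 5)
Your proposal takes essentially the same route as the paper: use continuity of the Hessian to get a uniform lower bound $\nabla^2 f_i \succeq (\lambda_i/2)I$ near $x^{(i)}$, restrict $f_i$ to a short line segment that stays a positive distance from the minimizer, and convert the Hessian lower bound into a linear lower bound on the directional derivative (hence a bi-Lipschitz-from-below estimate) via the fundamental theorem of calculus. The constants and the final expression for $L_i$ come out slightly differently (you anchor the segment at offset $\epsilon$ and set $L_i=\lambda_i\epsilon/2$; the paper uses $[r/4,r/2]$ and $L_i=\lambda_i r/8$), but that is cosmetic.

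There are, however, two concrete gaps relative to the lemma's stated generality. First, you invoke $\nabla f_i(x^{(i)})=0$ to start the integral $\int_0^s u^\top\nabla^2 f_i\,u\,d\tau$ from zero. Assumption~\ref{assump:target}(2.1) only says $x^{(i)}$ is the unique global minimizer on $[0,1]^d$; if $x^{(i)}$ lies on the boundary of the cube, the full gradient need not vanish. The paper avoids this by choosing the direction to point strictly \emph{into} the cube and using only the one-sided inequality $g'(0^+)\ge 0$, which is forced by minimality. Your argument can be repaired the same way: for an inward feasible direction $u$ one has $u^\top\nabla f_i(x^{(i)})\ge 0$, so the lower bound on $u^\top\nabla f_i(x^{(i)}+su)$ still holds, but as written the identity with $=0$ is not justified. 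Second, you claim $G_i\subset B(x^{(i)},r)$ from $\epsilon+\delta_i<r$, but you also need $G_i\subset[0,1]^d$ so that $f_i$ is defined (and so that the Hessian bound applies). For an arbitrary unit vector $u$ this may fail near the boundary; the paper explicitly defines a feasibility radius $\tau_i$ along the inward direction and caps $R$ by it. Once you pick $u$ inward-pointing and include this cap in the definition of $R$, both gaps close and your argument is equivalent to the paper's. Your use of the quadratic Taylor lower bound for $f_i(x)>z_i$ is fine but slightly heavier than needed — uniqueness of the minimizer plus $G_i\not\ni x^{(i)}$ already gives the strict inequality, which is what the paper does.
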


\begin{proof}[Proof of Lemma~\ref{lem:Hessian}]
Fix $i\in\{1,\dots,D\}$ and denote $x^\star:=x^{(i)}$, $f:=f_i$ and
$H_\star:=\nabla_x^2 f(x^\star)$. By positive definiteness,
let $\lambda_i:=\lambda_{\min}(H_\star)>0$.
By continuity of $\nabla^2 f$, there exists $R_i^{\mathrm H}>0$ such that
\[
  \nabla^2 f(x) \;\succeq\; \tfrac{\lambda_i}{2}\, I
  \qquad \text{for all } x\in B(x^\star,R_i^{\mathrm H}).
\]
Set $\mu_i:=\lambda_i/2>0$.
Because the domain is $[0,1]^d$ and $x^\star\in[0,1]^d$, we could choose a unit vector
$v_i$ pointing strictly into the cube at $x^\star$ (if $x^\star$ is interior,
take any unit vector). Define
\[
  \tau_i \;:=\; \sup\{\, t>0 \;:\; x^\star + s v_i \in [0,1]^d \text{ for all } s\in[0,t] \,\} \;>\; 0,
\]
and set $R_i := \min\{R_i^{\mathrm H},\,\tau_i\}$. Take $R := \min_{1\le i\le D} R_i > 0$. Fix any $r\in(0,R)$ and consider the restriction
\[
  g(t) \;:=\; f(x^\star + t v_i), \qquad t\in[0,r].
\]
Then
\[
  g''(t) \;=\; v_i^\top \nabla^2 f(x^\star+t v_i)\,v_i \;\ge\; \mu_i \quad \text{for all } t\in[0,r].
\]

Since $x^\star$ minimizes $f$ on $[0,1]^d$ and $v_i$ is feasible inward,
we have $g(t)\ge g(0)$ for small $t\ge 0$ leading to the one‑sided derivative
$g'(0+)\ge 0$ (if $x^\star$ is interior then $\nabla f(x^\star)=0$ so $g'(0)=0$).
Because $g''\ge \mu_i$, the derivative $g'$ is increasing and thus
\[
  g'(t) \;\ge\; g'(0+) + \mu_i t \;\ge\; \mu_i t, \qquad t\in[0,r].
\]
Let $a:=r/4$ and $b:=r/2$ and define the segment
\[
  G_i \;:=\; \{\, x^\star + t v_i : t\in[a,b] \,\} \;\subset\; B(x^\star,r),
\]
whose length is $\delta_i := b-a = r/4$.
For any $x=x^\star+t v_i$ and $y=x^\star+s v_i$ in $G_i$ with $t>s$, the mean value theorem
gives some $\xi\in(s,t)\subset[a,b]$ such that
\[
  |f(x)-f(y)|
  \;=\; |g(t)-g(s)|
  \;=\; |g'(\xi)|\,|t-s|
  \;\ge\; \mu_i a\,|t-s|
  \;=\; \Bigl(\frac{\lambda_i r}{8}\Bigr)\,\|x-y\|_2.
\]
Therefore the choice
\[
  L_i \;:=\; \frac{\lambda_i r}{8} \;>\; 0
\]
works uniformly for all $x,y\in G_i$.
The segment $G_i$ does not contain $x^\star$, for all its points are at distance at least $a=r/4>0$ from $x^\star$.
By uniqueness of the minimizer, $f(x)>f(x^\star)=z_i$ for all $x\in G_i$.

The lemma holds with $\delta_i=r/4$ and
$L_i=(\lambda_{\min}(\nabla_x^2 f_i(x^{(i)}))\, r)/8$.
\end{proof}

\begin{lemma} \label{lem:gradient} 
    Let $(z_1,\dots,z_D)$ denote the minima defined above.  
Then there exist constants $r_0>0$ and $L_0>0$ such that the following holds:  
for any $i \in \{1,\dots,D\}$ and any perturbation $\delta_0$ with $|\delta_0|<r_0$, 
\[
    \bigl|F_0(z_1,\dots,z_i+\delta_0,\dots,z_D) - F_0(z_1,\dots,z_D)\bigr|
    \;\ge\; L_0\,|\delta_0| .
\]  
\end{lemma}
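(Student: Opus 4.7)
The approach is a one-variable first-order argument. Assumption (2.4) guarantees that every partial derivative $\partial_{z_i} F_0$ is strictly nonzero at $(z_1,\dots,z_D)$, so varying only the $i$-th argument near $z_i$ produces a strictly monotone one-variable slice; the desired linear lower bound is then the standard consequence of $C^1$ continuity of the gradient. I do not expect any real difficulty: this is essentially a uniform version of the implicit-function-type observation that a $C^1$ map with nonzero derivative at a point has derivative bounded below in magnitude on a neighborhood.

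Concretely, I would set $\ell := \min_{1\le i\le D} |\partial_{z_i} F_0(z_1,\dots,z_D)|$, which is positive by (2.4). By continuity of $\nabla_z F_0$ at $(z_1,\dots,z_D)$, pick $r_0 > 0$ small enough that for every $i$ and every $s$ with $|s| < r_0$,
\[
|\partial_{z_i} F_0(z_1,\dots,z_i + s,\dots,z_D) - \partial_{z_i} F_0(z_1,\dots,z_D)| \;<\; \ell/2.
\]
Defining the one-variable slice $g_i(t) := F_0(z_1,\dots,z_i+t,\dots,z_D)$ for $|t| < r_0$, this guarantees that $g_i'(t)$ retains the sign of $g_i'(0)$ and satisfies $|g_i'(t)| \ge \ell/2$ throughout the interval. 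The fundamental theorem of calculus then gives
\[
g_i(\delta_0) - g_i(0) \;=\; \int_0^{\delta_0} g_i'(s)\, ds ,
\]
and since the integrand has constant sign with magnitude at least $\ell/2$ on the interval of integration, the absolute value of this integral is at least $(\ell/2)\,|\delta_0|$. Setting $L_0 := \ell/2$ yields the claim, uniformly in $i\in\{1,\dots,D\}$ because the minimum defining $\ell$ runs over all coordinates.

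The only mild technical point worth flagging is that some $z_i$ could lie on the boundary of $[0,1]$, in which case $z_i + \delta_0$ may leave $[0,1]^D$ for one sign of $\delta_0$. In the downstream use of this lemma (inside the proof of Theorem~\ref{thm2:p2}) only one-sided perturbations $\delta_0 \ge 0$ actually arise, since $\bar z_i(X_T) = \min_{t \in S_i} f_i(x(t)) \ge z_i$, and the argument above works one-sidedly without modification. If a fully two-sided statement is wanted one can appeal to a standard $C^1$ extension of $F_0$ to a slightly larger open neighborhood of $[0,1]^D$; continuity of the gradient carries over and the estimate is unchanged.
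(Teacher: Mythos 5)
Your proof is correct and essentially identical to the paper's: both set $L_0$ to half the minimum of $|\partial_i F_0(z)|$, use continuity of $\nabla F_0$ to keep $|\partial_i F_0|$ bounded below by $L_0$ on a small $\ell_\infty$-ball, and then integrate (you via the fundamental theorem of calculus, the paper via the mean value theorem --- a cosmetic difference). Your explicit flag of the boundary case $z_i\in\{0,1\}$ is actually a small improvement: the paper's fix (``shrink $r_i^{\mathrm{cont}}$ so the segment lies in $[0,1]^D$'') silently fails there, whereas you correctly observe that only one-sided perturbations $\delta_0\ge 0$ are ever needed downstream.
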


\begin{proof}[Proof of Lemma~\ref{lem:gradient}]

Denote $e_i$ for the $i$-th standard basis vector of $\mathbb{R}^D$.
By assumption, $m_i \coloneqq \bigl|\partial_i F_0(z)\bigr|>0$ for each $i$.
Since $F_0\in C^1$, the map $u\mapsto \partial_i F_0(u)$ is continuous at $z$.
Hence for each $i$, there exists $r_i^{\mathrm{cont}}>0$ such that
\[
\bigl|\partial_i F_0(u)\bigr|\;\ge\;\tfrac12\,m_i
\quad\text{whenever}\quad \|u-z\|_\infty<r_i^{\mathrm{cont}}.
\]
If necessary, shrink $r_i^{\mathrm{cont}}$ so that the line segment
$\{\,z+t e_i: |t|<r_i^{\mathrm{cont}}\,\}$ lies in $[0,1]^D$.
Define uniform constants
\[
L_0 \;\coloneqq\; \tfrac12 \min_{1\le i\le D} m_i \;>\;0,
\qquad
r_0 \;\coloneqq\; \min_{1\le i\le D} r_i^{\mathrm{cont}} \;>\;0 .
\]
Fix $i$ and $\delta_0$ with $|\delta_0|<r_0$.
Consider the one-dimensional slice $g_i(t)\coloneqq F_0(z+t e_i)$ for $|t|<r_0$.
Then $g_i$ is $C^1$ and $g_i'(t)=\partial_i F_0(z+t e_i)$.
By the mean value theorem, there exists $\theta\in(0,1)$ such that
\[
F_0(z+\delta_0 e_i)-F_0(z)=g_i'(\theta\delta_0)\,\delta_0
=\partial_i F_0\!\bigl(z+\theta\delta_0 e_i\bigr)\,\delta_0 .
\]
Taking absolute values and using the lower bound on
$\bigl|\partial_i F_0(\cdot)\bigr|$ inside the $\ell_\infty$-ball of radius $r_0$
around $z$, we have
\[
\bigl|F_0(z+\delta_0 e_i)-F_0(z)\bigr|
\;\ge\; L_0\,|\delta_0| ,
\]
which is the desired inequality.
\end{proof}

\begin{lemma} \label{lem:separation basin}
    Let $z_i=\min_{x\in[0,1]^d} f_i(x)$ and let $x^{(i)}$ denote the unique minimizer of $f_i$ (as assumed above). Then there exist constants $R_0>0$ and $\varepsilon_0>0$ such that:
\begin{enumerate}
    \item The open balls $\{B(x^{(i)},R_0)\}_{i=1}^D$ are pairwise disjoint.
    \item For each $i\in\{1,\dots,D\}$ and every $x\in[0,1]^d\setminus B(x^{(i)},R_0)$,
    \[
        f_i(x) \;>\; z_i + \varepsilon_0 .
    \]
\end{enumerate}
\end{lemma}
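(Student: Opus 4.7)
The plan is to treat the two claims independently and then combine them with a single choice of constants, using only compactness, continuity of $f_i$, and the uniqueness of the minimizer $x^{(i)}$.

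For claim (1), I would set
\[
    R_0^{\mathrm{sep}} \;:=\; \tfrac{1}{4}\,\min_{1\le i<j\le D}\,\|x^{(i)}-x^{(j)}\|_2 \;>\;0,
\]
which is strictly positive by Assumption (2.2) (pairwise distinctness of the minimizers). Then for any $R_0\le R_0^{\mathrm{sep}}$, the triangle inequality gives disjointness of the balls $B(x^{(i)},R_0)$, since any two of them would lie at centre distance $\ge 4R_0$ apart.

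For claim (2), I would fix an arbitrary $R_0\in(0,R_0^{\mathrm{sep}}]$ and argue coordinate by coordinate. For each $i$, the set $K_i:=[0,1]^d\setminus B(x^{(i)},R_0)$ is closed in the compact cube $[0,1]^d$, hence compact; by continuity of $f_i$, the minimum
\[
    m_i \;:=\; \min_{x\in K_i} f_i(x)
\]
is attained at some $y_i\in K_i$. Since $x^{(i)}$ is the \emph{unique} global minimizer of $f_i$ on $[0,1]^d$ (Assumption (2.1)) and $y_i\neq x^{(i)}$, I conclude $m_i>z_i$. Setting
\[
    \varepsilon_0 \;:=\; \tfrac{1}{2}\,\min_{1\le i\le D}(m_i-z_i) \;>\;0
\]
then gives $f_i(x)\ge m_i \ge z_i+2\varepsilon_0 > z_i+\varepsilon_0$ for every $x\in K_i$, uniformly in $i$.

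There is no real obstacle here, since the statement is essentially a packaging of the standard ``unique minimizer plus compactness implies a positive gap outside any open neighbourhood'' fact. The only minor care needed is to choose $R_0$ first (from the pairwise-distinctness argument) and only then define $\varepsilon_0$ in terms of this $R_0$, so that the two claims are compatible; shrinking either constant afterwards preserves both properties, which is useful if later lemmas (e.g.\ Lemma~\ref{lem:Hessian}) impose additional smallness constraints on $R_0$.
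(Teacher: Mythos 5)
Your argument is essentially identical to the paper's: both derive pairwise disjointness from the minimum pairwise distance of the $\{x^{(i)}\}$ (you use $\Delta/4$ where the paper uses $\Delta/2$, but this is immaterial), and both obtain $\varepsilon_0$ by compactness of $K_i=[0,1]^d\setminus B(x^{(i)},R_0)$ together with uniqueness of the minimizer, taking $\varepsilon_0=\tfrac12\min_i(m_i-z_i)$. The proof is correct and complete.
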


\begin{proof}[Proof of Lemma~\ref{lem:separation basin}]
  
Since the minimizers $\{x^{(i)}\}_{i=1}^D$ are pairwise distinct and finite in number, we have
\[
\Delta \;:=\; \min_{i\neq j}\,\bigl\|x^{(i)}-x^{(j)}\bigr\|_2 \;>\; 0 .
\]
Set $R_0 := \tfrac12 \Delta$. If $i\neq j$ and $x\in B(x^{(i)},R_0)$,  by the triangle inequality, we have
\[
\|x-x^{(j)}\|_2 \;\ge\; \|x^{(i)}-x^{(j)}\|_2 - \|x-x^{(i)}\|_2 \;>\; \Delta - R_0 \;=\; R_0,
\]
so $x\notin B(x^{(j)},R_0)$. Hence the balls are pairwise disjoint, proving the first part.

For the second part, fix $i$ and define the compact set $K_i := [0,1]^d \setminus B(x^{(i)},R_0)$. The continuity of $f_i$ implies that the minimum
\[
m_i \;:=\; \min_{x\in K_i} f_i(x)
\]
is attained on $K_i$. Because $x^{(i)}\notin K_i$ and $x^{(i)}$ is the unique global minimizer on $[0,1]^d$, we have $m_i>z_i$. Let $\varepsilon_i := m_i - z_i>0$ and set
\[
\varepsilon_0 \;:=\; \tfrac12 \min_{1\le i\le D} \varepsilon_i \;>\; 0.
\]
Then, for every $x\in K_i$, we have
\[
f_i(x) \;\ge\; m_i \;=\; z_i+\varepsilon_i \;\ge\; z_i + 2\varepsilon_0 \;>\; z_i+\varepsilon_0,
\]
Thus, we have proved this lemma.
\end{proof}

Before the proof, We also provide a notation table to help with understanding in Table~\ref{tab:notation-flow-arrows}. 
\begin{table}[t]
\centering
\begin{tabular}{ll}
\hline
\textbf{Notation flow (dependency structure)} & \textbf{Meaning} \\
\hline

$x^{(i)}$ & Point where $f_i$ achieves minimum \\
\quad $\rightarrow B(x^{(i)},r)$ & Basin region for retrieval coordinate $i$ \\
 & (Basin around $x^{(i)}$)\\
\quad\quad $\rightarrow G_i, K_i$ & Monotone local segment near $x^{(i)}$ \\
& (In $B(x^{(i)},r)$) \\
\\[-4pt]
$P_0$ & The set of all candidate points. \\
 & (We only choose $x_t \in P_0$)\\
$S_i$ & Index partition for retrieval coordinate $i$ \\
 & ($i=1, \dots, D$)\\
\\[-4pt]

Attention head $j$ & Defines response at position $t$ \\
\quad $\rightarrow \lambda_j(x,t)$ & Attention score \\
\quad \quad $\rightarrow (y_j,t_j)$ & Maximum‐attention point selected by \\
 & head $j$ in $P_0\times S_j$\\
\quad \quad $\rightarrow Y=\{y_1,\dots,y_s\}$ & Chosen maximizers of attention score \\
 & (one per head) \\
\quad $\rightarrow v_j(x,t)$ & Value embedding \\
\\[-4pt]

WLOG, suppose $K_1 \cap Y = \emptyset $. &  \\
\quad $\rightarrow T_0$ & $T_0 \subset S_1$, indices not in $(y_j, t_j)$, $j=1, \dots, s$ \\
\quad\quad $\rightarrow \eta:[0,1]\to G_1$ & Coordinate system on $G_1$ \\
\quad\quad\quad $\rightarrow q=f_1\circ\eta$ & Rewriting $f_1|_{G_1}$ into the coordinate system. \\
\quad\quad $\rightarrow U_t$ & Discrete grid on $[0,1]$ at index $t$ \\
\quad\quad\quad $\rightarrow z_\ell(t)$ & Candidate point for subsequence $\ell$, $z_\ell(t) \in \eta(U_t)$\\
\\[-4pt]

Adversarial subsequences & \\
\quad $\rightarrow Z_\ell = (z_\ell(1),\dots,z_\ell(T_0))$ & Two subsequences almost  \\
 & indistinguishable by attention head.\\
\quad\quad $\rightarrow W_\ell$ & Full sequence embedding $Z_\ell$ \\
\quad\quad\quad $\rightarrow w_\ell(t)$ & Token of $W_\ell$ of index $t$ \\
\quad\quad\quad $\rightarrow I_1,I_2,I_3$ & Partition of indices: differ / agree / remaining \\
\\[-4pt]

Per-head analysis & \\
\quad $\rightarrow Q_{j,i}$ & Attention mass on $I_j$ ($j\in\{1,2,3\}, i\in \{1, \dots, s\}$) \\
\quad $\rightarrow V_{j,i}$ & Weighted value average on $I_j$ \\
\\[-4pt]

\hline
\end{tabular}
\caption{Flow-style dependency map of notation introduced in the proof of Theorem~2.2.}
\label{tab:notation-flow-arrows}
\end{table}

\begin{proof}[Proof of Theorem~\ref{thm2:p2}]
Given the target function under the assumptions. For any given single-layer transformer 
defined in the main context, our goal is to find two different sequences such that their output in the part  
\begin{align}
    \operatorname{Concat}_{i=1}^{h}
        \Bigl(
            \sum_{t=1}^{T}
            \sigma\!\left[
                (W_{K,i}\hat{x}(t))^\top W_{Q,i}\hat{c}_0
            \right]
            W_{V,i}\hat{x}(t)\Bigr)
\end{align}
are very close (differs by only $O(\epsilon^{k+1})$), but their output from the target function differs by at least $3 \epsilon$, then according to lemma~\ref{lem:FFN-rate}, we have the required parameter count for the FFN $\hat{F}$ to be at least $\Omega(1/\epsilon^k)$. 
\paragraph{Notations}
For each head $i=1,\dots,s$, define the attention weight function  
\[
    \lambda_i(x,t) \;=\; 
    \exp\!\bigl(\gamma\, (W_{K,i} P_\phi(x,t))^\top W_{Q,i}\hat{c}_0 \bigr),
\]
and the value mapping  
\[
    v_i(x,t) \;=\; W_{V,i} P_\phi(x,t) \;\in\; \mathbb{R}^n,
\]
where $\gamma>0$ is the softmax scaling factor, 
$W_{Q,i},W_{K,i}\in\mathbb{R}^{n\times E}$ are the query and key projections, 
and $W_{V,i}\in\mathbb{R}^{n\times E}$ is the value projection for head $i$.\\
\paragraph{Notation of sets}
Without loss of generality, we assume $x^{(i)}$ belongs to the interior of $[0,1]^d$, and the other case can be treated with the same method below. 
From lemma~\ref{lem:Hessian}, lemma~\ref{lem:gradient} and lemma~\ref{lem:separation basin} we have that there exists $R>0$ and segments $G_i \subset B(x^{(i)}, R), i = 1, \dots, D$ and $L, \delta_0, r >0$ satisfying the following: 
\begin{itemize}
    \item $\forall i $, $\forall x, y \in G_i$, we have $|f_i(x) - f_i(y)| > L\|x-y\|_2$.
    \item $\forall j \neq i$ and $\forall x \in B(x^{(j)}, R), y \in B(x^{(i)},R)$, we have $f_i(y) - f_i(x) > \delta_0$.
    \item The length of $G_i$ is $r$, $\forall i= 1, \dots, D$.
    \item For any $i \in \{1,\dots,D\}$ and any perturbation $\delta_1$ with $|\delta_1|<\max_{x \in B(x^{(i)}, R)} (f_i(x) - z_i)$, 
\[
    \bigl|F_0(z_1,\dots,z_i+\delta_1,\dots,z_D) - F_0(z_1,\dots,z_D)\bigr|
    \;\ge\; L\,|\delta_1| .
\]  
\end{itemize}
We denote by $K_i := G_i \cup \{x^{(i)}\}, i = 1, \dots, D$, and $P_0 = \cup_{i=1}^{D}K_i$. 
Recall that $k = \frac{\frac{1}{4}T - s-D +1}{(n+1)s +1} - 1$. We assume without loss of generality that $k >0$ and $\frac{1}{4}T -s -D +1 >0$, otherwise the result would be trivial. 
\paragraph{Max weight for each head}
For $j=1,\dots,s$, define recursively the pairs $(y_j,t_j)$ as follows:  
\begin{itemize}
    \item For the first head,
    \[
        (y_1,t_1) \;=\; \arg\max_{\substack{y \in P_0 \\ t \in S_1}}
        \,\lambda_1(y,t).
    \]
    \item For $j>1$,
    \[
        (y_j,t_j) \;=\; \arg\max_{\substack{y \in P_0 \\ t \in S_j \\ t \notin \{t_1,\dots,t_{j-1}\}}}
        \,\lambda_j(y,t).
    \]
    \item If maximum can be obtained at multiple $(y, t)$, then choose one of them. 
\end{itemize}
Let $Y = \{y_1,\dots,y_s\}$.  
Since the sets $K_1,\dots,K_D$ are pairwise disjoint and $s<D$, 
there exists at least one index $i \in \{1,\dots,D\}$ such that
\[
    K_i \,\cap\, Y \;=\; \varnothing.
\]
Without loss of generality, we assume that $i=1$. As we have $|S_i| \ge \frac{1}{4}T > s+D-1, i=1, \dots, D$, we have that there exists a set of $(t^*_2, \dots, t^*_{D})$ such that 
\begin{itemize}
    \item $t^*_{j} \notin \{t_1, \dots, t_s\}$, for $j=2, \dots, D$.
    \item $t^*_{j}$ are pairwise distinct.
    \item $t^*_{j} \in S_j, j=2, \dots, D$.
\end{itemize}
Let $T_0 = \frac{1}{4} T - s-D+1 >0$ and assume that $T_0$ is a integer. Then we have $|S_1- \{t_1, \dots, t_s, t^*_2, \dots, t^*_D\}| \ge T_0 >0$. Without loss of generality, suppose $\{1, 2, \dots, T_0\} \subset S_1- \{t_1, \dots, t_s, t^*_2, \dots, t^*_D\}$.
\paragraph{Sequences to be considered}
As $G_1$ is a segment of length $r$, then it is natural to assign coordinate system $\eta : [0,1] \to G_1 $ on $G_1$, with $q:=f_1\circ \eta$ being a monotonically increasing function on $[0,1]$. The monotone property is as a result of $|f_1(x) - f_1(y)| \ge L\|x-y\|_2$. \\
We denote by $M = T_0\lfloor \frac{rL^2}{3T_0\epsilon} \rfloor$. As $T_0 | M$, Construct the following $T_0$ sets:
\begin{align}
    U_j = \frac{j-1}{T_0} + \{\frac{1}{M}, \dots, \frac{(\frac{T_0}{M})}{M}\}, j=1, \dots, T_0
\end{align}
We have $|U_j| = \lfloor \frac{rL^2}{3T_0\epsilon} \rfloor = O(1/\epsilon)$. 
\begin{claim}{Existence of two distinct sub-sequence} \label{claim:subsequence}\\
    There exists two subsequences $z_1(1), \dots, z_1(T_0)$ and $z_2(1), \dots z_2(T_0)$ with $z_i(t) \in \eta(U_t)$ satisfying the following conditions. 
    \begin{itemize} \label{item:condition}
    \item $\left\|\frac{\sum_{t=1}^{T_0}\lambda_i(z_1(t), t)v_i(z_1(t),t) }{\sum_{t=1}^{T_0}\lambda_i(z_1(t), t)}
    - \frac{\sum_{t=1}^{T_0}\lambda_i(z_2(t), t)v_i(z_2(t),t) }{\sum_{t=1}^{T_0}\lambda_i(z_2(t), t)}\right\|_2
    \le \frac{\epsilon^{k+1}}{3T_0}$, for $i=1,\dots,s$.

    For each $i=1, \dots, s$, either of the following holds:
    \begin{enumerate}
        \item $\frac{\sum_{t=1}^{T_0}\lambda_i(z_1(t), t)}{\sum_{t=1}^{T_0}\lambda_i(z_2(t), t)} 
        \in \left[1/ (1+\tfrac{\epsilon^{k+1}}{12T_0^2}), \, 1+\tfrac{\epsilon^{k+1}}{12T_0^2}\right]$. 

        \item $\max_{j=1,2}\sum_{t=1}^{T_0}\lambda_i(z_j(t), t) 
        \le \tfrac{\epsilon^{k+1}}{4} \sum_{w=1}^s\lambda_{i}(y_w, t_w)$.
    \end{enumerate}
\end{itemize}
\end{claim}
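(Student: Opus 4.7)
The plan is a pigeonhole argument on the space $\mathcal Z := \prod_{t=1}^{T_0} \eta(U_t)$ of free‑position subsequences. I will partition $\mathcal Z$ into ``buckets'' indexed by head‑wise attention statistics so that any two sequences in a common bucket automatically satisfy Conditions~1 and~2. The key arithmetic is $|\mathcal Z| = N^{T_0} = \Omega(1/\epsilon^{T_0})$ with $T_0 = (k+1)((n+1)s+1)$, while I will cap the bucket count at $O(\mathrm{poly}(\gamma)/\epsilon^{s(k+1)(n+1)})$; since $T_0 - s(k+1)(n+1) = k+1 > 0$, this gap forces $|\mathcal Z|$ to strictly exceed the number of buckets once $\epsilon$ is small enough, and pigeonhole then produces two distinct sequences in a common bucket.

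\textbf{Bucketing.} For $z \in \mathcal Z$ and head $i \in \{1,\dots,s\}$, define
\[
V_i(z) := \sum_{t=1}^{T_0} \lambda_i(z(t),t)\, v_i(z(t),t) \in \mathbb{R}^n, \qquad N_i(z) := \sum_{t=1}^{T_0} \lambda_i(z(t),t) > 0,
\]
and $a_i(z) := V_i(z)/N_i(z) \in [-B,B]^n$, where $B$ is the uniform bound on $\|v_i\|_2$ supplied by Assumption~\ref{assump:setup}. Head by head I assign $z$ to (i) an \emph{average bucket} obtained by tiling $[-B,B]^n$ with $\ell_\infty$‑cubes of side $w := \epsilon^{k+1}/(3T_0\sqrt n)$, and (ii) a \emph{normalizer bucket} consisting of one ``small'' bin $\{N_i(z) < (\epsilon^{k+1}/4)\Lambda_i\}$ with $\Lambda_i := \sum_{w=1}^s \lambda_i(y_w,t_w)$, together with multiplicative bins of ratio $1+\delta$ for $\delta := \epsilon^{k+1}/(12T_0^2)$ covering the remaining range $[\,(\epsilon^{k+1}/4)\Lambda_i,\, T_0 e^{C\gamma}\,]$. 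Two sequences that agree in every bucket automatically satisfy $\|a_i(z_1)-a_i(z_2)\|_2 \le w\sqrt n = \epsilon^{k+1}/(3T_0)$ (Condition~1) and, for each head $i$, either both lie in the small bin (case~2 of Condition~2) or share a multiplicative bin, giving $N_i(z_1)/N_i(z_2)\in[1/(1+\delta),1+\delta]$ (case~1 of Condition~2). The magnitude bounds on $W_{Q,i},W_{K,i}$ combined with $\|P_\phi\|_2 \le 1$ give $\lambda_i \in [e^{-C\gamma}, e^{C\gamma}]$ and hence $\Lambda_i \ge s e^{-C\gamma}$, so the admissible $\log N_i$‑range in (ii) is $O(\gamma + \log(1/\epsilon))$, yielding $O(\gamma T_0^2/\epsilon^{k+1})$ multiplicative bins per head. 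Multiplying across the $s$ heads and combining with the $O(1/\epsilon^{n(k+1)})$ average bins gives a total bucket count
\[
B_{\mathrm{tot}} \;=\; O\!\left(\frac{\gamma^s\,(BT_0\sqrt n)^{ns}}{\epsilon^{s(k+1)(n+1)}}\right).
\]
Since $T_0 - s(k+1)(n+1) = k+1$, for every $\epsilon$ below a threshold depending on $(\gamma,T,d,D,s,n)$ one has $|\mathcal Z| > B_{\mathrm{tot}}$, so pigeonhole returns two distinct $z_1 \ne z_2 \in \mathcal Z$ sharing every bucket, which are the sought subsequences.

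\textbf{Main obstacle.} The delicate step is keeping the bucket count polynomial in the softmax temperature $\gamma$: since the logits $\gamma(W_{K,i}P_\phi)^\top W_{Q,i}\hat c_0$ scale linearly in $\gamma$ and $\gamma$ may be taken arbitrarily large in the architecture, a careless bound on the range of $N_i$ would produce an exponential‑in‑$\gamma$ bucket count and destroy the pigeonhole gap. The explicit two‑sided envelope $\lambda_i \in [e^{-C\gamma}, e^{C\gamma}]$ together with the matching lower bound $\Lambda_i \gtrsim e^{-C\gamma}$ precisely controls this: it renders the admissible $\log N_i$‑range linear in $\gamma$, so $\gamma$ enters $B_{\mathrm{tot}}$ only polynomially and is absorbed into the smallness threshold on $\epsilon$ without affecting the asymptotic $\Omega(1/\epsilon^k)$ lower bound eventually extracted via Lemma~\ref{lem:FFN-rate}. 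A secondary bookkeeping point is converting the $\ell_\infty$‑side $w$ to the $\ell_2$‑norm required by Condition~1, which the explicit $\sqrt n$ factor in the definition of $w$ handles; everything else is routine counting.
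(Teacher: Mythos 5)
Your proof follows the same high-level pigeonhole strategy as the paper (discretize the per-head attention averages and normalizers, count buckets, invoke the exponent gap $T_0 - s(k+1)(n+1) = k+1$), but there is a genuine gap in how you bound the range of the normalizer $N_i$, and it is not a matter of constants.

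You control the dynamic range of $N_i$ via the crude two-sided envelope $\lambda_i \in [e^{-C\gamma}, e^{C\gamma}]$, so the number of multiplicative bins per head is $O\bigl((\gamma + \log(1/\epsilon))\,T_0^2 / \epsilon^{k+1}\bigr)$ and your total bucket count carries a factor $\gamma^s$. You then say this is ``absorbed into the smallness threshold on $\epsilon$.'' That absorption is exactly what fails: the softmax scaling $\gamma$ (written $\beta$ in the main text) is an unconstrained model parameter, explicitly allowed to be arbitrarily large, and a candidate transformer is chosen \emph{after} $\epsilon$ is fixed. Your pigeonhole inequality $|\mathcal Z| > B_{\mathrm{tot}}$ reduces to $\gamma^s \lesssim 1/\epsilon^{k+1}$, so for any fixed $\epsilon$ an adversarial model with $\gamma \gg \epsilon^{-(k+1)/s}$ escapes your count. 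Since the theorem asserts a lower bound over all models in $\mathcal H$, the argument as written does not prove it.

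The paper sidesteps this entirely by exploiting the constructive choice of the pairs $(y_w, t_w)$: by definition of the recursive argmax, $\lambda_i(y_i, t_i)$ dominates every $\lambda_i(z(t), t)$ that can occur in the free slots, hence $N_i(z) \le T_0\, \lambda_i(y_i,t_i) \le T_0\, \Lambda_i$ with $\Lambda_i := \sum_{w=1}^s \lambda_i(y_w,t_w)$. After normalizing by $\Lambda_i$, the admissible ratio range is $[\epsilon^{k+1}/4,\; T_0]$, whose logarithmic length is $O(\log(1/\epsilon))$ and carries \emph{no} $\gamma$ dependence. This is the one substantive idea your proposal is missing; the rest of your bookkeeping (the $\ell_\infty$-to-$\ell_2$ conversion via the $\sqrt{n}$ factor, the multiplicative binning, the exponent arithmetic) matches the paper's. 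To repair your proof, replace the envelope $\lambda_i \le e^{C\gamma}$ by the max-weight bound coming from the $(y_w,t_w)$ construction, which makes the normalizer bins $\gamma$-free and restores the uniform pigeonhole gap.
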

\begin{proof}
We compare the orders of $1/\epsilon$ appearing on both sides of the conditions.  

First, since $|U_t| = O(1/\epsilon)$ for each $t$, the total number of possible 
choices of subsequences $(z(1),\dots,z(T_0))$ is at most $O(1/\epsilon^{T_0})$.  

Next, to satisfy condition (1), note that both vectors involved are 
$n$-dimensional with norms bounded by $1$. Thus, the discretization required to 
achieve accuracy $\epsilon^{k+1}/(3T_0)$ in the $\ell_2$ norm leads to at most 
$O(1/\epsilon^{(k+1)ns})$ distinct possibilities, since there are $s$ heads.  

For condition (2), observe that
\[
    \sum_{w=1}^s \lambda_i(y_w,t_w) 
    \;\ge\; \tfrac{1}{T_0}\max_{j=1,2}\sum_{t=1}^{T_0}\lambda_i(z_j(t),t).
\]
Hence, for each $i$, the relevant interval can be partitioned into at most 
$O\!\left(\tfrac{-\log \epsilon}{\epsilon^{k+1}}\right)$ sub-intervals. 
Taken across $s$ heads, this contributes at most 
$O\!\left((-\log\epsilon)^s / \epsilon^{(k+1)s}\right)$ possibilities.  

Combining the two conditions, the total number of distinct admissible cases is 
bounded above by
\[
    O\!\left(\frac{(-\log\epsilon)^s}{\epsilon^{(k+1)ns+(k+1)s}}\right).
\]
Since $T_0 \ge (k+1)ns + (k+1)s + 1$, we have
\[
    O\!\left(\frac{1}{\epsilon^{T_0}}\right)
    \;\gg\;
    O\!\left(\frac{(-\log\epsilon)^s}{\epsilon^{(k+1)ns+(k+1)s}}\right).
\]
Therefore, by the pigeonhole principle, there must exist two distinct subsequences 
$(z_1(1),\dots,z_1(T_0))$ and $(z_2(1),\dots,z_2(T_0))$ satisfying all the 
conditions of Claim~\ref{claim:subsequence}.
\end{proof}
\paragraph{Construction of Distinct sequences}
From Claim~\ref{claim:subsequence}, we have constructed two sub-sequences $Z_1, Z_2$ satisfying the given conditions. We now consider the construction of two full input sequence $W_1, W_2$:
\begin{itemize}
    \item For $t=1, \dots, T_0$, if $z_1(t) = z_2(t)$, then $w_1(t) = w_2(t) = x^{(D)}$. Otherwise, $w_1(t) = z_1(t), w_2(t)=z_2(t)$. 
    \item $w_j(t_i) = y_i$, $i=1, \dots, s;\quad j=1,2$.
    \item $w_j(t^*_i) = x^{(i)}$, $i = 2, \dots, D; \quad j =1, 2$.
    \item For all other $t$, $w_j(t) = x^{(D)}$. 
\end{itemize}

\paragraph{Difference of $W1,W2$ applied to target function} \label{claim:difference}
Denote by $I_1$ the set of all indices $t$ with $z_1(t) \neq z_2(t)$, and $I_2 = [T_0] - I_1$, $I_3 = [T] - I_1$. It is clear from the difference of $Z_1, Z_2$ that $I_1 \neq \varnothing$. \\
We then define the following notations for the simplicity of calculation (defined for each head $i=1, \dots, s$):
\begin{itemize}
    \item $Q_{1,i} = \sum_{t \in I_1} \lambda_i(w_1(t), t)$.
    \item $Q_{2,i}= \sum_{t \in I_1} \lambda_i(w_2(t), t)$.
    \item $V_{1,i} = (\sum_{t \in I_1} \lambda_i(w_1(t), t) v_i(w_1(t), t))/Q_{1,i}$.
    \item $V_{2,i} = (\sum_{t \in I_1} \lambda_i(w_2(t), t) v_i(w_2(t), t))/Q_{2,i}$.
    
    \item $Q_{3,i} = \sum_{t \in I_2} \lambda_i(z_1(t), t)$, which is also the same if defined on $Z_2$.
    \item $V_{3,i} = (\sum_{t \in I_2} \lambda_i(z_1(t), t) v_i(z_1(t), t))/Q_{3,i}$, which is the same if defined on $Z_2$.
    \item $Q_{4, i} = \sum_{t \in I_3} \lambda_i(w_1(t), t)$, which is the same if defined on $W_2$.
    \item $V_{4, i} = (\sum_{t \in I_3} \lambda_i(w_1(t), t) v_i(w_1(t), t))/Q_{4,i}$, which is the same if defined on $W_2$.
\end{itemize}
As $\lambda_i()$ maps to positive values, $V_{j,i}$ are convex combinations of $v_i()$, whose norm is bounded by $1$ according to the constraint section~\ref{assump:setup}. Therefore $\|V_{j,i}\| \le 1$, $j=1,2,3,4$.

As $f_1 \circ \eta$ is monotone on $[0,1]$, let $\tilde{t} = \max_{t\in I_1}t$, then we have 
\begin{itemize}
    \item $\max_{t \in S_1} f_1(w_1(t)) = f_1(w_1(\tilde{t}))$.
    \item $\max_{t \in S_1} f_1(w_2(t)) = f_1(w_2(\tilde{t}))$.
\end{itemize}
And by construction we know that 
\begin{align}
    \|(w_1(\tilde{t})) - (w_2(\tilde{t}))\| \ge \frac{r}{M}
\end{align}
which is the minimal distance for any two points in $U_{\tilde{t}}$. Then we have
\begin{align}
    |f_1(w_1(\tilde{t})) - f_1(w_2(\tilde{t}))| \ge \frac{rL}{M}
\end{align}
As we have for $i=2, \dots, D$
\begin{itemize}
    \item $\max_{t \in S_i} f_i(w_1(t)) = z^{(i)}$.
    \item $\max_{t \in S_i} f_i(w_2(t)) = z^{(i)}$.
\end{itemize}
Then following the perturbation property of $F_0$ defined above we have that the difference of output between the two sequence to be at least $\frac{rL^2}{M}$, which is greater than $3\epsilon$. Then $\epsilon$-approximation requires that $|Model(W_1) - Model(W_2)| \ge \epsilon$. 
\paragraph{$W_1$ and $W_2$ are close after attention layer}
For any given head $i$, we consider the the two cases given in \ref{item:condition}.
\paragraph{Case 1}
Case 1 can be rewritten as follows:
\begin{itemize}
    \item $\|\frac{Q_{1,i}V_{1,i} + Q_{3,i}V_{3,i}}{Q_{1,i}+ Q_{3,i}}- \frac{Q_{2,i}V_{2,i} + Q_{3,i}V_{3,i}}{Q_{2,i}+ Q_{3,i}}\|_2 \le \frac{\epsilon^{k+1}}{3T_0}$.
    \item $\frac{Q_{1,i}+ Q_{3,i}}{Q_{2,i}+ Q_{3,i}} \in \left[1/ (1+\tfrac{\epsilon^{k+1}}{12T_0^2}), \, 1+\tfrac{\epsilon^{k+1}}{12T_0^2}\right]$.
\end{itemize}
Without loss of generality, we assume $Q_{1, i} \ge Q_{2,i}$.
By calculation, we have 
\begin{align}\label{eq:21}
    &\frac{Q_{1,i}V_{1,i} + Q_{3,i}V_{3,i}}{Q_{1,i}+ Q_{3,i}}- \frac{Q_{2,i}V_{2,i} + Q_{3,i}V_{3,i}}{Q_{2,i}+ Q_{3,i}}\\
    =& \frac{Q_{3,i}(Q_{2,i} - Q_{1,i})(V_{3,i} - V_{2,i})}{(Q_{1,i}+Q_{3,i})(Q_{2,i}+Q_{3,i}) } + \frac{Q_{1,i}}{Q_{1,i}+Q_{3,i}}(V_{1,i} - V_{2,i}).
\end{align}
We have already known that $Q_{4,i} \ge \frac{Q_{1,i} + Q_{2,i}}{T_0}$ (As $Q_{4,i}$ has the max weight of each head in it). Then 
\begin{align}\label{ineq:21}
    &\|\frac{Q_{4,i}(Q_{2,i} - Q_{1,i})(V_{4,i} - V_{2,i})}{(Q_{1,i}+Q_{4,i})(Q_{2,i}+Q_{4,i}) }\| \\
    \le & \|\frac{(Q_{2,i} - Q_{1,i})(V_{4,i} - V_{2,i})}{(Q_{1,i}+Q_{4,i}) }\| \\
    \le & \|\frac{T_0 (Q_{2,i} - Q_{1,i})(V_{4,i} - V_{2,i})}{(Q_{1,i}+Q_{3,i}) }\| \\
    \le & \|\frac{T_0 \epsilon^{k+1}(V_{4,i} - V_{2,i})}{12T_0^2 }\|\\
    \le & \frac{\epsilon^{k+1}}{6T_0}
\end{align}
Similarly, we also have 
\begin{align}\label{ineq:22}
    \|\frac{Q_{3,i}(Q_{2,i} - Q_{1,i})(V_{3,i} - V_{2,i})}{(Q_{1,i}+Q_{3,i})(Q_{2,i}+Q_{3,i}) }\| \le \frac{\epsilon^{k+1}}{6T_0}
\end{align}
From inequality~\ref{ineq:21} and substituting equation~\ref{eq:21}, we have 
\begin{align}
    \|\frac{Q_{1,i}}{Q_{1,i}+Q_{3,i}}(V_{1,i} - V_{2,i})\| \le \frac{\epsilon^{k+1}}{6T_0} + \frac{\epsilon^{k+1}}{3T_0} = \frac{\epsilon^{k+1}}{2T_0}
\end{align}
Therefore
\begin{align}
    &\|\frac{Q_{1,i}}{Q_{1,i}+Q_{4,i}}(V_{1,i} - V_{2,i})\| \\
    \le &\|\frac{T_0Q_{1,i}}{Q_{1,i}+Q_{3,i}}(V_{1,i} - V_{2,i})\| \\
    \le & \frac{\epsilon^{k+1}}{2}
\end{align}
Thus 
\begin{align}
    &\|\frac{Q_{1,i}V_{1,i} + Q_{4,i}V_{4,i}}{Q_{1,i}+ Q_{4,i}}- \frac{Q_{2,i}V_{2,i} + Q_{4,i}V_{4,i}}{Q_{2,i}+ Q_{4,i}}\|_2 \\
    \le & \|\frac{Q_{4,i}(Q_{2,i} - Q_{1,i})(V_{4,i} - V_{2,i})}{(Q_{1,i}+Q_{4,i})(Q_{2,i}+Q_{4,i}) }\| + \|\frac{Q_{1,i}}{Q_{1,i}+Q_{4,i}}(V_{1,i} - V_{2,i})\| \\
    \le &  \frac{\epsilon^{k+1}}{6T_0} + \frac{\epsilon^{k+1}}{2} \\
    \le & \epsilon^{k+1}
\end{align}
\paragraph{Case 2}
Case 2 can be rewritten as follows:
\begin{itemize}
    \item $\|\frac{Q_{1,i}V_{1,i} + Q_{3,i}V_{3,i}}{Q_{1,i}+ Q_{3,i}}- \frac{Q_{2,i}V_{2,i} + Q_{3,i}V_{3,i}}{Q_{2,i}+ Q_{3,i}}\|_2 \le \frac{\epsilon^{k+1}}{3T_0}$.
    \item $Q_{1,i}+ Q_{3,i} \le \frac{\epsilon^{k+1}}{4} \sum_{w=1}^s\lambda_i(y_w, t_w) \le \frac{\epsilon^{k+1}}{4}Q_{4,i}$.
    \item $Q_{2,i}+ Q_{3,i} \le \frac{\epsilon^{k+1}}{4} \sum_{w=1}^s\lambda_i(y_w, t_w) \le \frac{\epsilon^{k+1}}{4}Q_{4,i}$.
\end{itemize}
Thus 
\begin{align}
    &\|\frac{Q_{1,i}V_{1,i} + Q_{4,i}V_{4,i}}{Q_{1,i}+ Q_{4,i}}- \frac{Q_{2,i}V_{2,i} + Q_{4,i}V_{4,i}}{Q_{2,i}+ Q_{4,i}}\|_2 \\
    \le & \|\frac{Q_{4,i}(Q_{2,i} - Q_{1,i})(V_{4,i} - V_{2,i})}{(Q_{1,i}+Q_{4,i})(Q_{2,i}+Q_{4,i}) }\| + \|\frac{Q_{1,i}}{Q_{1,i}+Q_{4,i}}(V_{1,i} - V_{2,i})\| \\
    \le &  \|\frac{(Q_{2,i} - Q_{1,i})(V_{4,i} - V_{2,i})}{(Q_{2,i}+Q_{4,i}) }\| + \|\frac{Q_{1,i}}{Q_{1,i}+Q_{4,i}}(V_{1,i} - V_{2,i})\| \\
    \le& \|\frac{\epsilon^{k+1}(V_{4,i} - V_{2,i})}{4 }\| + \|\frac{\epsilon^{k+1}}{4}(V_{1,i} - V_{2,i})\|\\
    \le & \epsilon^{k+1}
\end{align}

And it can be seen from definition that 
\begin{itemize}
    \item $\frac{Q_{1,i}V_{1,i} + Q_{4,i}V_{4,i}}{Q_{1,i}+ Q_{4,i}}$ is the output of the $i$-th head of the attention layer with input sequence $W_1$. (which means $\frac{Q_{1,i}V_{1,i} + Q_{4,i}V_{4,i}}{Q_{1,i}+ Q_{4,i}} = \sum_{t=1}^{T}
            \sigma\!\left[
                (W_{K,i}\hat{w_1}(t))^\top W_{Q,i}\hat{c}_0
            \right]
            W_{V,i}\hat{w_1}(t)$).
    \item $\frac{Q_{2,i}V_{1,i} + Q_{4,i}V_{4,i}}{Q_{2,i}+ Q_{4,i}}$ is the output of the $i$-th head of the attention layer with input sequence $W_2$. (which means $\frac{Q_{2,i}V_{2,i} + Q_{4,i}V_{4,i}}{Q_{2,i}+ Q_{4,i}} = \sum_{t=1}^{T}
            \sigma\!\left[
                (W_{K,i}\hat{w_2}(t))^\top W_{Q,i}\hat{c}_0
            \right]
            W_{V,i}\hat{w_2}(t)$).
\end{itemize}
Then for each $i=1, \dots, s$, we have that 
\begin{align}
    \|\sum_{t=1}^{T}
            \sigma\!\left[
                (W_{K,i}\hat{w_1}(t))^\top W_{Q,i}\hat{c}_0
            \right]
            W_{V,i}\hat{w_1}(t) - \sum_{t=1}^{T}
            \sigma\!\left[
                (W_{K,i}\hat{w_2}(t))^\top W_{Q,i}\hat{c}_0
            \right]
            W_{V,i}\hat{w_2}(t)\| \le \epsilon^{k+1} 
\end{align}
Therefore, as $W_O$ have entries bounded by $1$, we have
\begin{align}
    & \|\Bigl[\hat{c}_0 + W_O\operatorname{Concat}_{i=1}^{h}
        \Bigl(
            \sum_{t=1}^{T}
            \sigma\!\left[
                (W_{K,i}\hat{w_1}(t))^\top W_{Q,i}\hat{c}_0
            \right]
            W_{V,i}\hat{w_1}(t)\Bigr) \Bigr]\\
    -&  \Bigl[ \hat{c}_0 + W_O\operatorname{Concat}_{i=1}^{h}
        \Bigl(
            \sum_{t=1}^{T}
            \sigma\!\left[
                (W_{K,i}\hat{w_2}(t))^\top W_{Q,i}\hat{c}_0
            \right]
            W_{V,i}\hat{w_2}(t)\Bigr)\Bigr]\| \\
    \le & s\epsilon^{k+1}
\end{align}
However, it has been proven above that we need $|Model(W_1) - Model(W_2)| \ge \epsilon$ to achieve $\epsilon$-approximation of the target function. According to lemma~\ref{lem:FFN-rate}, the required parameter count of the FFN $\hat{F}$ is of order $\Omega(\epsilon/\epsilon^{k+1})$. 
Thus the parameter count required to achieve $\epsilon$-approximation is $\Omega(1/\epsilon^k)$.

\end{proof}

\begin{remark}\label{rem:lower-bounds}\textbf{Tightness of Theorem~\ref{thm2:p2}}
The lower bound in Theorem~\ref{thm2:p2} remains essentially tight under 
several relaxations of the feed-forward block $\hat{F}$.  
If $\hat{F}$ uses Heaviside activations instead of $1$-Lipschitz activations, 
matching upper bounds can be constructed, but this case is impractical since 
Heaviside activations are rarely used in practice. If parameter norms are permitted to scale as $O(T^{1/\epsilon})$, 
the parameter count can be reduced to $O(1/\epsilon^{\gamma+1})$, 
though this scenario is likewise unrealistic in practical settings. Finally, if $\hat{F}$ is allowed up to five layers, the lower bound changes to 
$\Omega(1/\epsilon^{k/4})$, which does not alter the qualitative conclusion. 
\end{remark}

\subsection{Proof of Theorem~\ref{thm2:p3}}
\paragraph{Proof Sketch.}  
The argument is based on an explicit construction. 
We begin with trivial attention, so that the post-attention output is simply the 
averaged concatenation $\tfrac{1}{T}(x(1),\dots,x(T)) \in \mathbb{R}^{Td}$. 
The feed-forward block can then be used to compute the transformations 
$f_i(x(t))$, perform the necessary comparisons, and approximate $F_0$, 
as ensured by Lemmas~\ref{lem:relu-max} and~\ref{lem:stacking}.  

Having outlined the main idea, we now proceed to the detailed proof. 
As a first step, we introduce several auxiliary lemmas that will be used in the argument.

\begin{lemma}\label{lem:stacking}
    Fix a pointwise activation $\sigma$ (e.g., ReLU or any activation used in this paper).
Let $F_1:\mathbb{R}^{m_1}\to\mathbb{R}^{m_2}$ be a $2$-layer fully connected network,
$F_2:\mathbb{R}^{m_2}\to\mathbb{R}^{m_3}$ a $3$-layer fully connected network,
and $F_3:\mathbb{R}^{m_3}\to\mathbb{R}$ a $2$-layer fully connected network.
Let $W_1,W_2,W_3$ denote their respective (maximum) hidden widths, and set
$W:=\max\{W_1,W_2,W_3\}$.
Then there exists a $5$-layer fully connected network
$G:\mathbb{R}^{m_1}\to\mathbb{R}$ with activation $\sigma$ and hidden width at most $W$
such that
\[
    G(x)\;=\;F_3\!\bigl(F_2\!\bigl(F_1(x)\bigr)\bigr)\qquad \text{for all }x\in\mathbb{R}^{m_1}.
\]
\end{lemma}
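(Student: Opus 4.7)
The plan is to form the composition $F_3 \circ F_2 \circ F_1$ directly and then collapse each pair of adjacent affine layers at the two network junctions into a single affine layer, using the fact that the composition of two affine maps is again affine. Naively stacking the three networks would give $2+3+2=7$ weight layers; merging the output affine of $F_1$ with the input affine of $F_2$, and similarly the output affine of $F_2$ with the input affine of $F_3$, saves two layers and yields exactly $5$.

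First I would write each component in explicit form: $F_1(x) = V_1\, \sigma(W_1 x + b_1) + c_1$ with a single hidden layer of width $w_1 \le W_1$; $F_2(y) = U_3\, \sigma(U_2\, \sigma(U_1 y + d_1) + d_2) + d_3$ with two hidden layers of widths $w_2^{(1)}, w_2^{(2)} \le W_2$; and $F_3(z) = q^{\top} \sigma(P z + p) + r$ with a single hidden layer of width $w_3 \le W_3$. Substituting $y = F_1(x)$ into $F_2$, the last affine step of $F_1$, namely $h_1 \mapsto V_1 h_1 + c_1$, is immediately followed by the first affine step of $F_2$, namely $y \mapsto U_1 y + d_1$, with no activation in between. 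I would merge these into the single affine map $h_1 \mapsto (U_1 V_1)\, h_1 + (U_1 c_1 + d_1)$, which directly connects the first hidden layer (width $w_1$) to the activation $\sigma$ producing the next hidden layer (width $w_2^{(1)}$). Performing the same merge at the $F_2$--$F_3$ junction replaces $z \mapsto U_3 g_2 + d_3$ followed by $z \mapsto P z + p$ by $g_2 \mapsto (P U_3)\, g_2 + (P d_3 + p)$, feeding directly into the activation of the final hidden layer of width $w_3$.

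Next I would read off the resulting network $G$: it has five affine maps interleaved with four applications of $\sigma$, producing hidden layers of widths $w_1, w_2^{(1)}, w_2^{(2)}, w_3$, each bounded by $W$, and a scalar output. By associativity of function composition and the algebra of affine maps, $G(x) = F_3(F_2(F_1(x)))$ exactly for every $x \in \mathbb{R}^{m_1}$, which is the claim.

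There is no genuine obstacle; the lemma is a bookkeeping identity exploiting the absence of activations between the output of one fully connected network and the input of the next. The only point worth flagging explicitly is that the merging step never inflates any hidden width: the merged matrix $U_1 V_1$ has dimensions $w_2^{(1)} \times w_1$ and $P U_3$ has dimensions $w_3 \times w_2^{(2)}$, so the intermediate output dimensions $m_2$ and $m_3$ never appear as a hidden-layer width of $G$. Thus $G$ is a $5$-layer network with hidden width at most $W$, completing the proof.
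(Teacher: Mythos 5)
Your proof is correct and takes essentially the same route as the paper: compose the three networks and observe that the two affine--affine junctions collapse into single affine maps, so the result is a five-layer network whose hidden widths are exactly $w_1, w_2^{(1)}, w_2^{(2)}, w_3 \le W$. You make the merge of $U_1V_1$ and $PU_3$ explicit, whereas the paper leaves $u(x)$ and $v(x)$ as intermediate affine quantities and simply asserts the layer count; substantively these are the same argument.
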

\begin{proof}{Proof of Lemma~\ref{lem:stacking}}
    Write the three networks in affine–nonlinearity form (with a pointwise activation $\sigma$):
\[
\begin{aligned}
F_1(x) &= A_1\,\sigma(B_1 x + b_1) + a_1, && x\in\mathbb{R}^{m_1},\; F_1(x)\in\mathbb{R}^{m_2},\\
F_2(u) &= C_2\,\sigma\!\bigl(D_2\,\sigma(E_2 u + e_2)+ d_2\bigr)+ c_2, && u\in\mathbb{R}^{m_2},\; F_2(u)\in\mathbb{R}^{m_3},\\
F_3(v) &= p_3\,\sigma(Q_3 v + q_3) + r_3, && v\in\mathbb{R}^{m_3},\; F_3(v)\in\mathbb{R}.
\end{aligned}
\]
Define a $5$-layer fully connected network $G:\mathbb{R}^{m_1}\to\mathbb{R}$ by stacking the
hidden layers of $F_1$ (one), $F_2$ (two), and $F_3$ (one), keeping their original widths:
\[
\begin{aligned}
h_1(x) &:= \sigma(B_1 x + b_1),\\
u(x)   &:= A_1 h_1(x) + a_1,\\
h_2(x) &:= \sigma(E_2 u(x) + e_2),\\
h_3(x) &:= \sigma(D_2 h_2(x) + d_2),\\
v(x)   &:= C_2 h_3(x) + c_2,\\
h_4(x) &:= \sigma(Q_3 v(x) + q_3),\\
G(x)   &:= p_3 h_4(x) + r_3.
\end{aligned}
\]
By construction,
\[
G(x)= p_3\,\sigma\!\Bigl(Q_3\bigl(C_2\,\sigma(D_2\,\sigma(E_2(A_1\,\sigma(B_1 x + b_1)+a_1)+e_2)+d_2)+c_2\bigr)+q_3\Bigr)+r_3
= F_3\!\bigl(F_2\!\bigl(F_1(x)\bigr)\bigr).
\]
Thus $G$ realizes the composition exactly, has $4$ hidden layers (hence $5$ layers total),
and its hidden widths are precisely those of the constituent hidden layers of $F_1$, $F_2$, and $F_3$.
\end{proof}

\begin{lemma}[Approximating $\max$ with a shallow ReLU network]\label{lem:relu-max}
Let $f:[0,1]^T\to\mathbb{R}$ be $f(x_1,\dots,x_T)=\max\{x_1,\dots,x_T\}$. 
For any $\epsilon\in(0,1]$, there exists a fully connected ReLU network $\hat{f}$ with 
\emph{three layers} (i.e., two hidden layers and one output layer), whose hidden-layer widths are each at most 
$2T\,\lceil 1/\epsilon\rceil$, such that
$\hat{f}$ $\epsilon$-approximates $f$. 
\end{lemma}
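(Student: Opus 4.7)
}
The plan is to realize $\hat f$ from a discrete layer-cake (level-set) representation of the maximum. Set $N := \lceil 1/\epsilon\rceil$ and $t_k := k\epsilon$ for $k=0,1,\dots,N$. Since $m := \max_i x_i \in [0,1]$ satisfies $m = \int_0^1 \mathbf{1}[m>t]\,dt$, the left Riemann sum $\epsilon\sum_{k=0}^{N-1} \mathbf{1}[m>t_k]$ agrees with $m$ to within $\epsilon$. I will implement a piecewise-linear ReLU surrogate for each level-set indicator using the two hidden layers, and then sum these surrogates in the linear output layer.

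The first hidden layer would compute, for every $i\in[T]$ and every $k\in\{0,1,\dots,N\}$, the activation $\sigma(x_i - t_k)$; this uses $T(N+1)\le 2TN$ units and fits the width bound. From these activations one can read off the clipped ramp $\phi_k(x_i) := \sigma(x_i-t_k)-\sigma(x_i-t_{k+1}) \in [0,\epsilon]$, which equals $0$ when $x_i\le t_k$ and saturates at $\epsilon$ when $x_i\ge t_{k+1}$, as a single affine combination of layer-$1$ outputs. The second hidden layer, using $2N$ units in total, would form for each $k\in\{0,\dots,N-1\}$ the pair $\sigma(y_k)$ and $\sigma(y_k-1)$, where $y_k := \epsilon^{-1}\sum_{i=1}^T \phi_k(x_i)$. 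Taking $b_k := \sigma(y_k)-\sigma(y_k-1) = \min(1,\max(0,y_k))$ gives a piecewise-linear proxy for $\mathbf{1}[m>t_k]$. The output layer is the linear readout $\hat f(x) := \epsilon\sum_{k=0}^{N-1} b_k$.

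For the error bound I would split on $k^* := \lfloor m/\epsilon\rfloor$. When $k<k^*$, some $x_i\ge t_{k+1}$ forces $\phi_k(x_i)=\epsilon$ for that $i$, hence $y_k\ge 1$ and $b_k=1$. When $k>k^*$, every $\phi_k(x_i)=0$, so $b_k=0$. Only the transition level $k^*$ has $b_{k^*}\in[0,1]$. Consequently both $\hat f(x)$ and $m$ lie in the interval $[\epsilon k^*,\epsilon(k^*+1)]$, giving $|\hat f(x)-m|\le\epsilon$ uniformly on $[0,1]^T$.

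No genuine obstacle is expected; the construction is entirely explicit and the error analysis is a one-line case split. The two items that require care are (i) verifying that $y_k$ is a single affine function of the first-layer activations, which is immediate from the definition of $\phi_k$, and (ii) tracking that the two hidden-layer widths both fit within $2T\lceil 1/\epsilon\rceil$, which holds because $T(N+1)\le 2TN$ in layer $1$ and $2N\le 2TN$ in layer $2$.
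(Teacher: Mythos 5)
Your proposal is correct and takes essentially the same approach as the paper: both realize $\max$ via a layer-cake / level-set decomposition where the first hidden layer computes shifted ReLU ramps, the second layer clips their sums into saturating level indicators, and the output sums the levels; both then argue by identifying the one "transitional" level and bounding its contribution. The only differences are cosmetic — you place thresholds at $k\epsilon$ rather than $k/n$, pre-clip the ramps with an extra level $t_N$ (hence $T(N+1)$ first-layer units instead of $Tn$), and normalize by $\epsilon^{-1}$ before clipping at $1$ rather than clipping at $1/n$ directly — and none of these affect the width bound or the error argument.
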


\begin{proof}{Proof of Lemma~\ref{lem:relu-max}}
    Let \[
n = \lceil 1/\epsilon \rceil.
\]

For each coordinate $t \in [T]$ and each grid index $i = 0,1,\dots,n-1$, 
define the first hidden layer neurons by
\[
h_1(t,i) \;=\; \operatorname{ReLU}\!\left( x_t - \tfrac{i}{n} \right).
\]
For each $j = 0,1,\dots,n-1$, define the second hidden layer neurons by
\[
h_2(j) \;=\; \operatorname{ReLU}\!\left(\sum_{t=1}^T h_1(t,j)\right)
- \operatorname{ReLU}\!\left(\sum_{t=1}^T h_1(t,j) - \tfrac{1}{n}\right).
\]
Finally, the output of the network is given by
\[
\hat{f}(x_1,\dots,x_T) \;=\; \sum_{j=0}^{n-1} h_2(j).
\]
\begin{proof}[Claim]
Fix $j\in\{0,\dots,n-1\}$ and set
\[
S_j \;=\; \sum_{t=1}^T h_1(t,j)
       \;=\; \sum_{t=1}^T \operatorname{ReLU}\!\Big(x_t - \frac{j}{n}\Big).
\]
By definition,
\[
h_2(j) \;=\; \operatorname{ReLU}(S_j) \;-\; \operatorname{ReLU}\!\Big(S_j - \frac{1}{n}\Big).
\]

1) If $h_2(j)>0$, then necessarily $S_j>0$ (since $\operatorname{ReLU}(z)>0$ iff $z>0$), hence there exists some $t$ with 
\[
\operatorname{ReLU}\!\Big(x_t-\tfrac{j}{n}\Big) > 0
\quad\Longleftrightarrow\quad
x_t>\tfrac{j}{n}.
\]
Thus $h_2(j)>0$ only if $\exists\,t$ with $x_t>j/n$.

2) If there exists $t$ with $x_t>(j+1)/n$, then
\[
S_j \;\ge\; \operatorname{ReLU}\!\Big(x_t-\tfrac{j}{n}\Big)
\;>\; \tfrac{1}{n}.
\]
Therefore $S_j\ge \tfrac{1}{n}$, and we get
\[
h_2(j)\;=\; S_j - \Big(S_j - \tfrac{1}{n}\Big) \;=\; \tfrac{1}{n}.
\]
\end{proof}

Fix $x\in[0,1]^T$ and let $j$ be such that $\max_t x_t \in (j/n,(j+1)/n]$.
By construction,
\[
h_2(k)=0 \quad\text{for } k\ge j+1,\qquad
h_2(k)=\tfrac{1}{n} \quad\text{for } k\le j-1,
\]
and for $k=j$ we have
\[
0 \;\le\; h_2(j)
= \operatorname{ReLU}(S_j)-\operatorname{ReLU}\!\Big(S_j-\tfrac{1}{n}\Big)
\;\le\; \tfrac{1}{n},
\quad S_j:=\sum_{t=1}^T \operatorname{ReLU}\!\Big(x_t-\tfrac{j}{n}\Big).
\]
Hence
\[
\hat f(x)=\sum_{k=0}^{n-1} h_2(k)
= \sum_{k=0}^{j-1}\tfrac{1}{n} + h_2(j)
\in \Big[\tfrac{j}{n},\,\tfrac{j}{n}+\tfrac{1}{n}\Big]
= \Big[\tfrac{j}{n},\,\tfrac{j+1}{n}\Big].
\]
Since $\max_t x_t \in (j/n,(j+1)/n]$, it follows that
\[
0 \;\le\; |\hat f(x) - \max_t x_t| \;\le\; \tfrac{1}{n} \;\le\; \epsilon.
\]
Therefore $\hat f$ $\epsilon$-approximates $f(x)=\max_{t}x_t$ on $[0,1]^T$.
\end{proof}
\begin{proof}{Theorem~\ref{thm2:p3}}\\
We begin by fixing the embedding with positional information.  
Let $P_\phi:[0,1]^d \times [T]\to \mathbb{R}^{dT}$ be defined by
\[
    P_\phi(x(t),t) = (0,\dots,0,\,x(t),\,0,\dots,0),
\]
where the vector $x(t)$ occupies the $t$-th block of dimension $d$, and all 
other blocks are zero. With the classification token $\hat{c}_0=0$, the 
attention layer reduces to a trivial aggregation, and the output (prior to the 
feed-forward network) is
\[
    \tfrac{1}{T}\,(x(1),\dots,x(T)) \in [0,1]^{dT}.
\]
Given a target accuracy $\epsilon>0$, we construct three feed-forward networks 
$F_1,F_2,F_3$ as follows.  

\paragraph{Step 1: Approximating the component functions.}  
Define 
\[
    F_1 : \tfrac{1}{T}[0,1]^{d\times T} \;\to\; \mathbb{R}^{D\times T}, 
    \qquad 
    F_1\bigl(\tfrac{1}{T}x(1),\dots,\tfrac{1}{T}x(T)\bigr) = (u(1),\dots,u(T)),
\]
where each $u(t)\in\mathbb{R}^D$ satisfies 
\[
    |u(t)_i - f_i(x(t))| \le \epsilon \quad \text{for all } i=1,\dots,D.
\]
By Assumption~\ref{assump:approximation}, such an approximation can be 
implemented by a two-layer FFN with parameter count 
$O(1/\epsilon^{\gamma})$.  

\paragraph{Step 2: Approximating the minimization.}  
Let $F_2':\mathbb{R}^{D\times T}\to\mathbb{R}^D$ be defined by
\[
    F_2'(u(1),\dots,u(T)) = (u_1,\dots,u_D),
    \qquad 
    u_i = \min_{t\in S_i} u(t)_i.
\]
By Lemma~\ref{lem:relu-max} (which works the same for taking minimum), there exists a three-layer ReLU network with 
$O(1/\epsilon)$ parameters that $\epsilon$-approximates $F_2'$.  
We denote this approximation by $F_2$.  

\paragraph{Step 3: Approximating the outer function.}  
Finally, let $F_3:\mathbb{R}^D\to\mathbb{R}$ be a two-layer FFN that 
$\epsilon$-approximates $F_0$, with parameter count $O(1/\epsilon^\gamma)$.  

\paragraph{Composition.}  
Since $F_0$ is $C^1$ on a compact domain, it is Lipschitz with constant $L$, 
and the $\min$ operator is 1-Lipschitz. Therefore, the composed network
\[
    F_3 \circ F_2 \circ F_1
\]
provides an $L\epsilon$-approximation of the target function, with total 
parameter count 
\[
    O(1/\epsilon^{\gamma+1}).
\]
According to lemma~\ref{lem:stacking}, $F_3 \circ F_2 \circ F_1$ can be written equivalently as a five-layer FFN. 

\end{proof}
\subsection{Proof of Corollary~\ref{cor:uniqueness}}
\paragraph{Proof Sketch.} It is a direct corollary of Theorem~\ref{thm2:p1} and \ref{thm2:p2}.
\begin{proof}
    Suppose $D_1 < D_2$. Let $M_0$ be the minimal positive integer such that $\mathcal{H}(D_1,2,d,T,M_0)$ $\epsilon$-approximates $H$. Then with representation $(\{f_i, S_i\}_{i=1}^{D_1}, F_0)$, Theorem~\ref{thm2:p1} suggests that there exists a positive $C_{d, D_1, T}$ such that 
    \begin{equation}
        M_0 \le \frac{C_{d, D_1, T}}{\epsilon^\gamma}
    \end{equation}
    With representation $(\{\tilde{f}_i, \tilde{S}_i\}_{i=1}^{D_2}, \tilde{F}_0)$, Theorem~\ref{thm2:p2} suggests that there exists a positive $C_{d, D_2, T}$ such that 
    \begin{equation}
        M_0 \ge \frac{C_{d, D_2, T}}{\epsilon^k} \text{ for } k=\frac{(\frac{1}{4} T-D_1 -D_2 +1)}{3D_1+1}-1
    \end{equation}
    As $f_i$ and $F_0$ are at least $C^1$ smooth, we have $\gamma \le \max(D_1, D_2)$. Thus with $D_1^2 + D_2^2 \le \frac{1}{50}T$, we have $k > \gamma$. Then there exist $\epsilon>0$ such that 
    \begin{equation}
        \frac{C_{d, D_2, T}}{\epsilon^k} > \frac{C_{d, D_1, T}}{\epsilon^\gamma}
    \end{equation}
    This leads to a contradiction. Thus $D_1 = D_2$.
    
\end{proof}

\newpage
\section{Experiment Details}

\subsection{Details for Experiment 1}
\label{app:exp1-details}
\subsubsection{Experimental details for Section~\ref{experiment1}}
\label{app:exp1-training-details}

\paragraph{Data generation.}  
The intrinsic dimension of the synthetic task is $D=4$. For each sequence length 
$T \in \{8,16,32,64,128\}$ we generate $8000$ training and $2000$ validation 
examples. The inputs are i.i.d.\ Gaussian samples $x(t) \sim \mathcal{N}(0,I_4)$.  

\paragraph{Model architecture.}  
Each input vector $x(t)$ is first mapped to $\mathbb{R}^{8h}$ by a two-layer 
feed-forward network with hidden dimension $N$ and ReLU activations, ensuring 
a per-head embedding dimension of $8$. A trainable classification token $c_0$ 
is appended, and no positional encoding is used since the task is permutation 
invariant. The sequence is processed by a single-layer multi-head attention 
block without residual connections or layer normalization, consistent with the 
theoretical setting. The output is concatenated and passed through a two-layer 
GELU-activated feed-forward network with hidden dimension $N$, yielding the 
final scalar prediction. The fixed hidden size ensures comparability of 
parameter counts across different $h$.  

\paragraph{Training protocol.}  
Each configuration $(h,T)$ is trained separately under multiple random seeds. 
To reduce the effect of optimization variance, we report the \emph{minimal 
validation error} achieved across seeds. This choice isolates expressivity 
limitations of the architecture from randomness in training dynamics.  

\paragraph{Evaluation metric.}  
We adopt the normalized mean squared error (NMSE), defined as mean squared error 
divided by the variance of the targets. As $T$ increases, maxima of Gaussian 
samples concentrate, shrinking target variance and making trivial predictors 
appear competitive under raw MSE. (An intuition is that suppose the target output be $Y_T = \max_{1 \le t \le T} x_t$ with input tokens $x_t \sim \mathcal{N}(0,1)$ independently, then $\operatorname{Var}(Y_T)$ \emph{decreases} as $T$ increases, because $Y_T$ concentrates more tightly around its growing mean.)Normalization by variance corrects this effect 
and ensures comparability across lengths. NMSE is also equivalent to $1-R^2$, 
where $R^2$ is the standard coefficient of determination.  

\paragraph{Variance across seeds.}  
While mean performance across seeds is also informative, reporting the minimal 
validation NMSE highlights the best achievable accuracy for a given 
architecture. This emphasizes limitations due to model capacity rather than 
training noise. Tables showing seed variance are included for completeness (Table~\ref{table:synthetic-variance}).

\textit{Remark.}  \label{observation:reverse}
When $h \geq D =4$, we also observe that the validation NMSE first decreases rapidly 
and then increases slowly as $T$ grows. For shorter sequences, the model with 
enough heads can either capture the pattern through attention 
(Theorem~\ref{thm2:p1}) or rely on a memorization-based strategy with the 
feed-forward network (Theorem~\ref{thm2:p3}). Both approaches generalize 
reasonably well, but the memorization-based one does so less effectively. For 
longer sequences, memorization becomes infeasible and the model relies on 
attention, which generalizes better; however, longer sequences may also be more 
sensitive to parameterization, and the observed curve likely reflects a tradeoff 
between these effects. See Figure~\ref{fig: synthetic remark} in Appendix.

\subsubsection{Figures and Tables for Synthetic Experiment~\ref{experiment1}}
\begin{figure*}[h]
    \centering
    \includegraphics[width=0.6\textwidth]{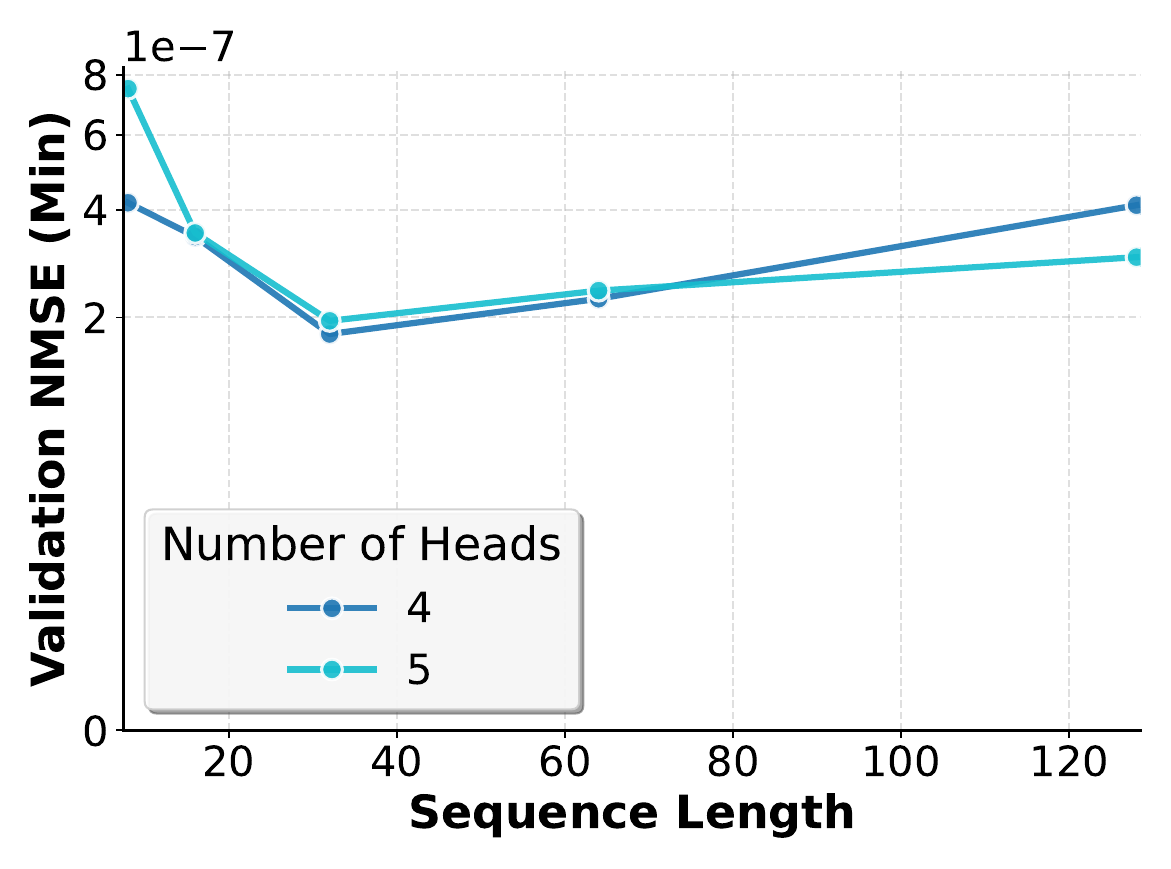}
    \caption{A zoom in plot of Figure\ref{fig:synthetic-nmse}, which shows that when the number of head is enough, the loss first decreases and then increases, as explained in the remark~\ref{observation:reverse}}
    
    \label{fig: synthetic remark}
\end{figure*}

\begin{figure*}[h]
    \centering

        \includegraphics[width=\linewidth]{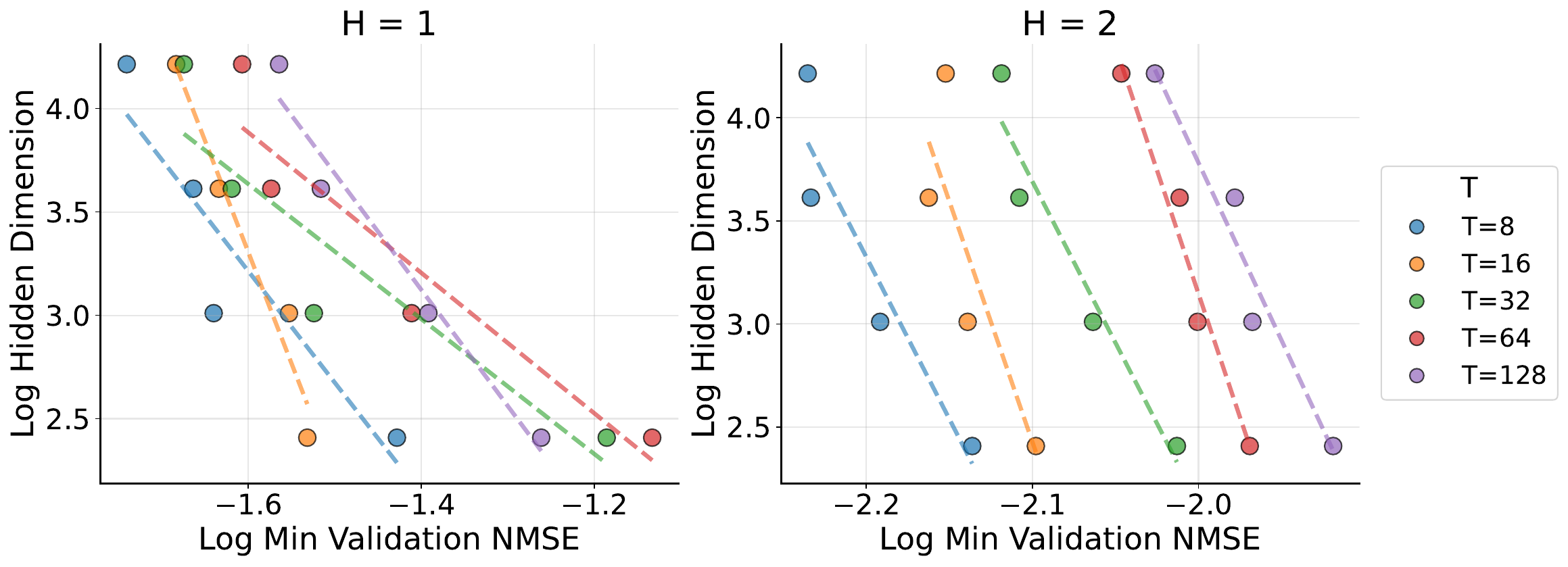}
        \caption{ Additional plot of \ref{fig:synthetic-nmse-b} for $H=1$ and $H=2$.
}
         \label{fig:synthetic-nmse-appendix}

\end{figure*}

\begin{table}[H]
\centering
\resizebox{\textwidth}{!}{%
\begin{tabular}{c|ccccc}
\hline
\textbf{Heads}& \textbf{T=8} & \textbf{T=16} & \textbf{T=32} & \textbf{T=64} & \textbf{T=128} \\
\hline
1 & $7.01\times10^{-2} \pm 5.99\times10^{-2}$ & $1.09\times10^{-1} \pm 9.93\times10^{-2}$ & $1.10\times10^{-1} \pm 9.36\times10^{-2}$ & $1.14\times10^{-1} \pm 8.53\times10^{-2}$ & $1.45\times10^{-1} \pm 1.05\times10^{-1}$ \\
2 & $7.31\times10^{-3} \pm 4.75\times10^{-4}$ & $8.41\times10^{-3} \pm 7.97\times10^{-4}$ & $9.42\times10^{-3} \pm 6.42\times10^{-4}$ & $1.31\times10^{-2} \pm 1.16\times10^{-2}$ & $1.47\times10^{-2} \pm 1.21\times10^{-2}$ \\
3 & $6.94\times10^{-4} \pm 2.90\times10^{-4}$ & $6.40\times10^{-4} \pm 3.87\times10^{-4}$ & $9.09\times10^{-4} \pm 4.31\times10^{-4}$ & $1.31\times10^{-3} \pm 5.10\times10^{-4}$ & $1.58\times10^{-3} \pm 5.21\times10^{-4}$ \\
4 & $6.10\times10^{-5} \pm 1.52\times10^{-4}$ & $4.36\times10^{-5} \pm 1.93\times10^{-4}$ & $4.80\times10^{-5} \pm 2.30\times10^{-4}$ & $8.75\times10^{-6} \pm 5.58\times10^{-5}$ & $5.23\times10^{-6} \pm 5.67\times10^{-6}$ \\
5 & $3.35\times10^{-5} \pm 5.84\times10^{-5}$ & $1.10\times10^{-5} \pm 2.36\times10^{-5}$ & $4.91\times10^{-6} \pm 6.32\times10^{-6}$ & $4.19\times10^{-6} \pm 8.39\times10^{-6}$ & $3.99\times10^{-6} \pm 4.29\times10^{-6}$ \\
\hline
\end{tabular}
}
\caption{Error bar for synthetic dataset. NMSE(Mean ± Standard Deviation) for different sequence lengths $T$ and number of heads.}
\label{table:synthetic-variance}
\end{table}

\begin{table}[H]
\centering
\resizebox{\textwidth}{!}{%
\begin{tabular}{c|ccccc}
\hline
\textbf{Heads} & \textbf{T=8} & \textbf{T=16} & \textbf{T=32} & \textbf{T=64} & \textbf{T=128} \\
\hline
1 & $1.75\times10^{-2}$ & $1.98\times10^{-2}$ & $2.06\times10^{-2}$ & $2.54\times10^{-2}$ & $3.03\times10^{-2}$ \\
2 & $7.17\times10^{-3}$ & $7.39\times10^{-3}$ & $7.82\times10^{-3}$ & $8.57\times10^{-3}$ & $1.02\times10^{-2}$ \\
3 & $2.11\times10^{-4}$ & $2.17\times10^{-4}$ & $2.73\times10^{-4}$ & $3.71\times10^{-4}$ & $4.77\times10^{-4}$ \\
4 & $1.32\times10^{-6}$ & $5.59\times10^{-7}$ & $3.40\times10^{-7}$ & $3.46\times10^{-7}$ & $5.70\times10^{-7}$ \\
5 & $2.19\times10^{-6}$ & $4.33\times10^{-7}$ & $3.22\times10^{-7}$ & $2.73\times10^{-7}$ & $2.66\times10^{-7}$ \\
\hline
\end{tabular}
}
\caption{Validation NMSE under fixed total embedding dimension $E = nh = 32$.}
\label{tab:heads-fixed-E}
\end{table}

\begin{table}[H]
\centering
\resizebox{\textwidth}{!}{%
\begin{tabular}{c|ccccc}
\hline
\textbf{Heads} & \textbf{T=8} & \textbf{T=16} & \textbf{T=32} & \textbf{T=64} & \textbf{T=128} \\
\hline
1 & $1.38\times10^{-2}$ & $1.63\times10^{-2}$ & $1.84\times10^{-2}$ & $2.17\times10^{-2}$ & $2.31\times10^{-2}$ \\
2 & $1.09\times10^{-3}$ & $7.08\times10^{-4}$ & $7.24\times10^{-4}$ & $7.76\times10^{-4}$ & $1.11\times10^{-3}$ \\
3 & $4.18\times10^{-7}$ & $1.72\times10^{-7}$ & $1.17\times10^{-7}$ & $3.58\times10^{-7}$ & $2.11\times10^{-7}$ \\
4 & $5.56\times10^{-7}$ & $1.22\times10^{-7}$ & $6.89\times10^{-8}$ & $1.85\times10^{-7}$ & $3.48\times10^{-7}$ \\
\hline
\end{tabular}
}
\caption{Approximation error for the $D=3$ retrieval task under fixed total embedding dimension $E = nh$.}
\label{tab:D3-results}
\end{table}

\begin{table}[H]
\centering
\resizebox{\textwidth}{!}{%
\begin{tabular}{c|ccccc}
\hline
\textbf{Heads} & \textbf{T=8} & \textbf{T=16} & \textbf{T=32} & \textbf{T=64} & \textbf{T=128} \\
\hline
1 & $2.12\times10^{-4}$ & $1.85\times10^{-4}$ & $2.22\times10^{-4}$ & $3.13\times10^{-4}$ & $4.28\times10^{-4}$ \\
2 & $7.22\times10^{-6}$ & $2.69\times10^{-6}$ & $3.50\times10^{-6}$ & $3.07\times10^{-6}$ & $3.83\times10^{-6}$ \\
3 & $8.16\times10^{-6}$ & $1.83\times10^{-6}$ & $1.73\times10^{-6}$ & $3.86\times10^{-6}$ & $3.50\times10^{-6}$ \\
4 & $3.68\times10^{-6}$ & $1.87\times10^{-6}$ & $2.60\times10^{-6}$ & $4.32\times10^{-6}$ & $5.98\times10^{-6}$ \\
5 & $6.15\times10^{-6}$ & $3.02\times10^{-6}$ & $3.31\times10^{-6}$ & $3.78\times10^{-6}$ & $5.34\times10^{-6}$ \\
\hline
\end{tabular}
}
\caption{Two-layer transformer on the synthetic task ($D=4$, NoPE, NoLN, fixed total embedding dimension $E=nh=32$).}
\label{tab:2layer}
\end{table}

\subsection{MS MARCO Text Retrieval}\label{app:exp-msmarco}
\subsubsection{Experiment Details for MS MARCO (text retrieval) Experiment}
\paragraph{Dataset construction.}  
We construct retrieval-style datasets from the MS MARCO passage ranking collection 
\citep{bajaj2018.MSMARCOHuman}. Since the original dataset associates each query with only a 
few candidate passages, we enlarge the candidate set by mining hard negatives. 
Specifically, BM25 \citep{robertson2009.ProbabilisticRelevanceFramework} is used to mine local negatives 
and FAISS \citep{johnson2017.BillionscaleSimilaritySearch} similarity search 
to retrieve global negatives, reducing redundancy across queries. 
For each query, the sequence length $T$ is defined as the total number of candidates 
(one positive and $T-1$ negatives), with $T \in \{8,16,32,64\}$. 
We build datasets containing $28{,}000$ training queries and $2{,}000$ validation 
queries for each $T$.

\paragraph{Model and training setup.}  
We evaluate a two-layer Transformer encoder with per-head embedding dimension 
fixed at $32$, while varying the number of heads across 
$\{1,2,4,6,8,10,12,14,16\}$. Tokenization and input embeddings follow the BERT 
tokenizer and frozen BERT word, position, and segment embeddings 
\citep{devlin2019.BERTPretrainingDeep}, projected to the model hidden size 
$h = \text{heads} \times 32$. Only the projection and Transformer layers are trained. 
We report training top-1 accuracy, focusing on training performance since 
MS MARCO with BM25-mined negatives is particularly challenging for validation, and the difference can be seen in training metrics. Training MRR is also reported in Fig~\ref{fig:ms mrr}, with similar trend as training accuracy. 

\subsubsection{Figures and Tables for Experiment}
\begin{table}[h!]
\centering
\begin{tabular}{c|cccc}
\hline
\textbf{Heads} & \textbf{T=8} & \textbf{T=16} & \textbf{T=32} & \textbf{T=64} \\
\hline
1  & $0.597 \pm 0.003$ & $0.450 \pm 0.005$ & $0.303 \pm 0.003$ & $0.154 \pm 0.002$ \\
2  & $0.771 \pm 0.003$ & $0.647 \pm 0.003$ & $0.486 \pm 0.002$ & $0.286 \pm 0.002$ \\
4  & $0.956 \pm 0.002$ & $0.900 \pm 0.002$ & $0.793 \pm 0.002$ & $0.580 \pm 0.004$ \\
6  & $0.992 \pm 0.000$ & $0.977 \pm 0.001$ & $0.937 \pm 0.001$ & $0.814 \pm 0.002$ \\
8  & $0.998 \pm 0.000$ & $0.995 \pm 0.000$ & $0.983 \pm 0.001$ & $0.932 \pm 0.002$ \\
12 & $1.000 \pm 0.000$ & $0.999 \pm 0.000$ & $0.998 \pm 0.000$ & $0.991 \pm 0.001$ \\
16 & $1.000 \pm 0.000$ & $1.000 \pm 0.000$ & $0.999 \pm 0.000$ & $0.996 \pm 0.000$ \\
\hline
\end{tabular}
\caption{Error bar for MS Marco dataset. Accuracy (Mean ± Standard Deviation) for different sequence lengths $T$ and number of heads.}
\label{table:MS_Mean_var}
\end{table}

\begin{table}[h!]
\centering
\begin{tabular}{c|cccc}
\hline
\textbf{Heads} & \textbf{T=8} & \textbf{T=16} & \textbf{T=32} & \textbf{T=64} \\
\hline
1  & $0.5107 \pm 0.0069$ & $0.3917 \pm 0.0071$ & $0.2542 \pm 0.0049$ & $0.1257 \pm 0.0057$ \\
2  & $0.5221 \pm 0.0102$ & $0.4205 \pm 0.0056$ & $0.2712 \pm 0.0067$ & $0.1369 \pm 0.0038$ \\
4  & $0.5076 \pm 0.0112$ & $0.4048 \pm 0.0070$ & $0.2547 \pm 0.0093$ & $0.1139 \pm 0.0061$ \\
6  & $0.5153 \pm 0.0112$ & $0.3865 \pm 0.0103$ & $0.2397 \pm 0.0098$ & $0.1018 \pm 0.0057$ \\
8  & $0.5058 \pm 0.0082$ & $0.3801 \pm 0.0084$ & $0.2308 \pm 0.0068$ & $0.0983 \pm 0.0050$ \\
12 & $0.5184 \pm 0.0073$ & $0.3721 \pm 0.0107$ & $0.2219 \pm 0.0091$ & $0.0902 \pm 0.0054$ \\
16 & $0.5021 \pm 0.0123$ & $0.2816 \pm 0.0418$ & $0.2170 \pm 0.0075$ & $0.0878 \pm 0.0058$ \\
\hline
\end{tabular}
\caption{MS Marco Validation Accuracy (Mean ± Standard Deviation) for different sequence lengths $T$ and number of heads.}\label{tab:MS_val_mean_var}
\end{table}

\begin{figure*}[h]
    \centering

        \includegraphics[width=0.6\linewidth]{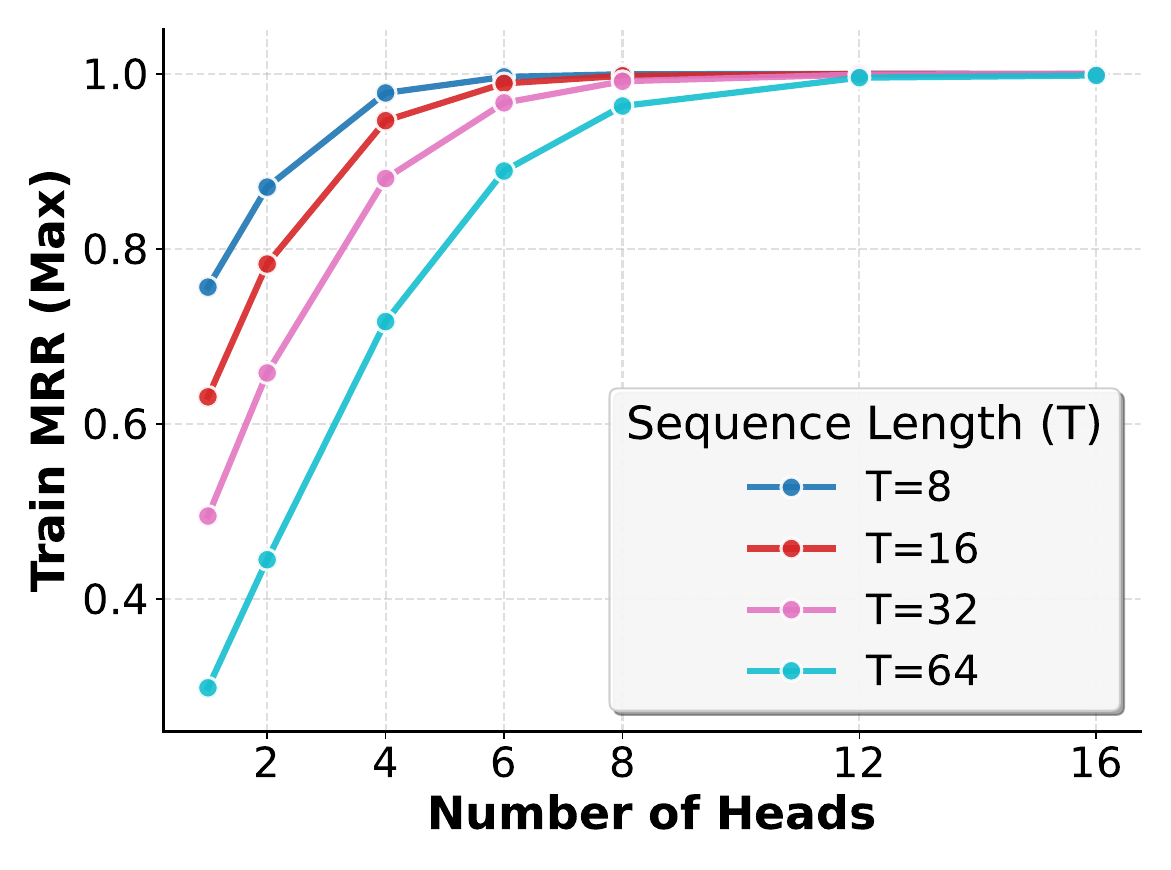}
        \caption{ Plot of training mrr for MS MARCO dataset.}
         \label{fig:ms mrr}

\end{figure*}

\subsection{CIFAR-10 Image Classification}\label{app:exp-cifar}

\subsubsection{Dataset Construction}
We create image classification datasets from the CIFAR-10 dataset using a padded preprocessing approach. The original CIFAR-10 images have dimensions of $32 \times 32$ pixels. To generate datasets with larger image sizes, we apply padding to achieve sizes in the set $\{32, 48, 64, 96, 128\}$. The original image is randomly positioned within the enlarged frame, with padding filled using the colors of the border pixels. An illustration is provided in Figure \ref{fig:cifar padded}. 
By apply this padding method we are creating tasks with increasing difficulty. The background is enlarged, making models need more effort to learn how to extract useful information. The random placement make sure the padded outside aera cannot be simply ignored by position encodings.

Each image is divided into non-overlapping patches of size $8 \times 8$ pixels, resulting in a sequence of patches for each image. For each image size, the sequence length $T$ is defined as the total number of patches plus one additional class token, with $T = \{17, 37, 65, 145, 257\}$. We adopt the standard CIFAR-10 data splits, which include $50,000$ training images and $10,000$ test images across $10$ classes.

\subsubsection{Model Training Setup}
We evaluate a Vision Transformer (ViT) with four layers and a per-head embedding dimension of $16$, while varying the number of attention heads in different configurations. Each image patch is embedded through a linear projection, and positional embeddings are added along with a learnable class token. No global convolutional embedding is used.

Input processing follows the standard ViT procedure, including patch embedding of size $8 \times 8$, positional encoding, and aggregation of the class token for final classification. The model is trained using the AdamW optimizer with cosine annealing learning rate scheduling. Standard architectural techniques, such as layer normalization, residual connections, and dropout, are employed for regularization.

\begin{figure*}[h]
    \centering

        \includegraphics[width=\linewidth]{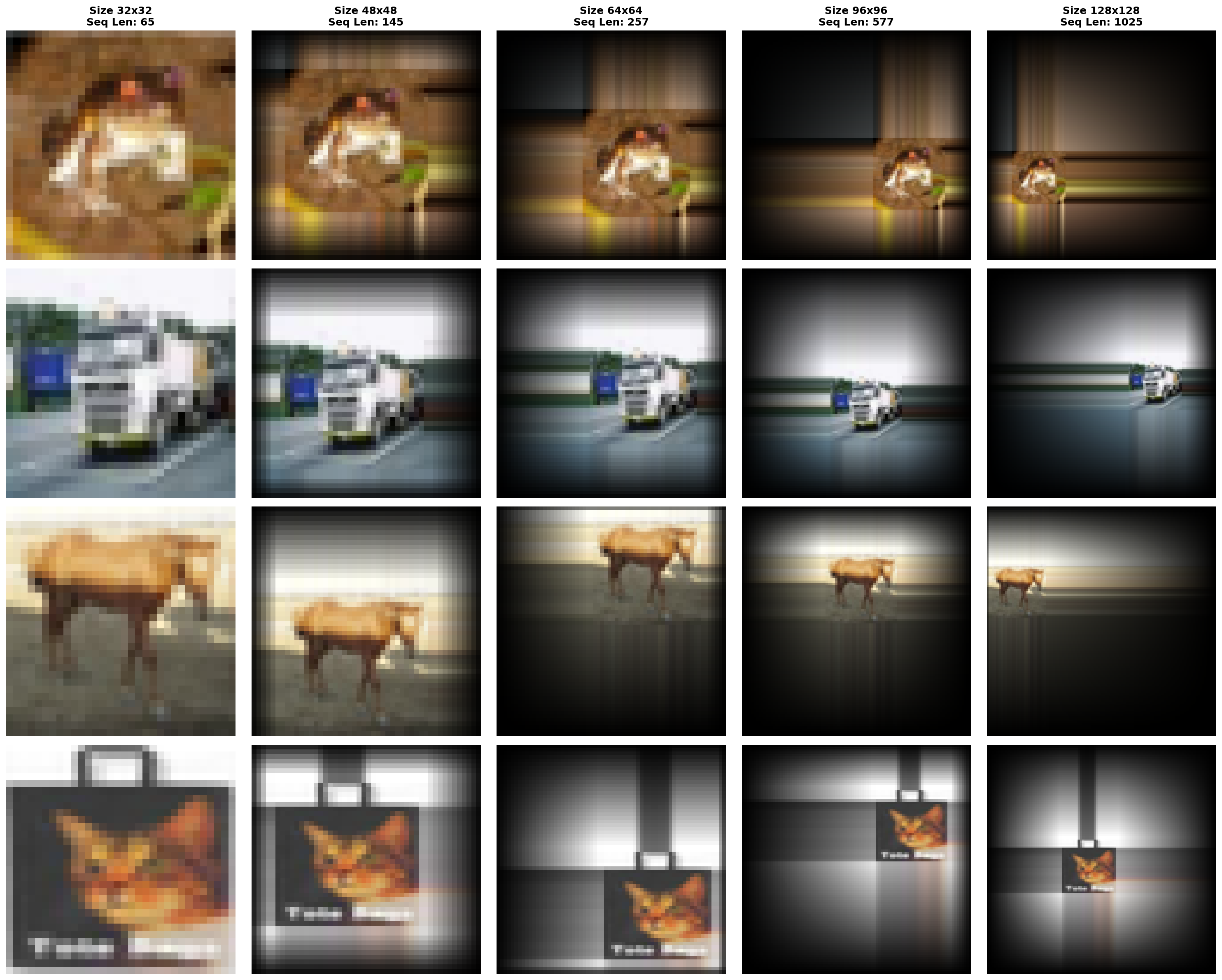}
        \caption{ Examples of the padded images from the dataset.}
         \label{fig:cifar padded}

\end{figure*}

\begin{table}[H]
\centering
\resizebox{\textwidth}{!}{%
\begin{tabular}{c|ccccc}
\hline
\textbf{Heads}& \textbf{Seq=65} & \textbf{Seq=145} & \textbf{Seq=257} & \textbf{Seq=577} & \textbf{Seq=1025} \\
\hline
1 & $4.78\times10^{1} \pm 4.50\times10^{-1}$ & $4.37\times10^{1} \pm 4.70\times10^{-1}$ & $4.20\times10^{1} \pm 5.00\times10^{-1}$ & $4.08\times10^{1} \pm 6.50\times10^{-1}$ & $4.00\times10^{1} \pm 7.20\times10^{-1}$ \\
2 & $5.97\times10^{1} \pm 4.50\times10^{-1}$ & $5.52\times10^{1} \pm 3.80\times10^{-1}$ & $5.34\times10^{1} \pm 4.40\times10^{-1}$ & $5.15\times10^{1} \pm 4.70\times10^{-1}$ & $5.08\times10^{1} \pm 7.40\times10^{-1}$ \\
4 & $7.55\times10^{1} \pm 2.10\times10^{-1}$ & $7.03\times10^{1} \pm 3.20\times10^{-1}$ & $6.85\times10^{1} \pm 7.70\times10^{-1}$ & $6.62\times10^{1} \pm 6.00\times10^{-1}$ & $6.58\times10^{1} \pm 7.20\times10^{-1}$ \\
8 & $9.51\times10^{1} \pm 1.50\times10^{-1}$ & $9.26\times10^{1} \pm 4.70\times10^{-1}$ & $9.14\times10^{1} \pm 6.20\times10^{-1}$ & $9.07\times10^{1} \pm 1.02\times10^{0}$ & $9.02\times10^{1} \pm 1.00\times10^{0}$ \\
10 & $9.81\times10^{1} \pm 5.00\times10^{-2}$ & $9.73\times10^{1} \pm 2.40\times10^{-1}$ & $9.67\times10^{1} \pm 4.60\times10^{-1}$ & $9.65\times10^{1} \pm 3.20\times10^{-1}$ & $9.67\times10^{1} \pm 7.30\times10^{-1}$ \\
11 & $9.88\times10^{1} \pm 1.20\times10^{-1}$ & $9.83\times10^{1} \pm 1.20\times10^{-1}$ & $9.81\times10^{1} \pm 2.40\times10^{-1}$ & $9.77\times10^{1} \pm 2.10\times10^{-1}$ & $9.79\times10^{1} \pm 2.20\times10^{-1}$ \\
12 & $9.92\times10^{1} \pm 3.00\times10^{-2}$ & $9.89\times10^{1} \pm 1.60\times10^{-1}$ & $9.86\times10^{1} \pm 2.40\times10^{-1}$ & $9.86\times10^{1} \pm 1.90\times10^{-1}$ & $9.86\times10^{1} \pm 2.90\times10^{-1}$ \\
13 & $9.94\times10^{1} \pm 6.00\times10^{-2}$ & $9.93\times10^{1} \pm 6.00\times10^{-2}$ & $9.91\times10^{1} \pm 1.10\times10^{-1}$ & $9.90\times10^{1} \pm 2.00\times10^{-1}$ & $9.91\times10^{1} \pm 2.50\times10^{-1}$ \\
14 & $9.96\times10^{1} \pm 3.00\times10^{-2}$ & $9.94\times10^{1} \pm 9.00\times10^{-2}$ & $9.93\times10^{1} \pm 1.00\times10^{-1}$ & $9.93\times10^{1} \pm 1.70\times10^{-1}$ & $9.93\times10^{1} \pm 2.30\times10^{-1}$ \\
16 & $9.97\times10^{1} \pm 3.00\times10^{-2}$ & $9.96\times10^{1} \pm 2.00\times10^{-2}$ & $9.95\times10^{1} \pm 7.00\times10^{-2}$ & $9.96\times10^{1} \pm 1.40\times10^{-1}$ & $9.96\times10^{1} \pm 1.70\times10^{-1}$ \\
20 & $9.98\times10^{1} \pm 1.00\times10^{-2}$ & $9.97\times10^{1} \pm 5.00\times10^{-2}$ & $9.97\times10^{1} \pm 8.00\times10^{-2}$ & $9.98\times10^{1} \pm 6.00\times10^{-2}$ & $9.99\times10^{1} \pm 7.00\times10^{-2}$ \\
24 & $9.99\times10^{1} \pm 2.00\times10^{-2}$ & $9.98\times10^{1} \pm 2.00\times10^{-2}$ & $9.98\times10^{1} \pm 4.00\times10^{-2}$ & $9.99\times10^{1} \pm 4.00\times10^{-2}$ & $9.99\times10^{1} \pm 5.00\times10^{-2}$ \\
\hline
\end{tabular}
}
\caption{Error bar for Image task. Accuracy (Mean ± Standard Deviation) for different sequence lengths and number of heads.}
\label{table: image}
\end{table}

\begin{table}[h!]
\centering
\begin{tabular}{c|ccccc}
\hline
\textbf{Heads} & \textbf{T=65} & \textbf{T=145} & \textbf{T=257} & \textbf{T=577} & \textbf{T=1025} \\
\hline
1 & $50.50 \pm 0.44$ & $45.85 \pm 0.58$ & $43.53 \pm 0.58$ & $41.43 \pm 0.97$ & $39.95 \pm 0.75$ \\
2 & $60.01 \pm 0.57$ & $55.01 \pm 0.25$ & $53.12 \pm 0.69$ & $49.95 \pm 0.80$ & $48.53 \pm 0.83$ \\
4 & $67.98 \pm 0.43$ & $63.49 \pm 0.70$ & $61.42 \pm 0.61$ & $57.19 \pm 0.54$ & $55.31 \pm 0.85$ \\
8 & $69.65 \pm 0.55$ & $66.06 \pm 0.53$ & $62.43 \pm 0.95$ & $57.58 \pm 0.59$ & $56.18 \pm 1.14$ \\
10 & $69.70 \pm 0.24$ & $65.64 \pm 0.37$ & $62.45 \pm 0.49$ & $57.21 \pm 1.03$ & $54.44 \pm 1.49$ \\
11 & $69.84 \pm 0.36$ & $65.26 \pm 0.43$ & $61.97 \pm 0.29$ & $56.95 \pm 0.63$ & $53.77 \pm 2.91$ \\
12 & $69.66 \pm 0.39$ & $65.79 \pm 0.56$ & $62.63 \pm 0.36$ & $56.57 \pm 0.66$ & $53.17 \pm 1.06$ \\
13 & $69.72 \pm 0.18$ & $65.30 \pm 0.59$ & $61.66 \pm 0.58$ & $54.81 \pm 2.23$ & $52.90 \pm 1.45$ \\
14 & $69.49 \pm 0.48$ & $65.25 \pm 0.48$ & $61.32 \pm 0.95$ & $54.01 \pm 2.13$ & $50.42 \pm 2.48$ \\
16 & $69.69 \pm 0.33$ & $64.24 \pm 0.33$ & $59.29 \pm 1.12$ & $49.68 \pm 1.39$ & $48.51 \pm 2.59$ \\
20 & $67.99 \pm 0.35$ & $61.49 \pm 1.07$ & $55.25 \pm 2.12$ & $48.65 \pm 0.85$ & $46.89 \pm 1.07$ \\
24 & $65.12 \pm 0.68$ & $56.80 \pm 1.09$ & $52.14 \pm 0.56$ & $48.74 \pm 0.60$ & $48.39 \pm 0.76$ \\
\hline
\end{tabular}
\caption{Error bar for Image task. Validation Accuracy (Mean ± Standard Deviation) for different sequence lengths and number of heads.}
\label{table:vit_val_accuracy}
\end{table}


\begin{table}[htbp] 
\centering
\caption{Hyperparameter settings of popular transformer models. Only d (embedding dimension), L (number of layers), and H (number of attention heads) are shown for brevity.}
\label{table: head count}
\begin{tabular}{|c|l|c|c|c|}
\hline
\textbf{H} & \textbf{Model} & \textbf{d} & \textbf{L} & \textbf{Year} \\
\hline
8 & Attention is all you need & 512 & 6 & 2017 \\
8 & Gemma 2B & 2,048 & 18 & 2024 \\
12 & GPT & 768 & 12 & 2018 \\
16 & BERT-Large & 1,024 & 24 & 2019 \\
16 & ViT-Huge & 1,280 & 32 & 2021 \\
16 & Gemma 7B & 3,072 & 28 & 2024 \\
28 & Turing-NLG & 4,256 & 78 & 2020 \\
32 & LLaMA-7B & 4,096 & 32 & 2023 \\
32 & Baichuan 2-7B & 4,096 & 32 & 2023 \\
32 & Mistral 7B & 4,096 & 32 & 2023 \\
32 & Yi-6B & 4,096 & 32 & 2023 \\
32 & LLaMA 3-8B & 4,096 & 32 & 2024 \\
32 & Mixtral 8x7B & 4,096 & 32 & 2024 \\
40 & LLaMA-13B & 5,120 & 40 & 2023 \\
40 & Baichuan 2-13B & 5,120 & 40 & 2023 \\
56 & Yi-34B & 7,168 & 60 & 2023 \\
64 & LLaMA-65B & 8,192 & 80 & 2023 \\
64 & Llama-2-70B & 8,192 & 80 & 2023 \\
64 & LLaMA 3-70B & 8,192 & 80 & 2024 \\
96 & GPT-3 & 12,288 & 96 & 2020 \\
96 & Jurassic-1 & 13,824 & 76 & 2021 \\
128 & MT-NLG & 20,480 & 105 & 2021 \\
128 & LaMDA & 8,192 & 64 & 2022 \\
128 & LLaMA 3.1-405B & 16,384 & 126 & 2024 \\
128 & DeepSeek-V2 & 5,120 & 60 & 2024 \\
\hline
\end{tabular}
\end{table}

\section{Large Language Model Usage} 
Large language models were used only for linguistic refinement (e.g., polishing sentences and checking grammar). 
The core ideas, theoretical results, experimental design, and analyses presented in this paper were entirely developed by the authors without assistance from large language models.

\end{document}